%% file: main.tex
\documentclass[10pt]{article} % For LaTeX2e
\input{math_commands.tex}

\usepackage[linesnumbered,ruled,vlined]{algorithm2e}
\usepackage{booktabs}
\usepackage{natbib}
\usepackage[svgnames,table]{xcolor}
\usepackage{bbm}
\usepackage{pifont}
\usepackage{multirow}
\usepackage{amsmath}
\usepackage{amsfonts}
\usepackage[toc]{appendix}
\usepackage{algpseudocode}
\usepackage{pifont}
\usepackage{float}
\usepackage{array}
\usepackage{geometry}
\usepackage{caption}
\usepackage{newfloat}
\usepackage{listings}

\definecolor{headercolor}{HTML}{ECF4FF}
\definecolor{BitterSweet}{rgb}{0.65, 0.16, 0.16}

\definecolor{Maroon}{rgb}{0.6, 0.1, 0.1}

\usepackage{hyperref}
\usepackage{url}
\usepackage{graphicx}

\newcommand{\assortativity}{\textsc{assortativity}}
\newcommand{\prestige}{\textsc{prestige}}
\newcommand{\closeness}{\textsc{closeness}}
\newcommand{\heterogeneity}{\textsc{heterogeneity}}
\newcommand{\density}{\textsc{density}}
\newcommand{\constraint}{\textsc{constraint}}
\newcommand{\degree}{\textsc{degree}}
\newcommand{\betweenness}{\textsc{betweenness}}
\newcommand{\amd}{\textsc{avg mixed dist}}
\newcommand{\isolation}{\textsc{isolation}}
\newcommand{\diameter}{\textsc{diameter}}
\newcommand{\control}{\textsc{control}}
\newcommand{\power}{\textsc{power exp}}
\newcommand{\information}{\textsc{info unfairness}}
\usepackage{amsthm}

\newtheorem{theorem}{Theorem}
\theoremstyle{definition}
\newtheorem{definition}[theorem]{Definition}
\newtheorem{proposition}[theorem]{Proposition}
\newtheorem{remark}[theorem]{Remark}
\title{Structural Bias Beyond Homophily: A Study of Fairness in Link Prediction}

\author{
  Lilian Marey \and Mathilde Perez \and Tiphaine Viard \and Charlotte Laclau \\[0.4em]
  \small LTCI, Télécom Paris, Institut Polytechnique de Paris, Palaiseau, France \\
  \small \texttt{\{lilian.marey, mathilde.perez, tiphaine.viard, charlotte.laclau\}@telecom-paris.fr}
}

\begin{document}

\maketitle

\begin{abstract}
Graph link prediction (LP) plays a critical role in socially impactful applications such as job recommendation and friendship formation, making fairness a critical concern in this task. While many fairness-aware methods manipulate graph structures to mitigate prediction disparities, the topological biases inherent to social graphs remain poorly understood and are consistently conflated with homophily alone. In this work, we study the relationship between structural biases and fairness outcomes in LP. To this end, we formalize a taxonomy of topological bias measures and introduce a graph generation method producing a diverse corpus of synthetic graphs with controlled structural properties. Using this corpus, we show empirically that fairness outcomes are strongly correlated with graph topology, and that current fairness-aware methods remain sensitive to structural biases beyond homophily. These findings highlight the need for structurally grounded evaluations in fair graph learning.
\end{abstract}

\section{Introduction}
Graphs are increasingly used in machine learning to model complex interactions between individuals, and the past years have seen the emergence of algorithmic fairness as a central concern across all areas of the field. In this context, researchers and practitioners aim to develop algorithms that do not disproportionately favor or disadvantage specific groups, typically characterized by sensitive attributes such as age, gender, or race.
In graph link prediction (LP), fairness takes a particular form. Unlike node classification, where sensitive attributes are defined at the node level, LP operates on pairs of nodes, and the sensitive attribute of an edge derives naturally from those of its endpoints, distinguishing links that connect nodes of the same group from those that connect nodes of different groups. This observation underlies the dyadic fairness paradigm~\cite{li2021dyadic}, which seeks to equalize the predicted likelihood of intra- and inter-group links. The connection to homophily~\cite{mcpherson2001birds} is then immediate: homophily precisely measures the tendency of intra-group links to form preferentially, and its well-documented role in producing segregated network structures, e.g.,  echo chambers, filter bubbles~\cite{pariser2011filter}, has made it the natural focus of structural bias research in LP.
That said, measuring unfairness through the lens of model outputs has progressively sidelined the study of structural biases inherent in the graphs themselves. Moreover, the limited research connecting graph topology and fairness remains overly focused on homophily. Other forms of structural bias, such as differences in centrality, neighborhood diversity, or information flow across groups, may play an equally critical role, yet remain largely unexamined. This problem is further compounded by the scarcity of publicly available real-world graphs, which obscures the diversity of structural configurations that fairness methods will encounter in practice. Without careful attention to these properties, fairness interventions risk targeting observable disparities rather than their structural origins, limiting their generalizability across network topologies.

In this work, we study the relationship between structural biases and fairness outcomes in LP. We formalize a taxonomy of topological bias measures that extends well beyond homophily, and introduce a parametrizable graph generation process that produces synthetic graphs with controlled structural properties, validated against real-world networks. 
Crucially, because we control the generative process, we are able to systematically explore a wide range of structural configurations. This allows us to go beyond purely correlational analyses and study how variations in these controlled parameters relate to changes in fairness outcomes across different bias regimes.
Using this approach, we show that fairness results are strongly determined by graph topology, and that current fairness-aware methods remain sensitive to structural biases beyond homophily — including when homophily itself is held constant.
The main contributions of this work are:
\begin{enumerate}
    \item A formalization and categorization of structural biases in graphs, covering topological and flow-based measures beyond homophily (Section~\ref{sec:bias}).
    \item A parametrizable graph generation process that synthesizes graphs exhibiting a wide range of structural biases across common link prediction settings, validated against real-world networks (Section~\ref{sec:bias}). We also release the code\footnote{\url{https://anonymous.4open.science/r/TMLR_submission-C55C}} and the generated graph dataset\footnote{\url{https://zenodo.org/records/20143150}} and introduce a new scientific collaboration graph with gender as a sensitive attribute.
    \item An empirical analysis showing that fairness outcomes in LP are strongly determined by graph topology, and that fairness-aware methods remain sensitive to structural biases beyond homophily even when homophily is held constant (Section~\ref{sec:results}).
\end{enumerate}

\section{Related Works}\label{sec:related}
\subsection{Linking Graph Topology and Fair Link Prediction}
Fairness in graph learning has been pursued through structural modifications of various kinds: altering the adjacency matrix~\cite{loveland2022fairedit, spinelli2021fairdrop}, adjusting random-walk transition probabilities~\cite{ijcai2019p456, khajehnejad2022crosswalk}, or modifying GNN aggregation mechanisms~\cite{dai2021say}. These approaches all operate, explicitly or implicitly, by reweighting edges according to the sensitive attributes of the nodes they connect.
The relationship between topological bias and predictor fairness has nonetheless received limited formal attention. Notable exceptions include works addressing homophily~\cite{laclau2021all, li2022fairlp}, frame structural bias as information access~\cite{jalali2020information, dong2022edits, arnaiz2023structural}, or consider neighborhood heterogeneity~\cite{chen2022graph}. Broader structural biases have long been discussed outside the learning framework~\cite{borgatti1998network}, and recent work quantitatively relates structural features to predictive performance, though without a fairness perspective~\cite{malitesta2023topology}.
Meanwhile, a parallel line of work has sought to refine the very notion of homophily. \cite{zheng2024} disentangle graph homophily into label, structural, and feature components, showing that conventional metrics fail to fully account for GNN behavior. \cite{zhang2025} identify social homophily as a root cause of unfairness in GNNs, while \cite{Wang_2022} study graph embedding under biased observations. These contributions confirm that homophily is a richer and more complex phenomenon than typically assumed, yet homophily remains the organizing concept through which structural bias is examined.

\subsection{Graph Generation}
Classical graph generation models have long provided foundational tools for replicating key network properties such as community structure, degree distribution, and scale-free topology~\cite{holland1983stochastic, newman2003structure, barabasi1999emergence}. These models were not designed with fairness considerations in mind and generally lack mechanisms to control parameters such as homophily or sensitive attribute imbalance.
Fairness-aware approaches have therefore emerged, often relying on graph cloning techniques~\cite{zhu2022survey, zheng2024fairgen} that replicate an existing graph. While effective for privacy-related purposes, these methods depend on a source graph and offer limited flexibility in generating networks with arbitrary or customizable structural properties. The Biased Preferential Attachment~\cite{stoica2018algorithmic} supports homophily tuning during graph growth, and the Contextual Stochastic Block Model~\cite{deshpande2018contextual} allows community-correlated node features, but both rely on a single attachment mechanism, limiting the structural diversity of the generated graphs. Learning-based generators such as GenCAT~\cite{maekawa2023gencat} synthesize graphs with controllable community structures and attribute distributions, but their lack of interpretable parameter control makes them less suited to the kind of controlled structural intervention our analysis requires.

\subsection{Evaluation in Fair Graph Learning}
Evaluation efforts in graph learning have primarily relied on a small number of real-world datasets~\cite{shchur2018pitfalls, li2023evaluating, delarue2024link}, which limits the ability to assess model behavior across diverse structural configurations. The use of synthetic data for evaluation was introduced for node classification by \cite{maekawa2022beyond}, who showed that controlled graph generation enables more reliable model comparisons. Their framework varies class imbalance, homophily, attribute distribution, and graph order, but does not capture the finer-grained structural biases that fairness-oriented methods specifically aim to address.

\subsection*{Position of This Work}
The works reviewed above share a common limitation: structural bias in graphs is treated primarily through the lens of homophily, whether in the design of fairness interventions, the construction of generative models, or the choice of evaluation datasets. Even recent efforts to refine the notion of homophily remain within this conceptual perimeter. In this work, we take a different perspective and ask whether the fairness of LP methods depends on structural properties that homophily alone cannot capture. To this end, we combine a broader taxonomy of structural bias measures with a generative process that allows a wide range of bias configurations. This approach enables us to move beyond correlational observations and assess the specific contribution of each structural bias to fairness outcomes.

\section{Structural Biases and Fairness in Link Prediction}\label{sec:bias}

We consider undirected graphs $\mathcal{G} = (\mathcal{V}, \mathcal{E})$ where $\mathcal{V}$ is the set of nodes and $\mathcal{E} \subset \mathcal{V} \times \mathcal{V}$ is the set of edges. Each node is assigned a sensitive attribute via a function $S: \mathcal{V} \rightarrow \{0, 1\}$, where $S(v)=0$ denotes a non-sensitive node and $S(v)=1$ denotes a sensitive node. We study how structural properties of $\mathcal{G}$ determine fairness outcomes in LP by combining a taxonomy of structural bias measures with a controlled graph generation process, enabling the exploration of a wide range and diversity of biases across specific use cases.

We first formalize the structural biases we consider, then describe the generative process and its validation against real-world networks, and finally present the experimental setup and the hypotheses we test.

\subsection{Structural Bias Measures}
\label{sec:measures}

To systematically characterize structural disparities between sensitive groups, we propose a unified taxonomy of bias measures (Table~\ref{tab:taxonomy}) that integrates classical graph metrics~\cite{borgatti1998network} with recent fairness-oriented ones. The taxonomy distinguishes node-level measures, which capture local structural disparities, from group-level measures, which reflect global patterns of separation and influence. Within each level, measures are further classified according to whether they capture topological or flow-based properties of the graph.
Formal definitions of each measure are provided in Appendix~\ref{app:bias_formula}.

\subsubsection{Node-level Measures}

Node-level measures reflect different dimensions of structural prominence and neighborhood composition. Centrality measures such as closeness, betweenness, and eigenvector centrality (prestige) quantify a node's influence within the network. Neighborhood-based metrics such as $k$-hop degrees, ego-network density and heterogeneity describe local connectivity and diversity. Each captures a distinct aspect of structural disadvantage: differences in degree translate into unequal chances of forming new links across groups~\cite{subramonian2023networked}, while heterogeneity reflects whether nodes connect primarily within or across group boundaries~\cite{masrour2020bursting}.

To quantify bias from these measures in a unified way, we define the normalized difference in expected values across sensitive groups.

\begin{definition}[Node-level Disparity]
Let $\mathcal{G}=(\mathcal{V},\mathcal{E})$ be a graph, $S \in \{0,1\}$ a sensitive attribute, and $M: \mathcal{V} \to \mathbb{R}$ a node-level measure. The bias associated with $M$ is defined as:
$$
\omega_{M}(\mathcal{G}) = \frac{\mathbb{E}[M(\mathcal{V})\,|\, S=0] - \mathbb{E}[M(\mathcal{V})\,|\, S=1]}{\mathbb{E}[M(\mathcal{V})]}.
$$
Positive values indicate a structural advantage for the non-sensitive group, while $\omega_{M}(\mathcal{G}) = 0$ corresponds to parity.
\end{definition}

Beyond static topology, we include flow-based measures such as effective resistance measures ($\isolation$, $\diameter$ and $\control$)~\cite{arnaiz2023structural}, which model nodes' information dissemination via random walks. 
Two groups may exhibit similar average degree yet differ substantially in effective resistance, reflecting asymmetries in accessibility or resilience in information flow that static measures cannot capture.

\subsubsection{Group-level Measures}

Group-level measures capture global structural disparities between sensitive groups. Assortativity quantifies homophily, that is, the tendency of nodes to connect within their own group. The average mixed distance reflects how closely nodes from different groups are connected, which constrains cross-group link formation. The power law exponent ratio compares degree distributions across groups, indicating whether one group disproportionately concentrates hub positions~\cite{barabasi1999emergence}. Information unfairness~\cite{jalali2020information} measures asymmetries in how information propagates across groups, capturing dynamic biases that go beyond static structure.

Together, these measures form a coherent framework that covers the principal dimensions along which graph topology can be structurally biased. Their diversity is precisely what motivates the controlled generation process described next: studying their individual contributions to fairness outcomes requires a setting where each can be varied independently.

\begin{table*}[h]
\centering
\caption{Unified taxonomy of structural bias relevant to fairness. Measures are classified by their scope (local vs. global) and whether they capture topological or flow-based aspects of the graph. Notably, measures introduced in recent fairness-oriented work are exclusively flow-based.
}
\label{tab:taxonomy}
\renewcommand{\arraystretch}{1.3}
\setlength{\tabcolsep}{8pt} 
\resizebox{\textwidth}{!}{
\begin{tabular}{p{4cm}p{6.6cm}p{3.5cm}l}
\hline
\cellcolor{headercolor}\textbf{Node-based measures} & \cellcolor{headercolor}\textbf{A node considered is favored if...} &\cellcolor{headercolor} \textbf{Scope} &\cellcolor{headercolor} \textbf{Type} \\
\hline
\closeness                 & It is close to all nodes          & Shortest paths   & Topology   \\
\betweenness                & Many shortest paths go through it & Shortest paths  & Topology    \\
\prestige                   & Its neighbors have high eigencentrality & Graph & Topology\\
\degree                    & It has many neighbors             & Neighbors   & Topology        \\
\constraint                 & Its neighbors have many neighbors & $2$-hop Neighbors   & Topology\\
\density                  & Its neighbors are not clustered   & Neighbors        & Topology   \\
\heterogeneity            & Its neighbors are diverse w.r.t sensitive attr.        & Neighbors         & Topology  \\
\textsc{Effective Resistance} & It has strong information flow & Random walks & Flow       \\
\hline
\cellcolor{headercolor}\textbf{Group measures}   & \cellcolor{headercolor}\textbf{Groups are evenly favored if...} &\cellcolor{headercolor} \textbf{Scope} &\cellcolor{headercolor} \textbf{Type}\\
\hline
\textsc{assortativity}              & They are interconnected a lot     & All graph edges   & Topology   \\
\amd     & They are close to each other (average) & Shortest paths & Topology  \\
\power        & They have same degree distribution & All graph edges   & Topology  \\
\information & They have same flow distribution & Random walks  & Flow         \\
\hline
\end{tabular}
}
\end{table*}

\subsection{Controlled Graph Generation for Structural Intervention}
\label{sec:generation}

To study the effect of structural biases on fairness outcomes, we extend the Barabási–Albert (BA) model~\cite{barabasi1999emergence} to produce graphs whose structural properties can be controlled and varied systematically. The standard BA model incrementally adds nodes that attach preferentially to high-degree nodes, reproducing scale-free degree distributions, but provides no mechanism to incorporate sensitive attributes, control homophily, or modulate community structure and degree heterogeneity. 

We address these limitations through four modular extensions, described below and summarized in Algorithm~\ref{alg:ExtendedBA}.

\subsubsection*{Sensitive Attribute Assignment}

A parameter $\alpha \in (0,1)$ controls the fraction of non-sensitive nodes, assigning sensitive attributes independently of node arrival order. This ensures that group imbalance is a free parameter, decoupled from the growth dynamics of the graph.

\subsubsection*{Homophily Control}

Attachment probabilities are adjusted to favor connections between nodes sharing the same sensitive attribute, governed by a parameter $\beta \geq 0$. The probability that a new node $v_{\text{new}}$ attaches to an existing node $v$ is given by:
\begin{equation}
P_{\beta, S}(v \mid S(v_{\text{new}})) \sim \deg_{\mathcal{G}}(v) + \mathbf{1}\{S(v) = S(v_{\text{new}})\} \cdot (e^{\beta} - 1),
\label{eq:homophily}
\end{equation}
where $\beta = 0$ recovers standard preferential attachment and increasing $\beta$ progressively favors intra-group connections.

\begin{proposition}
For any fixed $\alpha$ and graph order $n$, the expected assortativity $\mathbb{E}[\omega_{\textsc{Assort}}(\mathcal{G})]$ is monotonically increasing in $\beta$.
\end{proposition}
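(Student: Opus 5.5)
We prove the statement by a coupling argument on the growth process, which avoids computing the expected assortativity in closed form. Throughout, $n$ and $\alpha$ are fixed, so the attribute assignment $S$ can be drawn first and held fixed. Conditional on $S$, the graph is built by a sequence of attachment steps. At each step a new node $v_{\text{new}}$ picks a target $v$ with probability proportional to $\deg_{\mathcal{G}}(v)+\mathbf{1}\{S(v)=S(v_{\text{new}})\}(e^\beta-1)$, as in Equation~\eqref{eq:homophily}. We split this choice into two stages. First the new node decides whether the target lies in its own group or in the other group. Then it picks a node inside the chosen group with probability proportional to degree, because the additive bonus $e^\beta-1$ is the same for every node in that group.

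The key quantity is the conditional probability of an intra-group choice:
\[
p_\beta=\frac{D_{\text{same}}+n_{\text{same}}(e^\beta-1)}{D_{\text{same}}+n_{\text{same}}(e^\beta-1)+D_{\text{other}}}.
\]
Here $D$ denotes total degree and $n$ the number of existing nodes in each group. For a fixed current graph, $p_\beta$ is strictly increasing in $\beta$. We then couple the processes for $\beta<\beta'$ step by step, using a shared uniform variable $U_t$ at each step. An intra-group edge is created in the $\beta$-process when $U_t<p_\beta$, and in the $\beta'$-process when $U_t<p_{\beta'}$.

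The main difficulty is that after the first disagreement the two graphs differ, so the degree totals $D_{\text{same}},D_{\text{other}}$ entering later steps also differ. A plain step-by-step coupling therefore does not directly show that the $\beta'$-graph has more intra-group edges. We would try the following, in order.

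\begin{enumerate}
\item Prove a monotonicity lemma: if the $\beta'$-graph dominates the $\beta$-graph in the sense that each group's share of intra-group degree is larger, then $p_{\beta'}$ computed on the $\beta'$-graph is at least $p_\beta$ computed on the $\beta$-graph. This would make the coupling preserve the dominance by induction over steps.
\item If that inductive ordering fails in general, fall back to a mean-field recursion. We would track $\mathbb{E}[D_{\text{same}}]$ per group and the expected count of intra-group edges $E_{\text{in}}$. We would then show by induction on $t$, differentiating with respect to $\beta$, that $\partial_\beta \mathbb{E}[E_{\text{in}}(t)]>0$, since every term of the recursion has a nonnegative derivative.
\end{enumerate}

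Finally we would pass from edge counts to assortativity. For a two-group attribute, the assortativity coefficient is $r=(e_{\text{in}}-a)/(1-a)$, where $e_{\text{in}}$ is the fraction of intra-group edges and $a=\sum_g a_g^2$ is the expected intra-group fraction under the group degree shares $a_g$. The total edge count $m(n)$ is fixed, so $e_{\text{in}}$ increases with $\beta$. The term $a$ also depends on $\beta$ through the degree shares, and it is not an expectation of a linear quantity, so monotonicity of $\mathbb{E}[e_{\text{in}}]$ alone does not settle $\mathbb{E}[r]$.

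We would handle this by showing that $\partial r/\partial e_{\text{in}}$ dominates the induced change in $a$. A cleaner route is to use the fact that the degree shares $a_g$ concentrate around values that are, to leading order, the group volumes, and then treat $a$ as approximately constant. This only establishes monotonicity up to concentration error terms in $n$; a full proof would still have to control the $\beta$-dependence of $a$.

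Reducing the claim to monotonicity of the intra-group edge fraction, and then closing this gap in the denominator, is the main obstacle. If the exact statement resists proof, we would prove it for the expected intra-group edge fraction, which is exactly monotone under the coupling, and state assortativity monotonicity in the large-$n$ limit.
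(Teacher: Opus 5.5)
Your proposal does not reach the proposition. The step from per-step monotonicity of the intra-group probability to monotonicity of the expected intra-group fraction is not established, the step to assortativity is explicitly left open, and both tools you offer for the first step fail as stated.

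First, the coupling lemma is false. For a group-$s$ newcomer the intra-group probability $\frac{D_s+n_s(e^\beta-1)}{2E+n_s(e^\beta-1)}$ is increasing in the volume $D_s$. Meanwhile $2E=D_0+D_1$ and the group sizes $n_s$ are the same in the two coupled processes. So any volume discrepancy between the coupled graphs penalises, in the $\beta'$-process, the newcomers of one of the two groups. Concretely, start from a common graph and let a group-$1$ newcomer create a $1$--$1$ edge in the $\beta'$-process where the $\beta$-process creates a $1$--$0$ edge. The $\beta'$-graph then has the larger intra-group degree share in both groups; for group $0$ it is $2E_{00}/D_0$ versus $2E_{00}/(D_0+1)$. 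So your dominance hypothesis holds. But its group-$0$ volume is $D_0$ versus $D_0+1$, so for the next group-$0$ newcomer, $p_{\beta'}$ on the $\beta'$-graph is smaller than $p_\beta$ on the $\beta$-graph once $\beta'-\beta$ is small enough. The ordering does not propagate, and your closing claim that the intra-group fraction is \emph{exactly monotone under the coupling} is unsupported.

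Second, the mean-field fallback is the better route. If each of the $m'_t$ targets is drawn with the step-$t$ probabilities, then conditionally on $S$ and the $m'_t$ all denominators are deterministic, and the recursion for $\mathbb{E}[D^t_s]$ is exactly linear. But its key claim, that every term has a nonnegative $\beta$-derivative, is false. The total volume does not depend on $\beta$, so $\partial_\beta\mathbb{E}[D^t_0]=-\partial_\beta\mathbb{E}[D^t_1]$. Hence the indirect effect of $\beta$ through the volumes lowers the intra-group probability of one group's newcomers. Typically, near $\beta=0$ and for $\alpha\neq\frac12$, the minority group's expected volume decreases in $\beta$. What is missing is an estimate showing that the direct effect of $\beta$ dominates this indirect volume effect.

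The paper uses the same skeleton but closes both open steps by simplification rather than by argument. Its Step 1 is your computation of $p_\beta$ on a fixed graph. Its Step 2 writes $\mathbb{E}[e_{00}+e_{11}]=\sum_t w_t\,p^t_{\text{intra}}(\beta)$ with $w_t=\mathbb{E}[m'_t]/\sum_{t'}\mathbb{E}[m'_{t'}]$. It differentiates each $p^t_{\text{intra}}$ only through its explicit $e^\beta$, i.e.\ it does not track the $\beta$-dependence of $D^t_s$ that you correctly flagged.

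Its Step 3 assumes that intra-group edges split according to $\alpha$, independently of $\beta$: $e_{00}=\alpha p$, $e_{11}=(1-\alpha)p$, $e_{01}=e_{10}=\frac{1-p}{2}$. This makes assortativity a function of $p$ alone. With the $\sum_{ij}e_{ij}^2$ form used there, its derivative is $\bigl[(1-\kappa)+(1-p)^2(\kappa+\frac12)\bigr]/\bigl(1-\kappa p^2-\frac{(1-p)^2}{2}\bigr)^2>0$, where $\kappa=\alpha^2+(1-\alpha)^2$.

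The same assumption also removes your obstacle that $a$ depends on $\beta$, without any concentration in $n$. With your standard definition it gives $a=\frac12+2(\alpha-\frac12)^2p^2$ and $\frac{dr}{dp}=\bigl[\frac12-2(\alpha-\frac12)^2p(2-p)\bigr]/(1-a)^2>0$ for $\alpha\in(0,1)$.

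Your caution is warranted. The paper also passes from monotonicity of $\mathbb{E}[p]$ to monotonicity of $\mathbb{E}[\mathrm{Assort}]$ without a stochastic-ordering or concentration argument. But to obtain the proposition you must either adopt these simplifications explicitly as modelling assumptions, as the paper does, or supply the dominance estimate above. As written, your plan ends with a weaker, partly asymptotic statement.
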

\begin{proof}[Sketch]
At each growth step $t$, the probability that a newly created edge is intra-group is strictly increasing in $\beta$, as shown in Appendix \ref{app:proof}. Since assortativity is an increasing function of the proportion of intra-group edges, the result follows by linearity of expectation over growth steps.
\end{proof}

This property is what makes the generative process suitable for structural intervention: because $\beta$ controls assortativity monotonically and $\alpha$ is set independently, fixing $\alpha$ and varying $\beta$ constitutes a direct intervention on assortativity, allowing for a robust analysis of its effect.

\subsubsection*{Community Structure via Anchor Nodes}

New nodes enter through a designated anchor node and preferentially connect within its local neighborhood, modeling community-driven attachment. A new node $v_{\text{new}}$ first connects to an anchor $u$, then forms additional edges with nodes in $u$'s ego network, with attachment probability $P_{\text{ego}}(\cdot \mid u)$ decreasing with hop distance. This reflects the social phenomenon whereby individuals join communities through a common acquaintance.

\subsubsection*{Adjusting Number of Connections}

Rather than attaching a fixed number of edges $m$ per new node, we sample the number of connections from a Gamma distribution:
$$
m' \sim \Gamma\!\left(\gamma, \frac{m}{\gamma}\right),
$$
where $m$ remains the expected number of edges and $\gamma$ controls the variance. Lower values of $\gamma$ produce greater variability in node connectivity, reproducing the skewed patterns observed in empirical graphs.
The Gamma distribution is a natural choice here, as it is defined on $\mathbb{R}_{>0}$ and provides a flexible two-parameter family. It interpolates between near-deterministic and highly dispersed regimes through $\gamma$ alone while keeping the mean fixed.

\begin{algorithm}[h]
\caption{Extended BA Graph Generation}
\label{alg:ExtendedBA}
\KwIn{
    Graph order $n$, 
    group imbalance $\alpha$, 
    homophily parameter $\beta$, 
    expected degree $m$, 
    \textit{anchor} flag,\\
    node degree probability $P_{m'}$, 
    ego network attachment $P_{\text{ego}}$
}
\KwOut{
    Generated graph $\mathcal{G}=(\mathcal{V},\mathcal{E})$ 
    and sensitive attributes $S$
}
\BlankLine
Initialize $\mathcal{V} \leftarrow \{1,\ldots,n\}$\;
$S \leftarrow \text{AssignSensitive}(\mathcal{V}, \alpha)$\;
Initialize $\mathcal{G} = (\mathcal{V}', \mathcal{E}) \leftarrow$ 
    star graph with $\mathcal{V}' = \{1, \ldots, m \}$\;
\BlankLine
\For{$v_{\text{new}} \leftarrow m+1$ \KwTo $n$}{
    $\mathcal{V}' \leftarrow \mathcal{V}' \cup \{v_{\text{new}}\}$\;
    Sample $m' \sim P_{m'}$ and set 
        $m' \leftarrow \min(m', v_{\text{new}}-1)$ \tcp*[h]{Defining $v_{\text{new}}$ degree $m'$}\;
    \BlankLine
    \eIf{anchor}{
        Sample anchor $u \sim P_{\beta, S}(\cdot \mid S(v_{\text{new}}))$ \tcp*[h]{Defining $v_{\text{new}}$ anchor node from its sens. attr.}\;
        $N(v_{\text{new}}) \leftarrow 
            \text{SampleDistinct}\!\left(m',\, 
            P_{\text{ego}}(\cdot \mid u)\right)$\tcp*[h]{Sampling $v_{\text{new}}$ neighbors from anchor ego-network}\;
        $N(v_{\text{new}}) \leftarrow N(v_{\text{new}}) \cup \{u\}$\;
    }{
        $N(v_{\text{new}}) \leftarrow 
            \text{SampleDistinct}\!\left(m',\, 
            P_{\beta, S}(\cdot \mid S(v_{\text{new}}))\right)$\tcp*[h]{Sampling $v_{\text{new}}$ neighbors from $\mathcal{V}$}\;
    }
    \BlankLine
    $\mathcal{E} \leftarrow \mathcal{E} \cup 
        \{(v_{\text{new}}, v) \mid v \in N(v_{\text{new}})\}$\;
    $\mathcal{G} \leftarrow (\mathcal{V}', \mathcal{E})$\;
}
\BlankLine
\Return{$(\mathcal{G}, S)$}\;
\end{algorithm}

\subsection{Validation Against Real-World Networks}
\label{sec:use cases}

\begin{figure}[h]
    \centering
    \includegraphics[width=\linewidth]{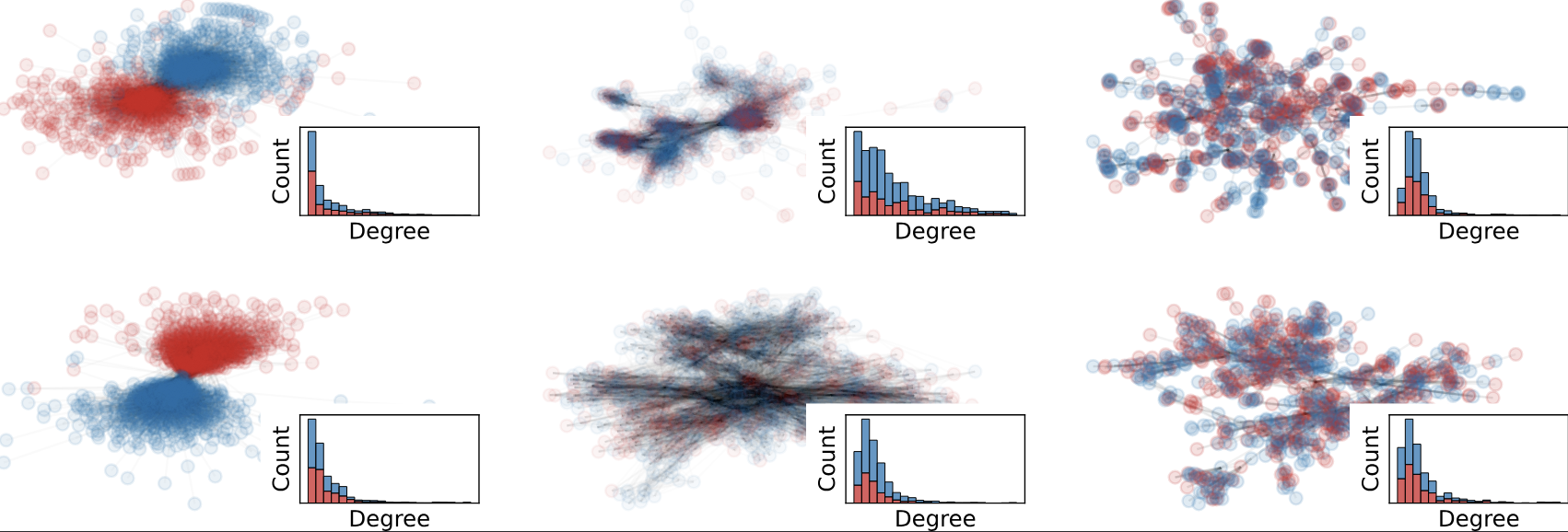}
    \caption{Comparison between real (top) and generated (bottom) graphs with degree distributions, for \textit{Opinion} (left), \textit{Friendship} (center), and \textit{Collab} (right) use cases. 
    Red and blue colors indicate respectively sensitive and non-sensitive nodes. All generation parameters are fitted on real datasets.
    }
    \label{fig:real graphs}
\end{figure}

\begin{table}[h]
\centering 
\caption{Generation configurations for each use case. 
Anchor weights indicate attachment probabilities 
for $1$-hop, $2$-hop, and $3+$-hop neighbors respectively. 
$\deg(u)$ refers to the degree of the anchor node.}
\label{tab:use case_summary}
\begin{tabular}{lccc}
\toprule
 & \textit{Opinion} & \textit{Friendship} & \textit{Collab} \\ 
\midrule
Source of homophily & Sampled neighbors & Anchor node & Anchor node \\
\midrule
Anchor weights & -- & $10^3 / 2 / 1$ & $1 / 0$ \\
\midrule
$m'$ & $\Gamma(m{=}14,\, \gamma{=}0.08)$ & $0.55\cdot\deg(u)+3$ & $\Gamma(m{=}3,\, \gamma{=}1)$ \\
\bottomrule
\end{tabular}
\end{table}

To ensure that the generated graphs reflect structural patterns encountered in practice, we calibrate the generative process against three real-world networks with distinct fairness contexts: Polblogs~\cite{adamic2005political} (\textit{Opinion}), Facebook ego-networks~\cite{leskovec2012learning} (\textit{Friendship}), and a co-authorship network (\textit{Collab}). These cases involve different sensitive attributes: political affiliation for \textit{Opinion}, gender for \textit{Friendship} and \textit{Collab}. Further details on each dataset are provided in Appendix~\ref{app:datasets}.

The parameters $n$, $m$, $\gamma$, and the weights assigned to $k$-hop neighbors are calibrated to match the structural properties of each real-world dataset, via a grid search over each parameter's range of plausible values.
Table~\ref{tab:use case_summary} summarizes the resulting configurations. Figure~\ref{fig:real graphs} compares degree distributions and attribute-aware structural patterns of generated and real graphs, showing close alignment. Further quantitative validation is provided in Appendix~\ref{app:generation_additional}.

Once the global generation parameters are fixed, $\alpha$ and $\beta$ can be varied freely to produce graphs spanning a wide range of structural bias configurations.
% The corpus used in our analysis comprises over 1,000 graphs per use case.

\subsection{Experimental Setup and Hypotheses}
\label{sec:setup}

We introduced a comprehensive taxonomy of structural graph biases, along with a synthetic graph generation framework that enables diverse variation of bias configurations. This setting enables the study of LP algorithms under a substantially broader spectrum of graph biases than commonly considered in prior work.

For each of the three real-world use cases, synthetic graphs are generated by varying the group imbalance parameter $\alpha$ and the homophily parameter $\beta$ over a $32 \times 32$ grid, resulting in a corpus of $1{,}024$ graphs per use case and per random seed.  

In the following, LP is formulated as a recommendation task in which each node is recommended $k=10$ potential new connections. Edges are randomly split into training (80\%) and test (20\%) sets. Models are trained on the training graph and evaluated on the test graph by ranking candidate edges according to their predicted existence probabilities. Reported results are averaged over five independent train/test splits.

The evaluated models are divided into two categories: classical LP methods and fairness-aware methods. The classical models consist of embedding-based approaches spanning the three main families of graph representation learning for link prediction: random-walk-based methods with Node2Vec~\cite{grover2016node2vec} (N2V), matrix factorization methods with Singular Value Decomposition (SVD)~\cite{halko2011finding}, and neural-network-based methods with Graph Convolutional Networks (GCN)~\cite{kipf2016semi}. 
% The GCN model is trained specifically for the link prediction task. 
The fairness-aware methods include DeBayes~\cite{maarten-debayes}, CrossWalk~\cite{khajehnejad2022crosswalk}, FairAdj~\cite{li2021dyadic}, FLIP~\cite{masrour2020bursting}, UGE~\cite{wang2022unbiased}, and MORAL~\cite{mattos2026breaking}.

Each graph is represented by the set of structural bias measures introduced in our taxonomy. Model performance is evaluated using HitRank and AUC, while fairness is assessed through Statistical Parity and Equal Opportunity~\cite{masrour2020bursting}.

This experimental protocol constitutes the empirical basis for testing the following hypotheses:

\textbf{H1}: Fairness outcomes of classical LP methods are strongly determined by the structural bias measures of the underlying graph.

% \textbf{H2}: Fairness-aware LP methods exhibit sensitivity to structural biases comparable to that of classical LP methods.

\textbf{H2}: Fairness LP models are only as effective as their ability to address the right source of topology bias.

\textbf{H3}: Controlling for assortativity, structural biases beyond homophily have no significant effect on the fairness outcomes of fairness-aware methods.

Hypothesis H3 is formulated as a null hypothesis. Our analysis aims to reject it by showing that structural biases beyond homophily exert an independent effect on fairness outcomes even when assortativity is held constant.

\section{Results}\label{sec:results}

The three hypotheses formulated in Section~\ref{sec:setup} are tested in turn. We first validate that the generated corpus covers a diverse range of structural configurations, then assess the relationship between structural bias and fairness outcomes for classical methods (H1), examine 
whether fairness-aware methods exhibit comparable sensitivity (H2), and finally test whether this sensitivity persists when assortativity is held constant (H3). All models achieve strong predictive performance across the corpus (see Appendix~\ref{app:pred_heatmaps}), ensuring that observed fairness differences are not attributable to weak predictive quality.

\subsection{Corpus Validation: Coverage of the Structural Bias Space}
\label{sec:pca}

Before testing the hypotheses, we verify that the generated corpus spans a sufficiently diverse range of structural configurations. Figure~\ref{fig:pca} shows a principal component analysis of the bias measures from our taxonomy, computed over the full corpus for each use case.
The corpus spans a wide range of biases, as evidenced by the dispersion of points and the fact that no principal axis (detailed in Appendix~\ref{app:generation_additional}) is dominated by a single bias measure.

\begin{figure}[h]
    \centering
    \includegraphics[width=\linewidth]{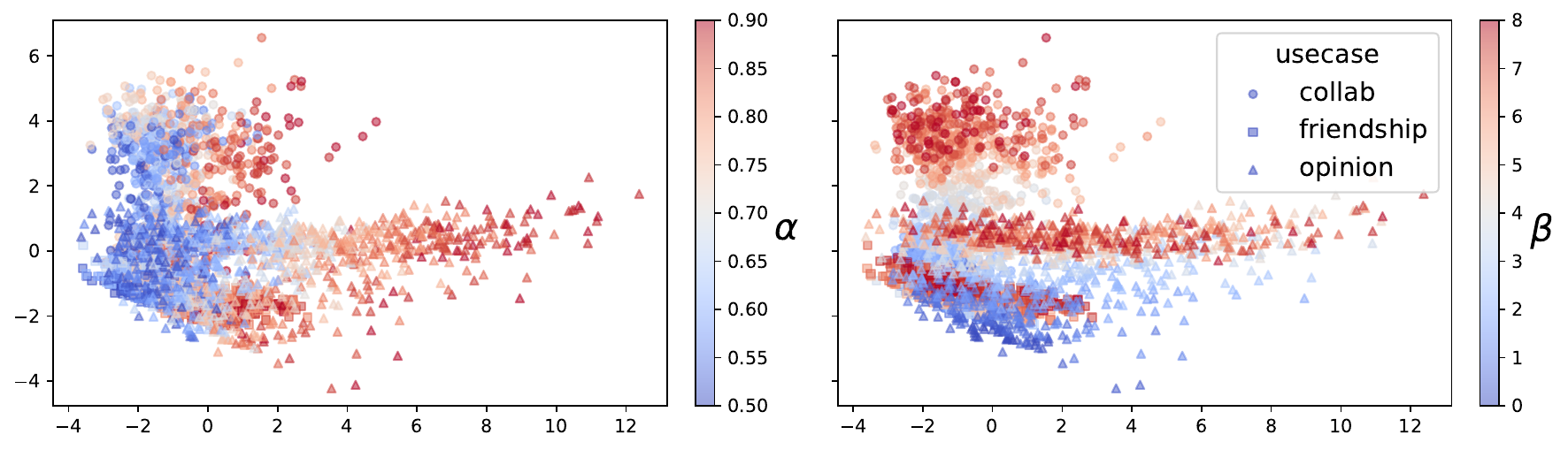}
    \caption{PCA of structural bias measures over the generated corpus, 
    for each use case. Left: colored by group imbalance $\alpha$. 
    Right: colored by homophily parameter $\beta$. Each point represents 
    a generated graph. Axis definitions are provided in Appendix~\ref{app:generation_additional}.}
    \label{fig:pca}
\end{figure}

\subsection{Testing H1: Structural Bias as a Determinant of Fairness Outcomes}
\label{sec:H1}

In this section, we focus on classical link prediction models. To assess the extent to which structural biases determine fairness outcomes, we train Random Forest regressors to predict fairness scores from the bias measures defined in our taxonomy. We report the resulting $R^2$ scores, as well as the proportion of $R^2$ reached by $\assortativity$ alone.

Figure~\ref{fig:results-r2} (left) shows that fairness outcomes are strongly determined by graph topology across all use cases and models, with $R^2$ values consistently above $0.90$. This indicates that the proposed bias taxonomy captures structural information that is genuinely predictive of model fairness, and that the fairness scores of standard models are largely predictable from graph-level structural features. Furthermore, $\assortativity$ alone accounts for a substantial share of the $R^2$ across all three use cases, lending empirical support to the central role this measure plays in driving unfairness in link prediction tasks.

We also observe that $R^2$ values are systematically lower for regressions targeting Equal Opportunity compared to Statistical Parity. This is consistent with the fact that EO is conditioned on the existence of edges, exposing more complex structural dependencies that are harder to capture through the proposed features alone.

\begin{figure*}[h]
\centering
\begin{minipage}{0.37\textwidth}
\centering
\small
\setlength{\tabcolsep}{4pt}
\resizebox{\columnwidth}{!}{%
\begin{tabular}{llcccc}
\toprule
& & \multicolumn{2}{c}{$R^2$} 
& \multicolumn{2}{c}{\textsc{Assort.} $\%R^2$} \\
\cmidrule(lr){3-4}\cmidrule(lr){5-6}
Use case & Model & SP & EO & SP & EO \\
\midrule
\multirow{3}{*}{\textit{Opinion}} 
& GCN & 1.0 & 0.91 & 97 & 92 \\
& N2V & 1.0 & 0.93 & 97 & 89 \\
& SVD & 0.99 & 0.91 & 93 & 95 \\
\midrule
\multirow{3}{*}{\textit{Friend.}} 
& GCN & 0.99 & 0.97 & 89 & 96 \\
& N2V & 0.99 & 0.97 &  88 &  95 \\
& SVD & 0.99 & 0.97 & 81 & 97 \\
\midrule
\multirow{3}{*}{\textit{Collab}} 
& GCN & 1.0 & 0.94 & 95 & 97 \\
& N2V & 1.0 & 0.96 & 95 & 97 \\
& SVD & 0.99 & 0.93 & 83 & 97 \\
\bottomrule
\end{tabular}
}
\end{minipage}
\hfill
\begin{minipage}{0.62\textwidth}
\centering
\includegraphics[width=\linewidth]{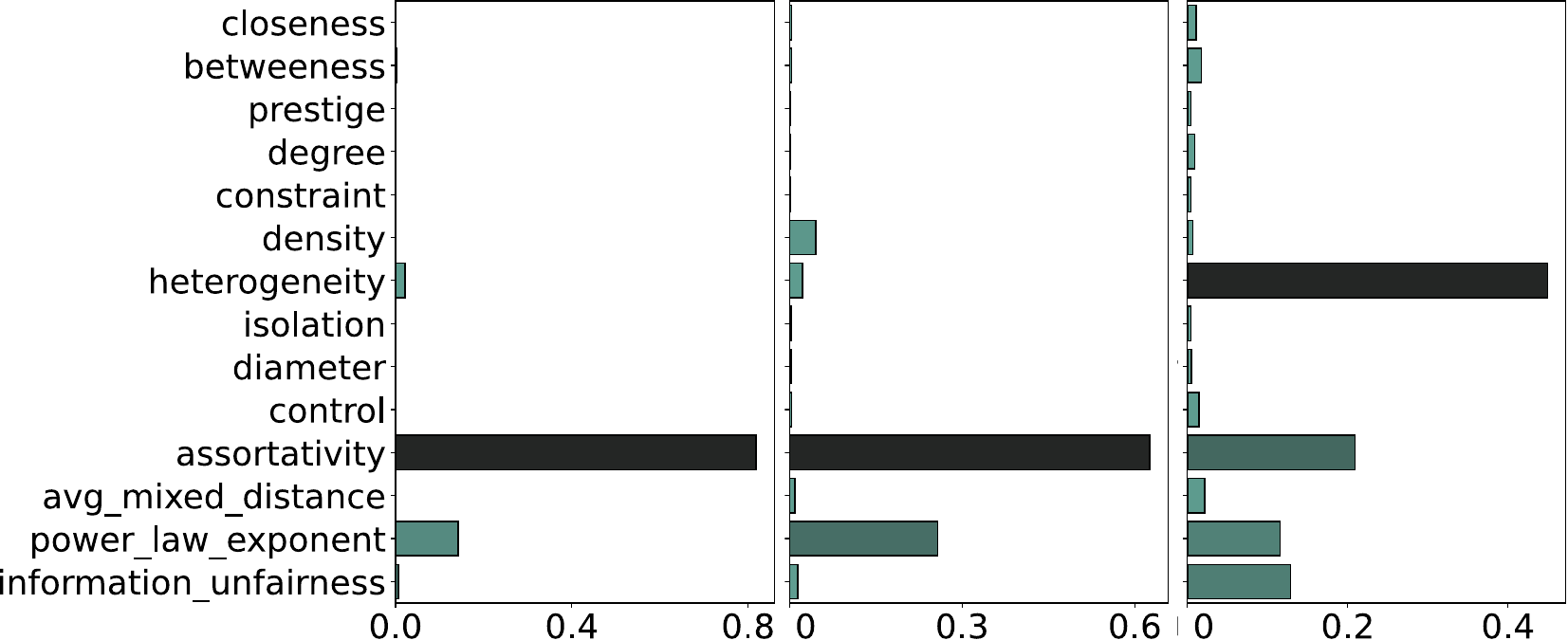}
\end{minipage}
\caption{Left:$R^2$ scores and proportion of $R^2$ reached by $\assortativity$ alone, for regression of structural bias measures on fairness metrics across all use cases. Right: Feature importance scores from the structural bias regression, for \textit{Opinion} SP with N2V (left), 
\textit{Opinion} SP with SVD (middle), and \textit{Friendship} 
SP with N2V (right).}\label{fig:results-r2}
\end{figure*}

In Figure~\ref{fig:results-r2} (right), we present examples of structural biases importance scores in computed regressions. 
In the \textit{Opinion} setting, $\assortativity$ dominates regardless of the embedding model, indicating that the relationship between structural bias and fairness is stable across methods when the topology is sufficiently simple. 
In \textit{Friendship}, the importance of heterogeneity grows substantially, reflecting the more complex interplay between local neighborhood structure and fairness outcomes in denser graphs. 
The full set of feature importance profiles across use cases and models (reported in Appendix~\ref{app:importance}) further supports the conclusions drawn from the $R^2$ analyses. While $\assortativity$ frequently emerges as the most influential bias measure, several settings exhibit different patterns. In some cases, another structural bias becomes predominant, such as $\heterogeneity$ for $SP$ regression with $GCN$ model in \textit{Friendship} use case. In others, importance is distributed across multiple bias measures, as observed for $EO$ regression with $N2V$ model in \textit{Opinion} use case.

These results support H1: fairness outcomes of classical LP methods are strongly determined by the structural bias measures of the underlying graph, with assortativity playing a central but not exclusive role.

% \subsection{Testing H2: Sensitivity of Fairness-Aware Methods to Structural Bias}
\subsection{Testing H2: Fairness-Aware LP Methods Under Structural Bias}
\label{sec:H2}
For H2 and H3, the graphs are generated on a $16\times 16$ grid because training all the fair LP models on the entire corpus would be too computationally expensive.

% We now examine whether fairness-aware methods reduce this sensitivity or merely redistribute it. 
The high structural dependence observed in H1 raises the question of whether fair models mitigate this dependence. Figure~\ref{fig:boxplots} displays the distribution of fairness and performance outcomes across the corpus generated for each use case (256 graphs) and each model. Across all models and use cases, fairness outcomes exhibit significantly higher variance than performance outcomes. This asymmetry implies that slight variations in graph topology, while having a more limited effect on accuracy, can lead to very different fairness outcomes.

\begin{figure}[ht]
    \centering
\includegraphics[width=0.85\textwidth]{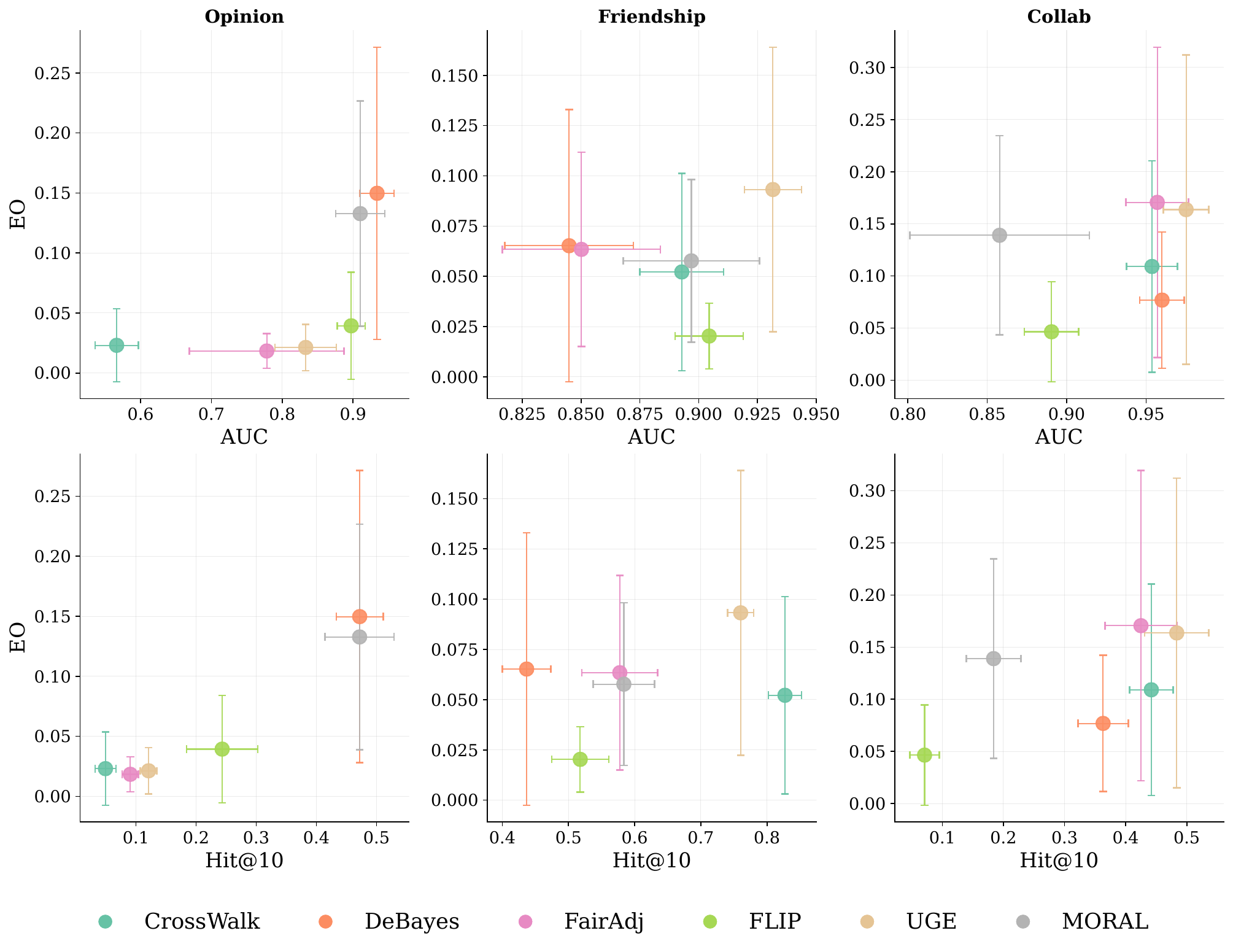}
    \caption{Distribution of fairness and performance outcomes across the synthetic graph corpus, for each fair LP model and use case. Each point represents the mean over all synthetic graphs; error bars show $\pm$ standard deviation.}
    \label{fig:boxplots}
\end{figure}

% In \textit{Collab}, the source of structural bias lies in the correlation between degree of connectivity and sensitivity: high degree nodes consistently belong to the majority group, causing any walk-based model to over recommend that group. 

In \textit{Opinion}, the quasi-bipartite node structure makes intergroup edges both structurally rare and essential for predicting cross-group links. Walk-based models get trapped within hubs and fail to classify inter-hub test edges. DeBayes, MORAL, and FLIP avoid this trap through distinct mechanisms: DeBayes reduces the weighting of hub nodes via its a priori degree correction, thereby allowing inter-hub candidates to emerge; MORAL dedicates a separate encoder to intergroup edges, explicitly learning the rare inter-hub signal; FLIP’s adversarial loss imposes community-invariant embeddings, forcing the generator to transcend hub boundaries. All three models achieve higher accuracy and lower SP (see Figure \ref{fig:boxplots_app} in Appendix \ref{app:H2}).

In \textit{Friendship}, the absence of a dominant source of structural bias (balanced degree distributions, intertwined communities, $\assortativity$ capped at 0.5) makes the trade-off between performance and fairness less pronounced, but not nonexistent. FLIP and CrossWalk achieve the best fairness outcomes, at the expense of AUC and Hit@10, respectively.

In \textit{Collab}, the primary source of structural bias lies in the maximum average distance between groups, which structurally limits the flow of information between them, leading walk-based models to produce representations that fail to accurately capture inter-group relationships. DeBayes is the only model whose mechanism consists of inverting the prior distribution of degrees at the time of evaluation and neutralizing this confounding factor without altering the graph structure, resulting in a true improvement over EO and SP. FairAdj operates on a reweighted adjacency matrix that modifies the original graph structure, while MORAL fails to leverage the full graph by restricting each model to a subset of edges, both resulting in an unnecessary performance cost.

These results support H2: the key factor determining the effectiveness of a fairness-aware LP model is not the strength of the intervention designed to ensure equity in itself, but its ability to address the actual source of bias in the graph.

\subsection{Testing H3: Beyond-Homophily Biases at Fixed Assortativity}
\label{sec:H3}

\begin{figure}[t]
    \centering    \includegraphics[width=0.8\textwidth]{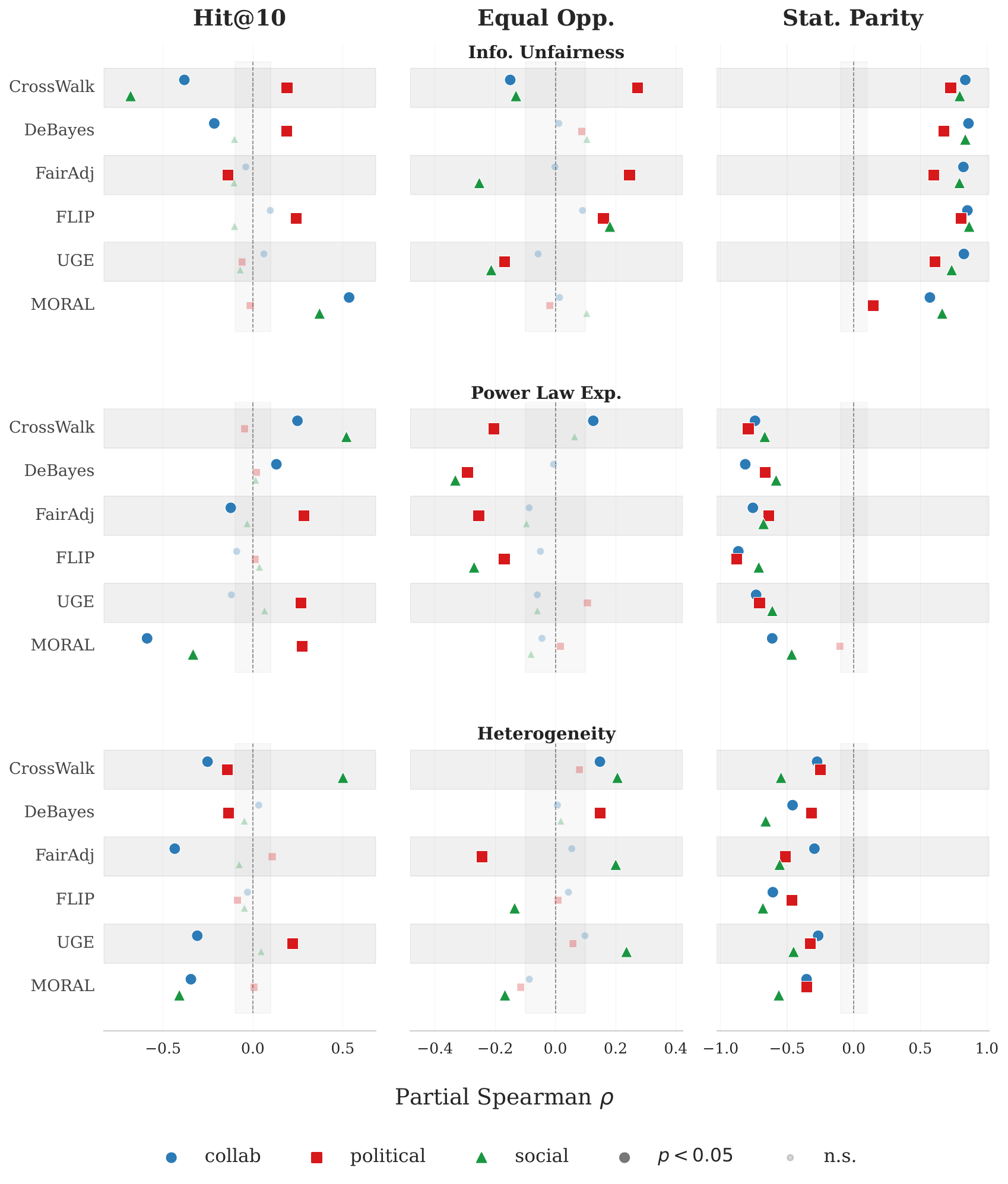}
    \caption{Partial Spearman $\rho$ between the top-3 structural biases and fairness/performance outcomes across models and use cases. Marker opacity encodes statistical significance ($p<0.05$).}
    \label{fig:spearman}
\end{figure}
The results of the previous sections show that assortativity plays a central role in determining fairness outcomes, but leave open the question of whether other structural biases have an independent effect once homophily is accounted for. H3 posits that they do not. 

To test this, we compute the partial Spearman rank correlation to isolate the influence of structural biases from the effects of $\assortativity$. First, the structural biases, the model outcomes, and the $\assortativity$ are transformed into ranks to handle potential non-linearities while maintaining monotonicity. We then perform an OLS regression on these ranks to extract the residuals, the variance in the bias and the model outcomes that remain unexplained by $\assortativity$. The partial correlation coefficient, $\rho$, measures the specific association between structural biases and the model's output after controlling for the confounding influence of homophily. To account for the grouped structure of our dataset, where multiple models are evaluated on the same graph instance, we employ a block-permutation test. By permuting graph labels rather than individual observations, we treat each graph as a single unit, thereby preserving the intra-graph dependencies and the underlying data topology. This procedure allows us to construct a robust empirical null distribution for our partial Spearman coefficients, ensuring valid p-value inference despite the non-i.i.d. nature of the samples.

The three structural biases selected beyond $\assortativity$ capture complementary dimensions of structural inequality:  the concentration of high-degree nodes within one group ($\power$), the diversity of individual node neighborhoods across group boundaries ($\heterogeneity$), and the asymmetry in cross-group information flow ($\information$). The other biases are in Appendix \ref{app:H3}.

As shown in Figure \ref{fig:spearman}, the three structural biases have varying effects across metrics, models, and use cases once the assortativity has been controlled. The SP consistently yields the highest and most significant Spearman partial correlations for all three types of bias and all three datasets. This indicates that graphs exhibiting greater structural inequality consistently show poorer statistical parity, regardless of the method used to predict the links. In contrast, EO exhibits correlations of varying signs : the direction and magnitude of $\rho$ vary considerably depending on both the models and the usecase, with no structural bias producing a stable and uniform effect. This inconsistency highlights the fact that EO results from the interaction between the graph structure and the inductive biases of each model, such that a model that performs well in one use case may fail in another with different structural properties, underscoring the need to evaluate equality of opportunity by taking into account both the model and the topology.

These results reject H3: even when $\assortativity$ is constant, other structural biases exert a statistically significant effect on fairness outcomes, demonstrating that homophily alone is insufficient to characterize the structural conditions under which fairness methods operate.

\section{Conclusion}\label{sec:conclusion}

This work set out to examine whether the fairness of link prediction methods depends on structural properties of the underlying graph that go beyond homophily. Fairness outcomes are unambiguously and strongly determined by graph topology across all use cases and models considered, and this dependence persists even when homophily is held constant: bias measures such as heterogeneity, information unfairness, and power law exponent ratio exert an independent and statistically significant effect on the fairness of both classical and fairness-aware methods. None of the evaluated approaches achieves robustness to this broader set of structural biases.

These findings rest on two methodological contributions. The first is a taxonomy of structural bias measures that covers topological and flow-based properties at both node and group levels, providing a more complete characterization of the structural conditions under which fairness methods operate. 
The second is an iterative and modular graph generation framework that reproduces several classical link prediction use cases while enabling controlled assortativity and the exploration of a broad range of structural bias configurations.
The empirical analysis draws its strength from this second contribution: rather than relying solely on correlational observations over a fixed collection of graphs, our approach systematically explores controlled variations of assortativity and structural configurations across a corpus of more than one thousand graphs per use case, validated against real-world networks.

This study has several limitations. The generative process is currently limited to binary sensitive attributes and to undirected graphs, which restricts its applicability to settings involving more complex demographic structures or directed social 
networks. Finally, while the controlled generation process goes beyond purely correlational analysis, a fully formal causal treatment of the relationships studied here would require additional assumptions and methodology that fall outside the scope of this work.

Future work could extend the taxonomy and the generative process to multi-valued and multivariate sensitive attributes, enabling the study of intersectional bias in graph structures. A natural next step would also be to use the generated corpus as a tool for the design of fairness-aware methods, rather than solely for their evaluation: a method trained or regularized on structurally diverse graphs might exhibit the kind of robustness that current approaches lack. More broadly, the perspective developed here, namely that structural bias is a multidimensional phenomenon that cannot be reduced to homophily alone, applies equally to other graph learning tasks such as node classification or graph-level prediction, and we hope this work encourages its adoption beyond the specific setting of link prediction.

\subsubsection*{Broader Impact Statement}

This work is primarily a diagnostic contribution: it identifies structural conditions under which fairness interventions in link prediction succeed or fail, and provides tools to study them systematically. We do not deploy predictive systems nor introduce new methods that could directly cause harm.

Our findings highlight that current fairness-aware methods remain sensitive to structural biases beyond homophily. This result should be interpreted as a call for more careful evaluation practices in consequential applications such as job recommendation or friendship suggestion, where overlooking structural bias may lead to fairness guarantees that do not generalize across diverse network topologies.

\bibliography{main}

\appendix

\section*{Computing Infrastructure}

All experiments were run on a MacBook with an Apple M3 Pro chip, 11-core CPU, and 18GB of unified memory, with the exception of GCN-based experiments and fairness-aware baselines requiring GPU acceleration, which were executed on an NVIDIA Tesla V100 GPU. The software environment was based on Python 3.11.6. Versions of all relevant packages are specified in the provided code.

\section{Formal Definitions of Structural Bias Measures}
\label{app:bias_formula}

We detail here the mathematical formalization of structural bias measures defined in Section~\ref{sec:measures}.

% In the following, we denote an undirected graph $\mathcal{G} = (\mathcal{V}, \mathcal{E})$ where $\mathcal{V}$ is the set of nodes and $\mathcal{V} \subset \mathcal{E} \times \mathcal{E}$ is the set of edges.
% We assume \( S: \mathcal{V} \rightarrow \{ 0, 1 \} \) assigns to each node a sensitive attribute value ($1$ for sensitive, $0$ for non-sensitive), and denote $\mathcal{V}_{0}$ and $\mathcal{V}_{1}$ the sets of non-sensitive and sensitive nodes, respectively.

In the following, we denote $\mathcal{V}_{0}$ and $\mathcal{V}_{1}$ the sets of non-sensitive and sensitive nodes, respectively.

More, for $v, v' \in \mathcal{V}$, and for $V \subset \mathcal{V}$:
\begin{itemize}
    \item $\sigma(v, v')$  is the set of shortest paths between $v$ and $v'$ and $\sigma_{v, v'}$ is their length.
    \item $\lambda$ is the largest eigenvalue of the adjacency matrix of $\mathcal{G}$ and $x_{max}$ the corresponding eigenvector. 
    \item $\mathcal{G}_{V}$ is the sub-graph of $\mathcal{G}$ induced by $V$ nodes.
    \item $N(v)$ is the set of $v$ neighbors.
    \item $d(\mathcal{G})$ is the classical graph density of $\mathcal{G}$.
\end{itemize}
\subsection{Node-based Measures}
In this section, we provide the formulas for node-based measures. 
Hence, all measures of bias take a node (denoted by $u$) as input. 

\begin{equation*}
    \text{\closeness}(u) = \frac{|\mathcal{V}| - 1}{\sum\limits_{v \in \mathcal{V}, v \neq u} \sigma_{u, v}}
\end{equation*}
\begin{equation*}
    \text{\betweenness}(u) = \sum\limits_{v, v' \in \mathcal{V}} \frac{|\{ p \in \sigma(v, v') | u \in p \}|}{|\sigma(v, v')|}
\end{equation*}
\begin{equation*}
    \text{\prestige}(u) = \frac{1}{\lambda} \sum\limits_{v \in N(u)} x_{max, v}
\end{equation*}

\begin{equation*}
  \degree(u) = |N(u)|
\end{equation*}
\begin{equation*}
  \constraint(u) = \sum_{v\in N(u)}\left|N(v)\right| 
\end{equation*}
\begin{equation*}
    \density(u) = 1 - d(\mathcal{G}_{N(u) \cup \{ u \}}) 
\end{equation*}
\begin{equation*}
    \heterogeneity(u) = 1 - 2 \left| \left( \frac{1}{\left|N(u)\right|} \sum_{v \in N(u)} S(v) \right) - \frac{1}{2} \right|
\end{equation*}

% \emph{Aggregation function}:
% \begin{equation*}
%     \omega_{m}(\mathcal{G}) = \frac{\frac{1}{|\mathcal{S}^-|}\sum_{v \in \mathcal{S}^-} m(v) - \frac{1}{|\mathcal{S}^+|}\sum_{v \in \mathcal{S}^+} m(v)}{\frac{1}{|V|}\sum_{v \in V} m(v)}  
%     \label{omega}
% \end{equation*}
% where $\mathcal{S}^+ = \{ v \in V \mid s(v) = 1 \}$ and $\mathcal{S}^- = \{ v \in V \mid s(v) = 0 \}$ the sets of sensitive and non-sensitive nodes, respectively and $m$ is one of the measures defined above.

\emph{Effective Resistance Measures \cite{arnaiz2023structural}}
The effective resistance is defined $\forall u, v \in \mathcal{V}$ as \( \mathcal{R}_{uv} = (e_u - e_v) L^{\dagger} (e_u - e_v)^{T} \), where $e_u$ is the unit vector with a 1 value at $u$-th index and zero elsewhere, and \( L^{\dagger} \) denotes the pseudo-inverse of the graph’s Laplacian. 
Then, the \textit{strength} of a node \( u \) can be computed with three different ways:
\begin{equation*}
    R_{tot}(u) = \sum\limits_{v \in V}\mathcal{R}_{uv}, 
\end{equation*}
\begin{equation*}    
R_{diam}(u) = \max\limits_{v \in V}\mathcal{R}_{uv},
\end{equation*}
\begin{equation*}    
B_{R}(u) = \sum\limits_{v \in N(u)}\mathcal{R}_{uv}. 
\end{equation*}

For aggregating these node-based measures, the authors use the following function:
\begin{equation*}
    \Delta_{R}(\mathcal{G}) = \Big| \frac{1}{|\mathcal{V}_0|}\sum_{u \in \mathcal{V}_0} R(u) 
    - \frac{1}{|\mathcal{V}_1|}\sum_{u \in \mathcal{V}_1} R(u) \Big|
    \label{delta}
\end{equation*}
with $R \in \{ R_{tot}, R_{diam}, B_{R}\}$, defining \isolation, $\diameter$ and \control, respectively.

\subsection{Graph/Group measures}
Here, we give the formula for group measures, which apply directly on $\mathcal{G}$. 
\begin{equation*}
    \text{\textsc{assortativity}}(\mathcal{G}) = \frac{Tr(\textbf{e}) - \sum_{i, j}\textbf{e}_{i, j}}{1 - \sum_{i, j}\textbf{e}_{i, j}},
\end{equation*}
where \( \textbf{e} \) is a matrix of size 2, with \( \textbf{e}_{0, 0} \) (resp. \( \textbf{e}_{1, 1} \)) representing the proportion of edges connecting two sensitive (respectively non-sensitive) nodes, and \( \textbf{e}_{1, 0} = \textbf{e}_{0, 1} \) representing each half of the proportion of edges connecting a sensitive and a non-sensitive node.
\begin{equation*}
    \text{\amd}(\mathcal{G}) = \frac{1}{|\mathcal{V}_{0}||\mathcal{V}_{1}|} \sum\limits_{v, v' \in \mathcal{V}_{0} \times \mathcal{V}_{1}} \sigma_{v, v'}
\end{equation*}
\begin{equation*}
\text{\power} = \frac{\kappa_1}{\kappa_0}.
\end{equation*}
where $\kappa_0$ and $\kappa_1$ denote the exponents of the power-law degree distributions of non-sensitive and sensitive nodes, respectively.

Then Information Unfairness \cite{jalali2020information} can be written as 
\begin{equation*}
\begin{aligned}
    \text{\information} = \text{max}&(\text{dist}(D_{00}, D_{01}), \\
        &\text{dist}(D_{00}, D_{11}), \\
        &\text{dist}(D_{11}, D_{01})) 
\end{aligned}
\end{equation*}
where $D_{s_1, s_2} =\{A_{uv} | S(u) = s_1, S(v) = s_2 , u \neq v \}$, with $A_{uv}$ being the probability of random walk information flowing from node $u$ to node $v$.

\section{Generation}
\label{app:generation}

\subsection{Proof of Proposition 1}\label{app:proof}

\begin{proof}
The proof consists of three steps: first, we show that the probability of forming an intra-group edge at each growth step is increasing in $\beta$; second, we generalise this result to the expected global proportion of intra-group edges; finally, we show that the assortativity is increasing in this proportion.

\paragraph{Step 1: monoticity of $p_{\text{intra}}^t(\beta)$.} At step $t$ of the growth process, a new node $v_{\text{new}}$ attaches to $m'$ existing nodes. Let $\mathcal{V}_s^t = \{v \in \mathcal{V}^t : S(v) = s\}$ denote the set of nodes of group $s$ present at step $t$. The probability that $v_{\text{new}}$ attaches to a node $v$ of the same group is proportional to: 
$$w_+(v, \beta) = \deg_t(v) + (e^\beta - 1),$$
and to a node of a different group proportional to:
$$w_-(v) = \deg_t(v).$$

The probability that an edge created at step $t$ is intra-group is therefore:
$$p_{\text{intra}}^t(\beta) = \frac{\sum_{v \in \mathcal{V}_{S(v_{\text{new}})}^t} 
w_+(v, \beta)}{\sum_{v \in \mathcal{V}_{S(v_{\text{new}})}^t} w_+(v, \beta) + 
\sum_{v \notin \mathcal{V}_{S(v_{\text{new}})}^t} w_-(v)}.$$

Let $D_s^t = \sum_{v \in \mathcal{V}_s^t} \deg_t(v)$ denote the sum of degrees of group $s$ at step $t$, and $n_s^t = |\mathcal{V}_s^t|$. Setting $s = S(v_{\text{new}})$, this rewrites as:
$$p_{\text{intra}}^t(\beta) = \frac{D_s^t + n_s^t(e^\beta - 1)}{D_s^t + D_{\bar{s}}^t + n_s^t(e^\beta - 1)}.$$

The derivative with respect to $\beta$ is:
$$\frac{\partial p_{\text{intra}}^t}{\partial \beta} = 
\frac{n_s^t e^\beta \cdot D_{\bar{s}}^t}
{\left(D_s^t + D_{\bar{s}}^t + n_s^t(e^\beta - 1)\right)^2},$$
where $\bar{s}$ denotes the opposite group. This quantity is strictly positive whenever $n_s^t > 0$ and $D_{\bar{s}}^t > 0$, conditions that hold for any non-trivial graph and any sufficiently large $t$. Hence $p_{\text{intra}}^t(\beta)$ is strictly increasing in $\beta$ at each growth step.

\paragraph{Step 2: $\mathbb{E}[e_{00} + e_{11}]$ is a weighted average of $p_{\text{intra}}^t(\beta)$.}
At each growth-step $t$, $m_t'$ edges are created, with probability $p_{\text{intra}}^t(\beta)$ of being intra-group. By definition, the total proportion of intra-group edges writes 

\begin{equation*}
    e_{00}+e_{11} = \frac{\sum_{t}\sum_{k=1}^{m'_t}\mathbbm{1}\{\text{edge $k$ at step $t$ is intra-group }\}}{\sum_t m'_t}
\end{equation*} 

By linearity of expectation and the independence between $m_t'$ and the edge indicator, we obtain 
\begin{equation}
    \mathbb{E}[e_{00}+e_{11}] = \sum_t w_t\times p_{\text{intra}}^t(\beta), \qquad \text{with } w_t=\frac{\mathbb{E}[m'_t]}{\sum_t \mathbb{E}[m'_t]} > 0, \quad \sum_t w_t = 1
\end{equation}
Since each $p^t_{\text{intra}}(\beta)$ is strictly increasing in $\beta$ by step 1, so is $\mathbb{E}[e_{00}+e_{11}]$.
\paragraph{Step 3: monoticity of \textsc{Assort}($\mathcal{G})$ in $p=e_{00}+e_{11}$.}
We start by recalling that assortativity is defined as:
$$\textsc{Assort}(\mathcal{G}) = \frac{\text{Tr}(e) -\sum_{ij} e_{ij}^2}{1 - \sum_{ij} e_{ij}^2},$$
where $e_{ij}$ denotes the proportion of edges connecting group $i$ to group $j$.
We assume that, conditionally on an edge being intra-group, the split between $e_{00}$ and $e_{11}$ is independent of $\beta$ and determined by $\alpha$ alone. 
$$
e_{00} = \alpha p, \quad e_{11}=(1-\alpha)p, \quad e_{01}=e_{10}=\frac{1-p}{2}.
$$
Under this assumption, setting $\kappa=\alpha^2+(1-\alpha)^2$, we obtain 
$$\sum_{i,j}e_{ij}^2=\kappa p^2+\frac{(1-p)^2}{2}.$$
Writing $\mathrm{Assort} = N/D$ with $N(p) = p - \kappa p^2 - \frac{(1-p)^2}{2}$ and $D(p) = 1 - \kappa p^2 - \frac{(1-p)^2}{2}$, and noting that $N = D - (1-p)$:
 
\begin{equation}
    \frac{d}{dp}\mathrm{Assort} = \frac{N'D - ND'}{D^2} = \frac{D + (1-p)D'}{D^2}
\end{equation}
 
Expanding explicitly:
 
\begin{align}
    D + (1-p)D' &= 1 - \kappa p^2 - \frac{(1-p)^2}{2} + (1-p)\bigl(1 - p(1+2\kappa)\bigr) \notag \\
    &= (1 - \kappa) + (1-p)^2\left(\kappa + \frac{1}{2}\right)
\end{align}
 
This quantity is strictly positive for all $p \in [0,1]$ and $\alpha \in (0,1)$. Therefore $\frac{d}{dp}\mathrm{Assort} > 0$ on $[0,1]$.

Since $\mathbb{E}[e_{00}+e_{11}]$ is strictly increasing in $\beta$ (Steps 1--2) and 
$\mathrm{Assort}$ is strictly increasing in $e_{00}+e_{11}$ (Step 3), 
$\mathbb{E}[\mathrm{Assort}(G)]$ is strictly increasing in $\beta$.

%This quantity is an increasing function of $e_{00} + e_{11}$, the total proportion of intra-group edges. Since $\mathbb{E}[e_{00} + e_{11}]$ is a weighted average of the $p_{\text{intra}}^t(\beta)$ over growth steps, and each such term is increasing in $\beta$, we conclude that $\mathbb{E}[\textsc{Assort}(\mathcal{G})]$ is monotonically increasing in $\beta$. 
\end{proof}

\begin{remark}
The argument above covers the case without anchor nodes. When the anchor mechanism is active, the homophilic bias operates through the selection of the anchor node $u$ itself via $P_{\beta, S}$, which preserves the monotonicity in $\beta$ by the same reasoning applied to the first attachment step.
\end{remark}

\subsection{Dataset Description}
\label{app:datasets}

We provide a more detailed description of the three scenarios to which we compare the generated graphs.

\paragraph{Polblogs (\textit{Opinion})} It consists of networks of opinion blogs, where the sensitive attribute is defined as the political affiliation of bloggers (left wing and right wing). LP here refers to blog recommendations to bloggers and raises challenges with fairness, such as filter bubbles constraining users within narrow opinion spectrum.

\paragraph{Facebook (\textit{Friendship})}
It is an ego network of friendship among Facebook members, with the sensitive attribute being the users' gender.
Common LP tasks in these types of networks include recommending friends or professional connections. 
The natural tendency for homophily in social relationships can exacerbate existing societal biases, leading to the amplification of discriminatory outcomes, such as the perpetuation of gender or racial biases in professional opportunities, or the segregation of communities within the network. 

\paragraph{Collab (\textit{Collab})} The network dataset represents scientific collaborations among researchers on the HAL platform, from 2017 to 2022, where edges are co-authorship in research papers and the sensitive attribute is the researcher's gender (name-based association, according to INSEE). 
 Note that sensitive attributes in these networks could also include factors such as nationality, or institutional affiliation. 
The LP task involves suggesting potential collaborators or recommending relevant content, which is susceptible to fairness concerns, as biases in collaboration suggestions can result in the marginalization of underrepresented groups, reinforcing existing disparities in access to resources and opportunities.  

\subsection{Additional results}
\label{app:generation_additional}

Figure~\ref{fig:beta_effect} illustrates the effect of the homophily parameter 
$\beta$ on assortativity for a balanced group configuration ($\alpha = 0.5$) across 
the three use cases. The resulting curves are monotonically increasing, confirming 
that $\beta$ exerts the intended directional control over assortativity. A 
saturation plateau is also observed beyond a certain threshold, beyond which 
assortativity reaches its maximum value, reflecting structural differences in the 
connectivity patterns induced by the three variants of the generative algorithm.

Table~\ref{tab:comparative_generation} compares real-world graphs with their 
synthetic counterparts obtained by fitting the generative parameters to each 
dataset. Since the number of nodes and class imbalance are directly encoded in the 
algorithm, exact replication of these quantities is guaranteed by construction. 
Assortativity values are closely matched across all use cases, demonstrating the 
relevance of the homophily parameter $\beta$ as a control variable. Degree-related 
measures --- average degree and density --- are also well reproduced in all three 
settings, reflecting the effectiveness of the Gamma-distributed degree 
parameterization in controlling graph connectivity.

\begin{figure}[t]
\centering

\begin{minipage}{0.52\linewidth}
    \centering
    \includegraphics[width=\linewidth]{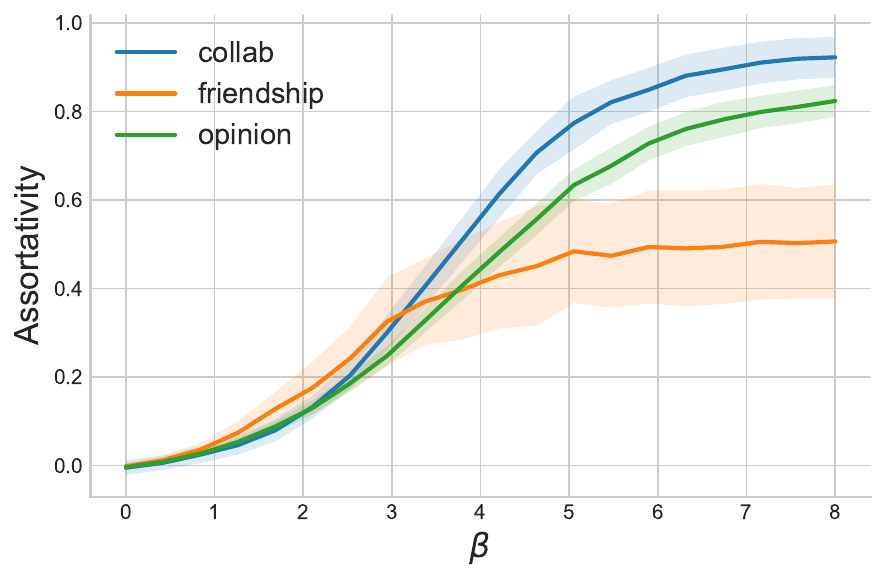}
    \caption{Effect of $\beta$ parameter on $\assortativity$ in the three generation processes. Here, $\alpha$ parameter has been set to $0.5$ and $100$ different seeds have been used to form the standard-deviation error bands.}
    \label{fig:beta_effect}
\end{minipage}
\hfill
\begin{minipage}{0.45\linewidth}
    \centering
    \label{tab:pca_components}
    \begin{tabular}{lrr}
    \toprule
     & PC1 & PC2 \\
    \midrule
    \closeness & 0.353 & -0.114 \\
    \betweenness & 0.105 & -0.166 \\
    \prestige & 0.315 & -0.087 \\
    \degree & 0.365 & -0.043 \\
    \constraint & 0.384 & -0.040 \\
    \density & 0.363 & 0.044 \\
    \heterogeneity & -0.288 & -0.223 \\
    \isolation & 0.234 & 0.374 \\
    \diameter & 0.209 & 0.408 \\
    \control & 0.103 & -0.301 \\
    \assortativity & 0.057 & 0.469 \\
    \amd & -0.102 & 0.476 \\
    \power & -0.238 & 0.082 \\
    \information & 0.300 & -0.216 \\
    \bottomrule
    \end{tabular}
    \captionof{table}{Loadings of the first two principal components for the PCA of structural bias measures over the generated corpus.}
\end{minipage}
\end{figure}

\begin{table}[t]
    \centering
    \begin{tabular}{lcccccc}
    \toprule
    & \multicolumn{2}{c}{\textit{Opinion}} 
    & \multicolumn{2}{c}{\textit{Friendship}} 
    & \multicolumn{2}{c}{\textit{Collab}} \\
    \cmidrule(lr){2-3}\cmidrule(lr){4-5}\cmidrule(lr){6-7}
    & Real & Synth. & Real & Synth. & Real & Synth. \\
    \midrule
    Nodes           & 1222 & 1222 & 1034 & 1034 & 860  & 860  \\
    Class imbalance & 0.52 & 0.52 & 0.66 & 0.66 & 0.53 & 0.53 \\
    \textsc{Assort.}& 0.81 & 0.82 & 0.06 & 0.06 & 0.10 & 0.08 \\
    Mean degree     & 27.4 & 28.6 & 52   & 33   & 6.1  & 6.5  \\
    $\mathcal{V}_0$ degree & 27.6 & 29.3 & 53 & 33 & 6.5 & 6.8 \\
    $\mathcal{V}_1$ degree & 27.1 & 27.8 & 51 & 33 & 5.7 & 6.2 \\
    Graph Density   & 0.02 & 0.02 & 0.05 & 0.03 & 0.007 & 0.008 \\
    \bottomrule
    \end{tabular}
    \caption{Structural properties of real and fitted synthetic graphs 
    across the three use cases.}
    \label{tab:comparative_generation}
\end{table}

\section{Structural Bias Heatmaps}
\label{app:heatmaps}

Next, we present the evolution of the different structural bias measures across sensitive class imbalance $\alpha$ and homophily parameter $\beta$, in the three considered use cases. 
We observe that the different biases evolve according to distinct patterns, 
for example being more sensitive to $\beta$, like $\assortativity$, or to 
$\alpha$, like $\heterogeneity$. 
Moreover, the variation profiles can differ greatly depending on the use case, as for $\betweenness$, confirming the diversity of the topologies involved in the three use cases.

\begin{figure}[H]
    \centering
    \includegraphics[width=.3\linewidth]{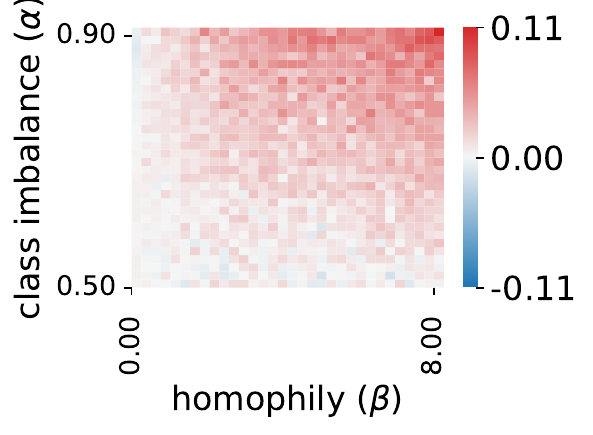}
    \includegraphics[width=.3\linewidth]{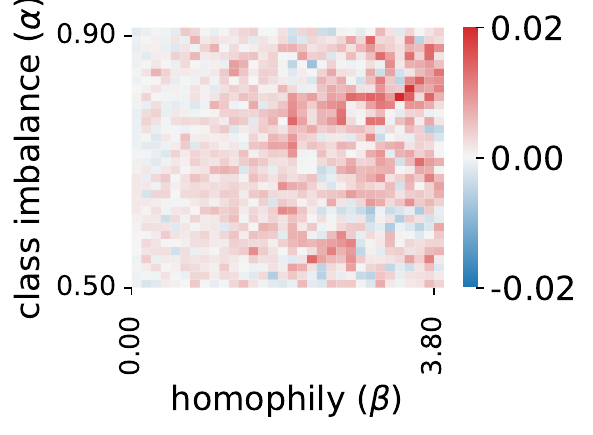}
    \includegraphics[width=.3\linewidth]{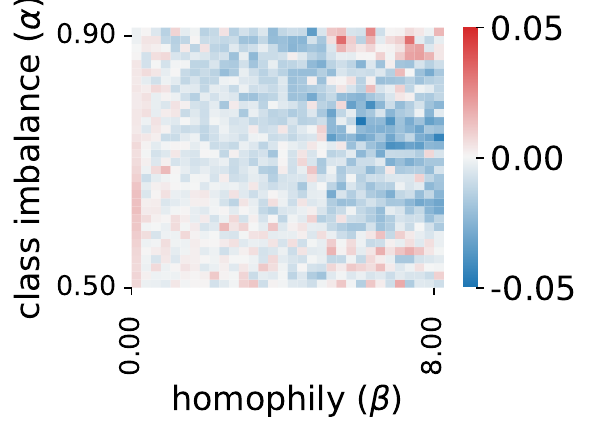}
    \caption{$\closeness$ values in \textit{Opinion} (left), \textit{Friendship} (middle), and \textit{Collab.} (right) use cases.}
\end{figure}

\begin{figure}[H]
    \centering
    \includegraphics[width=.3\linewidth]{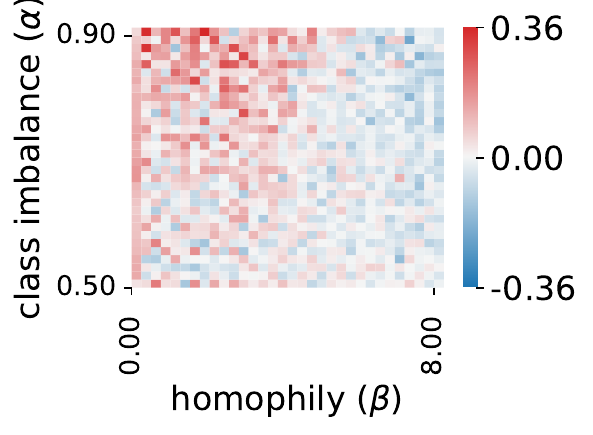}
    \includegraphics[width=.3\linewidth]{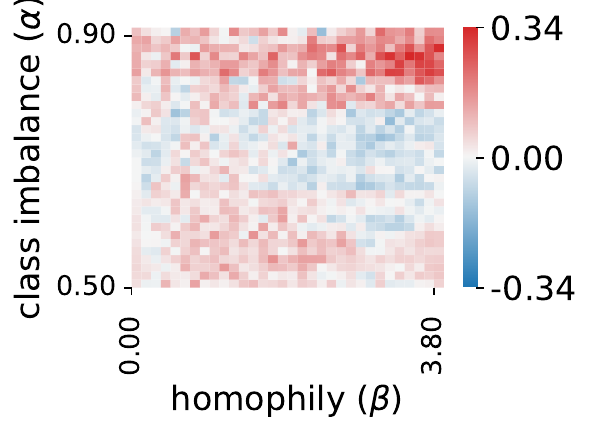}
    \includegraphics[width=.3\linewidth]{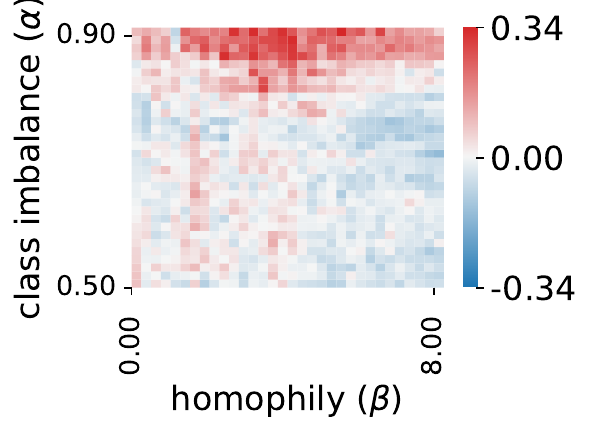}
    \caption{$\betweenness$ values in \textit{Opinion} (left), \textit{Friendship} (middle), and \textit{Collab.} (right) use cases.}
    \end{figure}

\begin{figure}[H]
    \centering
    \includegraphics[width=.3\linewidth]{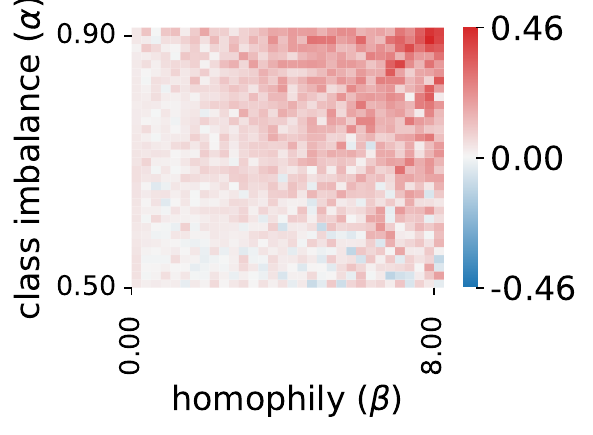}
    \includegraphics[width=.3\linewidth]{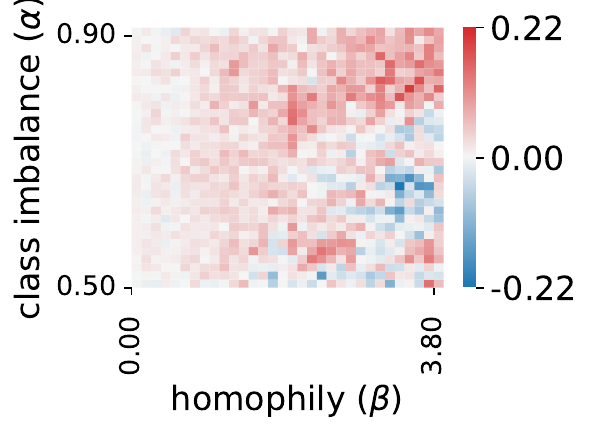}
    \includegraphics[width=.3\linewidth]{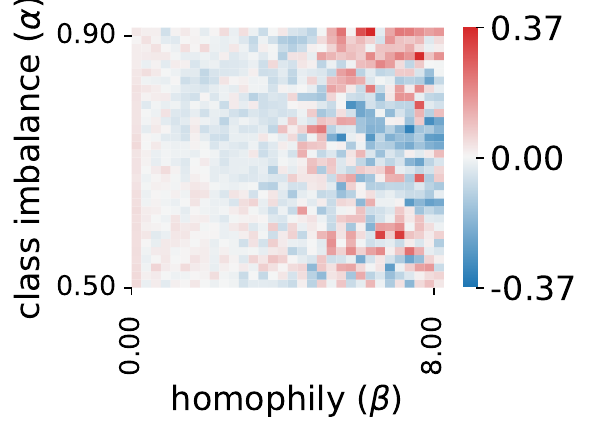}
    \caption{$\prestige$ values in \textit{Opinion} (left), \textit{Friendship} (middle), and \textit{Collab.} (right) use cases.}
    \end{figure}

\begin{figure}[H]
    \centering
    \includegraphics[width=.3\linewidth]{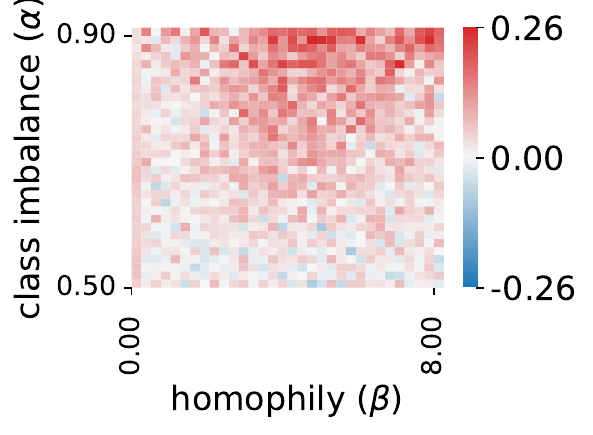}
    \includegraphics[width=.3\linewidth]{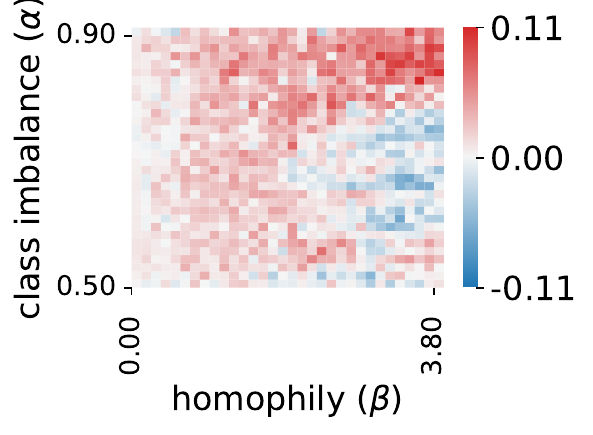}
    \includegraphics[width=.3\linewidth]{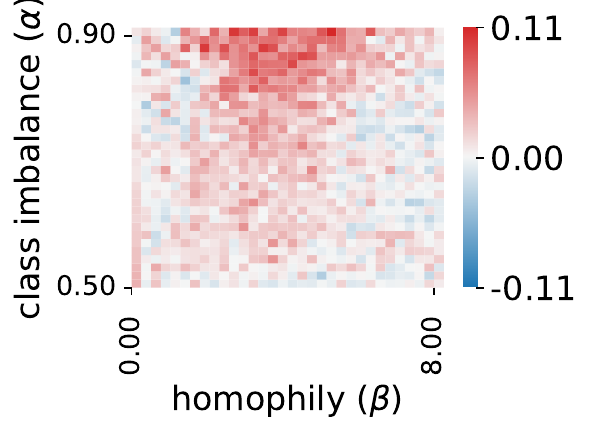}
    \caption{$\degree$ values in \textit{Opinion} (left), \textit{Friendship} (middle), and \textit{Collab.} (right) use cases.}
    \end{figure}

\begin{figure}[H]
    \centering
    \includegraphics[width=.3\linewidth]{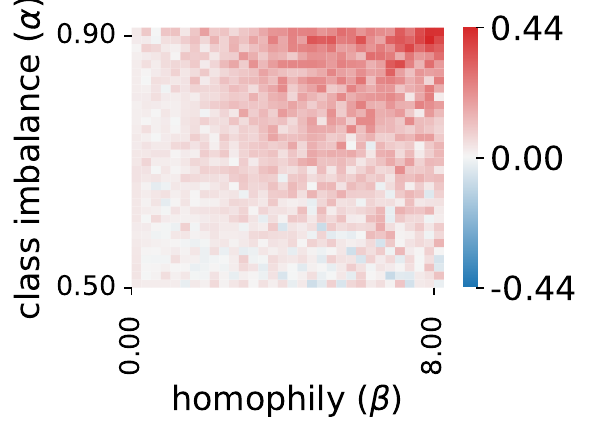}
    \includegraphics[width=.3\linewidth]{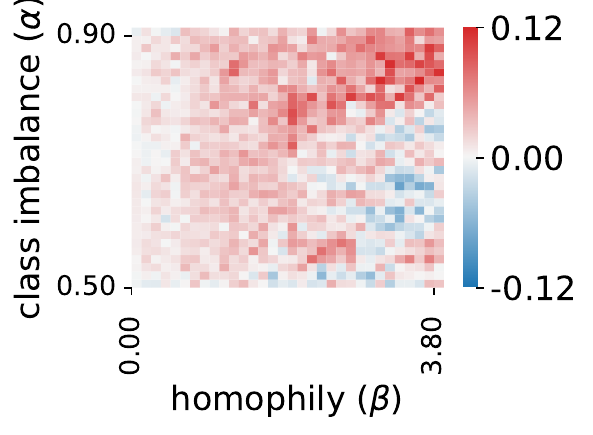}
    \includegraphics[width=.3\linewidth]{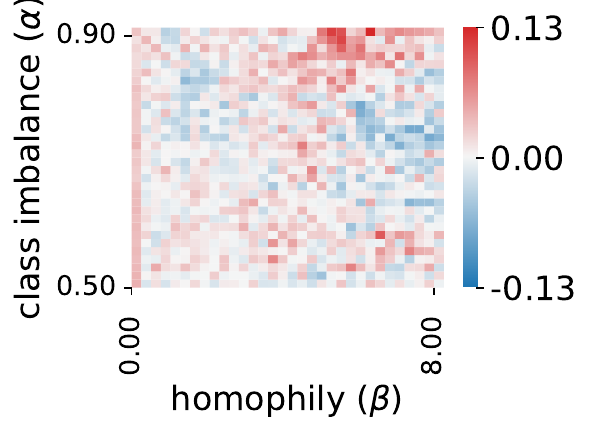}
    \caption{$\constraint$ values in \textit{Opinion} (left), \textit{Friendship} (middle), and \textit{Collab.} (right) use cases.}
    \end{figure}

\begin{figure}[H]
    \centering
    \includegraphics[width=.3\linewidth]{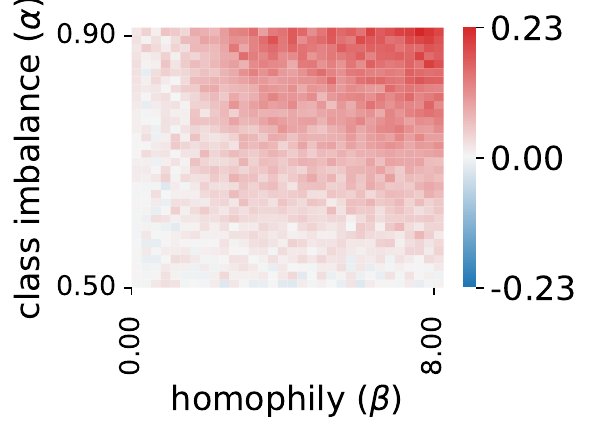}
    \includegraphics[width=.3\linewidth]{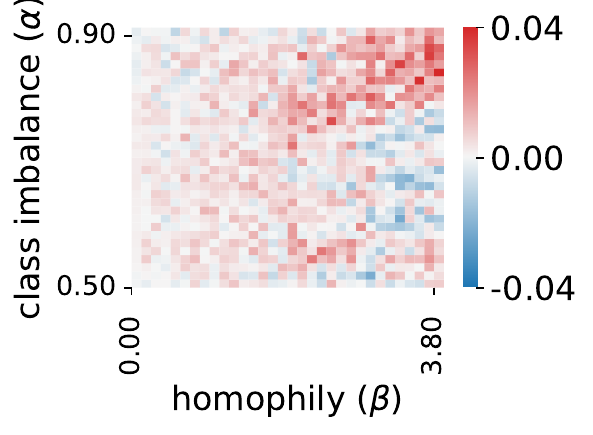}
    \includegraphics[width=.3\linewidth]{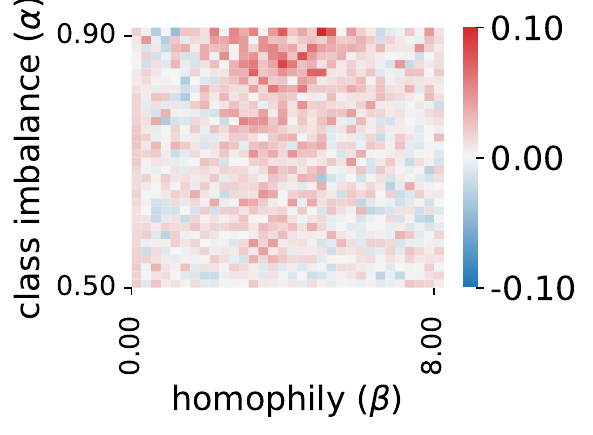}
    \caption{$\density$ values in \textit{Opinion} (left), \textit{Friendship} (middle), and \textit{Collab.} (right) use cases.}
    \end{figure}

\begin{figure}[H]
    \centering
    \includegraphics[width=.3\linewidth]{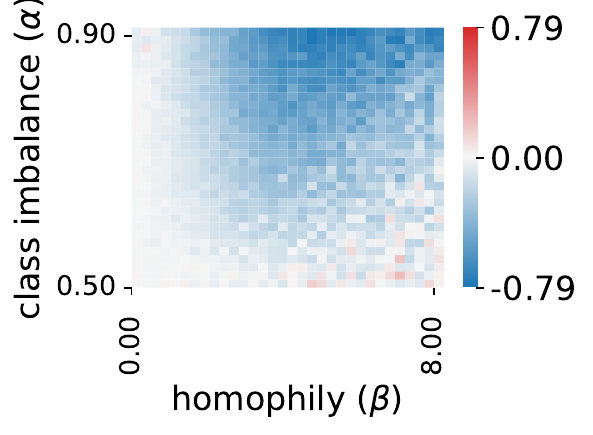}
    \includegraphics[width=.3\linewidth]{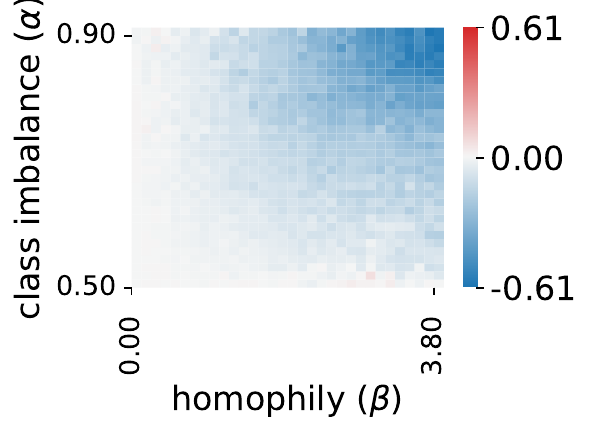}
    \includegraphics[width=.3\linewidth]{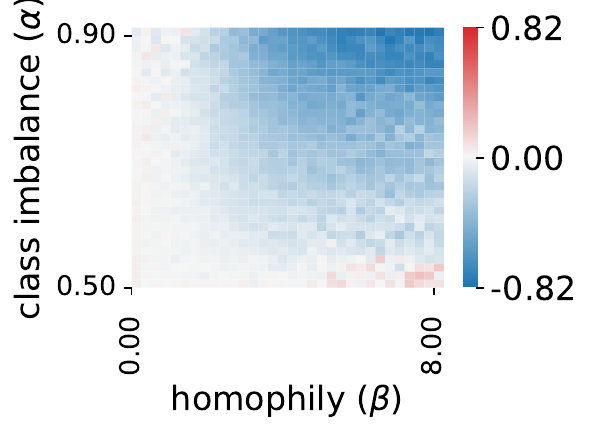}
    \caption{$\heterogeneity$ values in \textit{Opinion} (left), \textit{Friendship} (middle), and \textit{Collab.} (right) use cases.}
    \end{figure}

\begin{figure}[H]
    \centering
    \includegraphics[width=.3\linewidth]{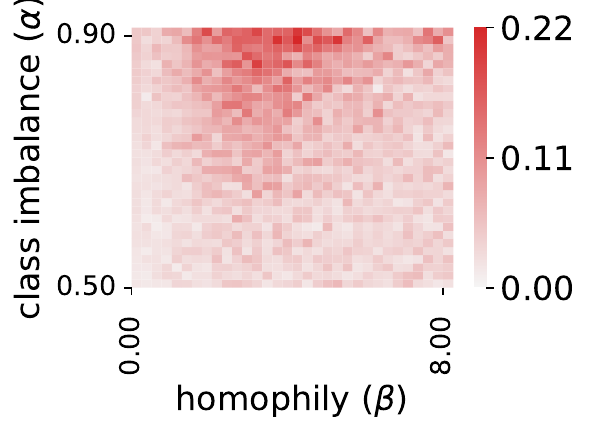}
    \includegraphics[width=.3\linewidth]{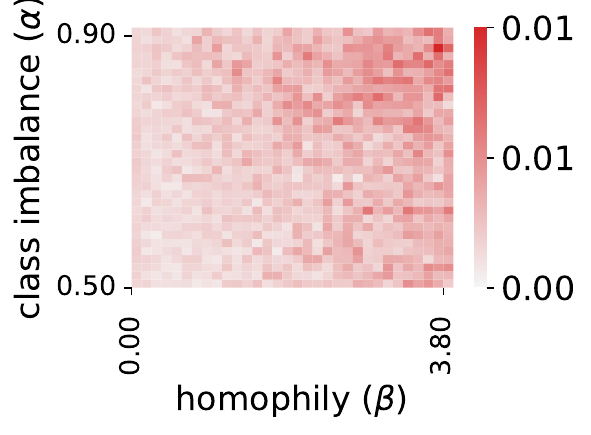}
    \includegraphics[width=.3\linewidth]{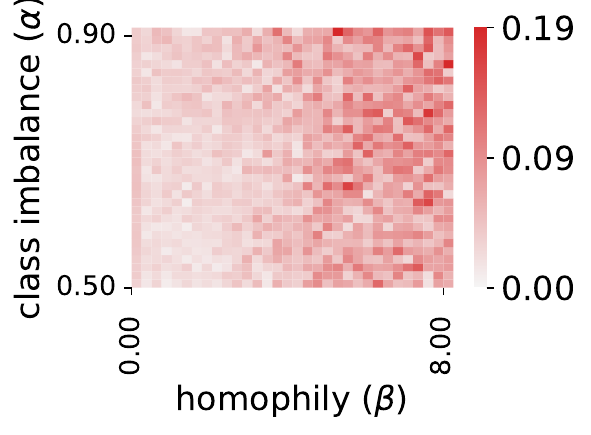}
    \caption{$\isolation$ values in \textit{Opinion} (left), \textit{Friendship} (middle), and \textit{Collab.} (right) use cases.}
    \end{figure}

\begin{figure}[H]
    \centering
    \includegraphics[width=.3\linewidth]{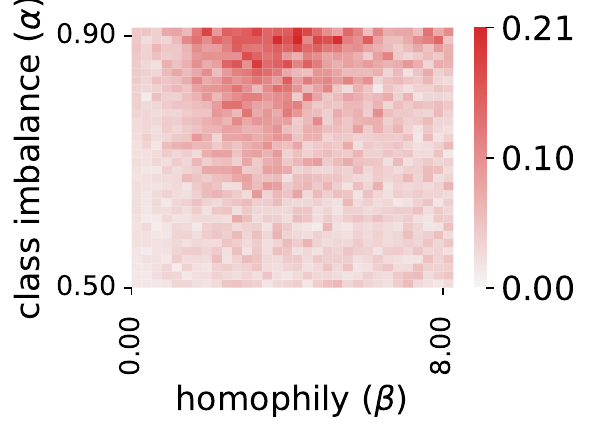}
    \includegraphics[width=.3\linewidth]{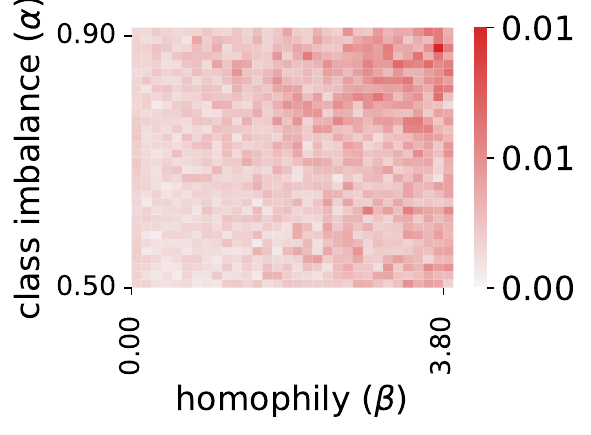}
    \includegraphics[width=.3\linewidth]{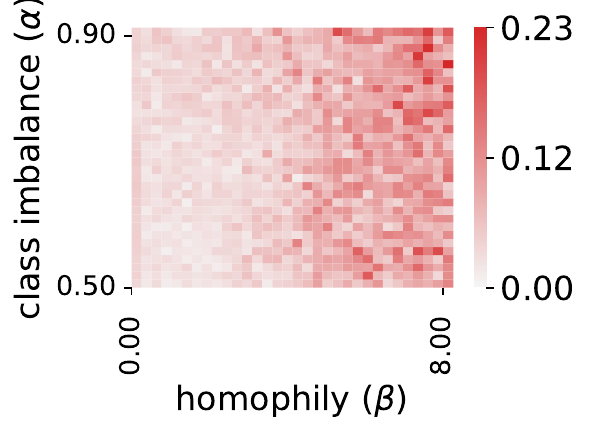}
    \caption{$\diameter$ values in \textit{Opinion} (left), \textit{Friendship} (middle), and \textit{Collab.} (right) use cases.}
    \end{figure}

\begin{figure}[H]
    \centering
    \includegraphics[width=.3\linewidth]{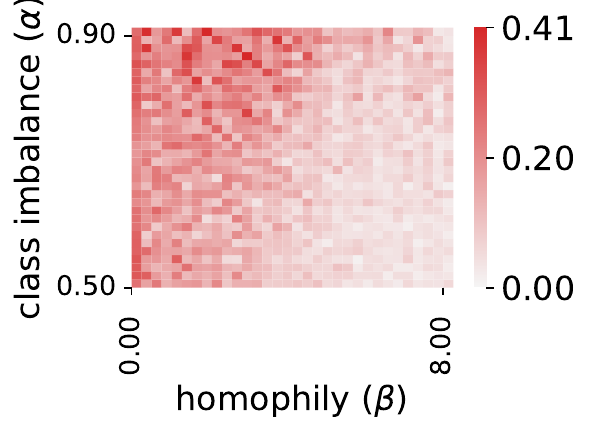}
    \includegraphics[width=.3\linewidth]{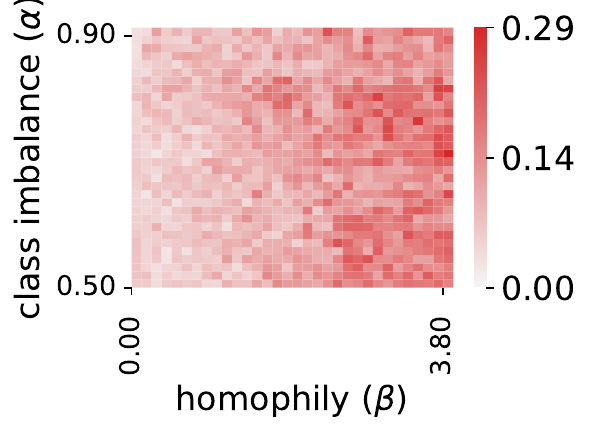}
    \includegraphics[width=.3\linewidth]{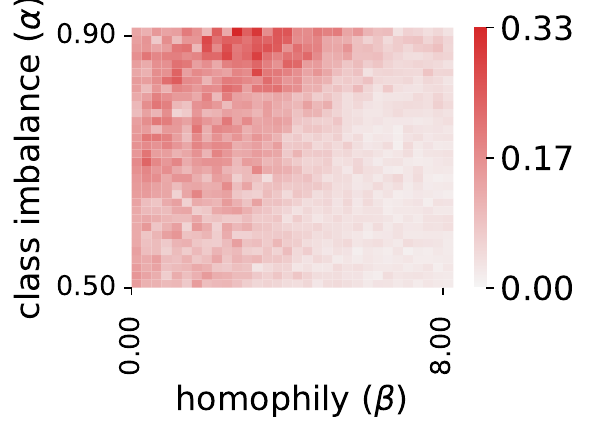}
    \caption{$\control$ in \textit{Opinion} (left), \textit{Friendship} (middle), and \textit{Collab.} (right) use cases.}
    \end{figure}

\begin{figure}[H]
    \centering
    \includegraphics[width=.3\linewidth]{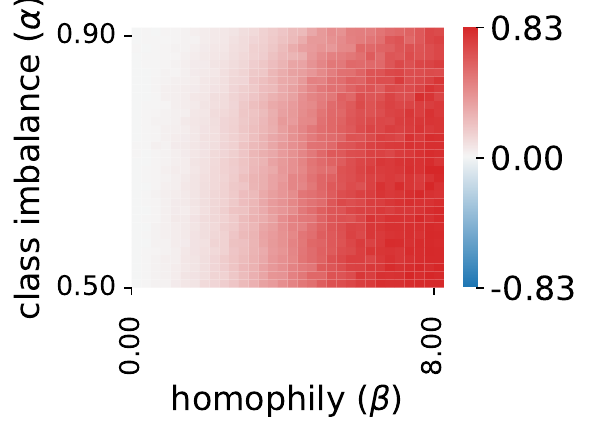}
    \includegraphics[width=.3\linewidth]{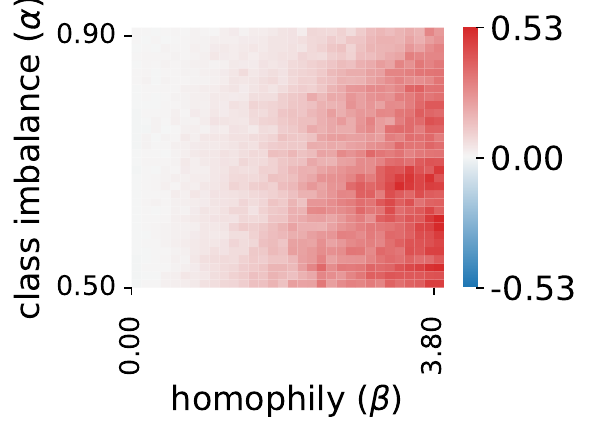}
    \includegraphics[width=.3\linewidth]{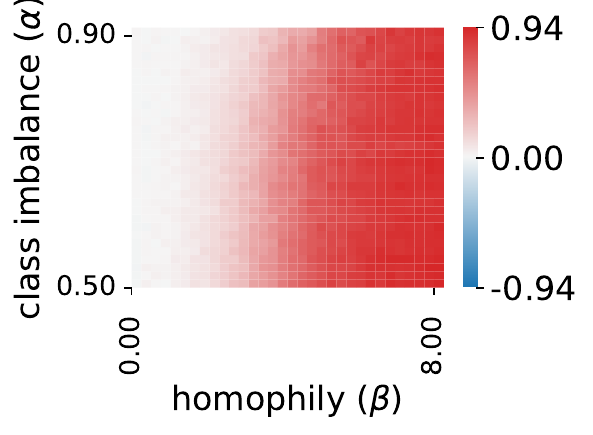}
    \caption{$\assortativity$ in \textit{Opinion} (left), \textit{Friendship} (middle), and \textit{Collab.} (right) use cases.}
    \end{figure}

\begin{figure}[H]
    \centering
    \includegraphics[width=.3\linewidth]{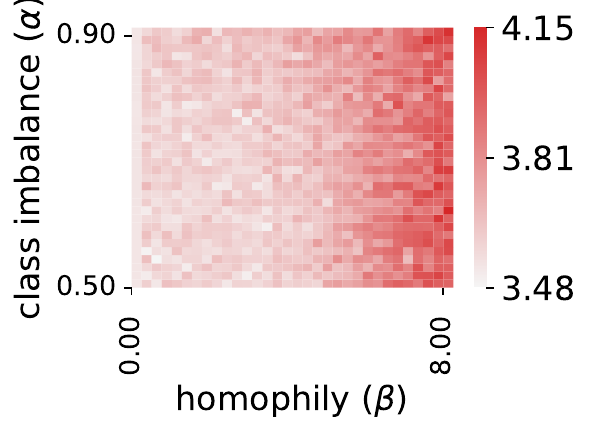}
    \includegraphics[width=.3\linewidth]{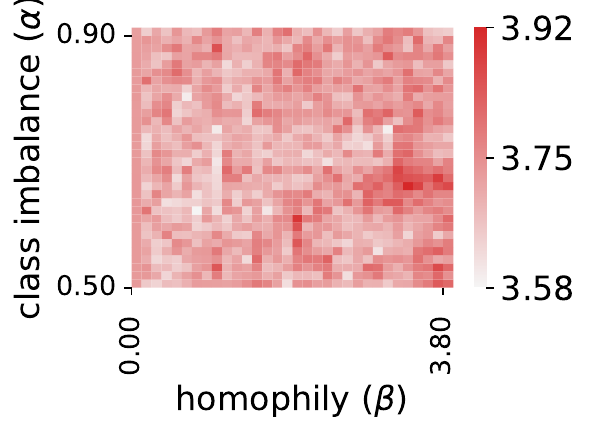}
    \includegraphics[width=.3\linewidth]{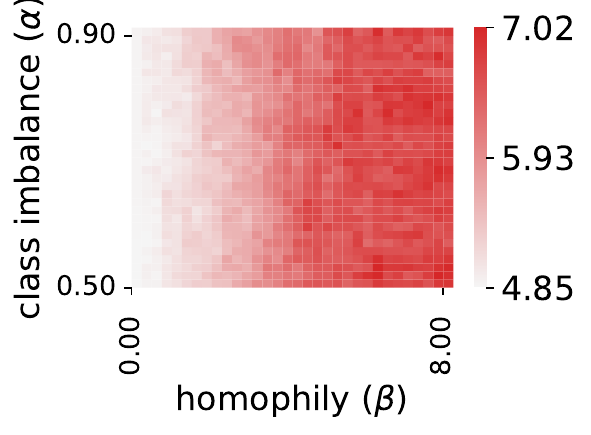}
    \caption{$\amd$ in \textit{Opinion} (left), \textit{Friendship} (middle), and \textit{Collab.} (right) use cases.}
    \end{figure}

\begin{figure}[H]
    \centering
    \includegraphics[width=.3\linewidth]{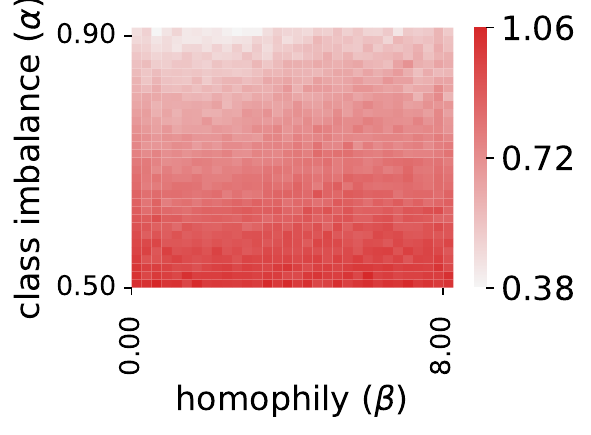}
    \includegraphics[width=.3\linewidth]{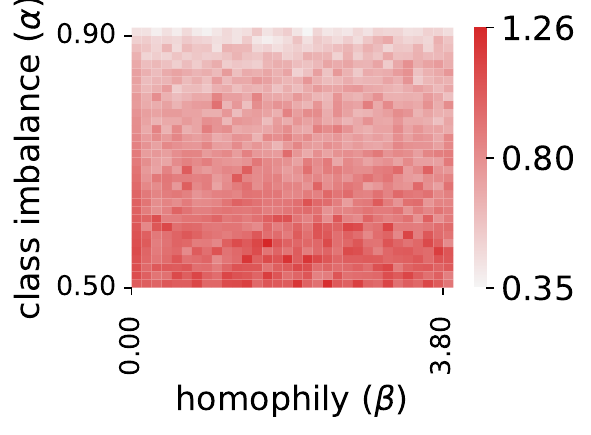}
    \includegraphics[width=.3\linewidth]{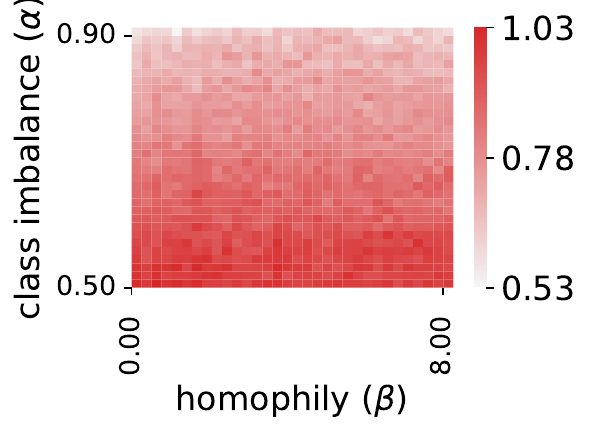}
    \caption{$\power$ in \textit{Opinion} (left), \textit{Friendship} (middle), and \textit{Collab.} (right) use cases.}
\end{figure}

\begin{figure}[H]
    \centering
    \includegraphics[width=.3\linewidth]{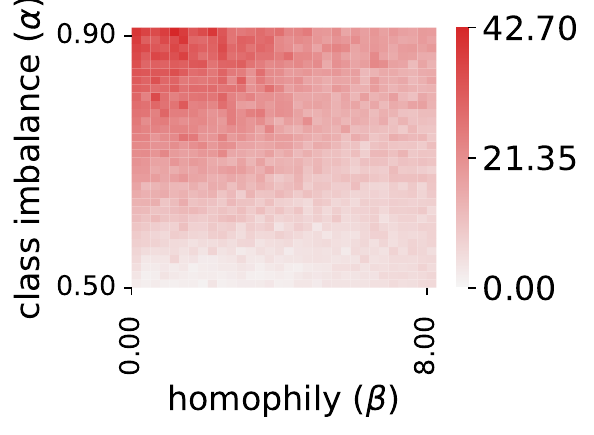}
    \includegraphics[width=.3\linewidth]{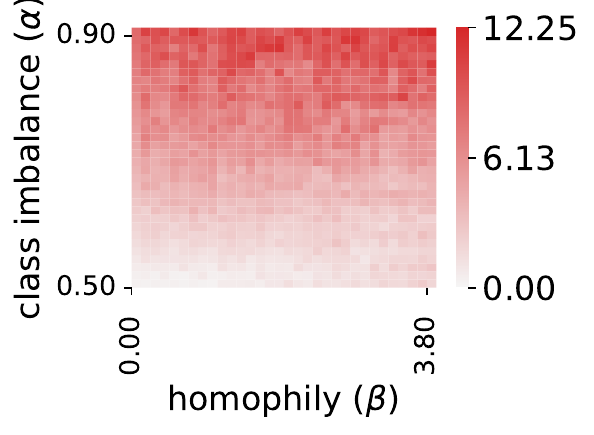}
    \includegraphics[width=.3\linewidth]{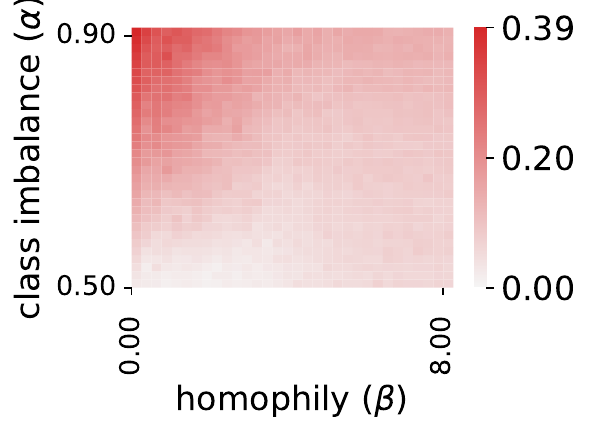}
    \caption{$\information$ in \textit{Opinion} (left), \textit{Friendship} (middle), and \textit{Collab.} (right) use cases.}
\end{figure}

\subsection{Correlation heatmaps}
\label{app:corr}

Here, we present the correlation matrices between the structural bias measures 
across the three scenarios. Only significant correlations (p-value $< 0.01$) 
are displayed in color.

\begin{figure}[H]
    \centering
    \includegraphics[width=\linewidth]{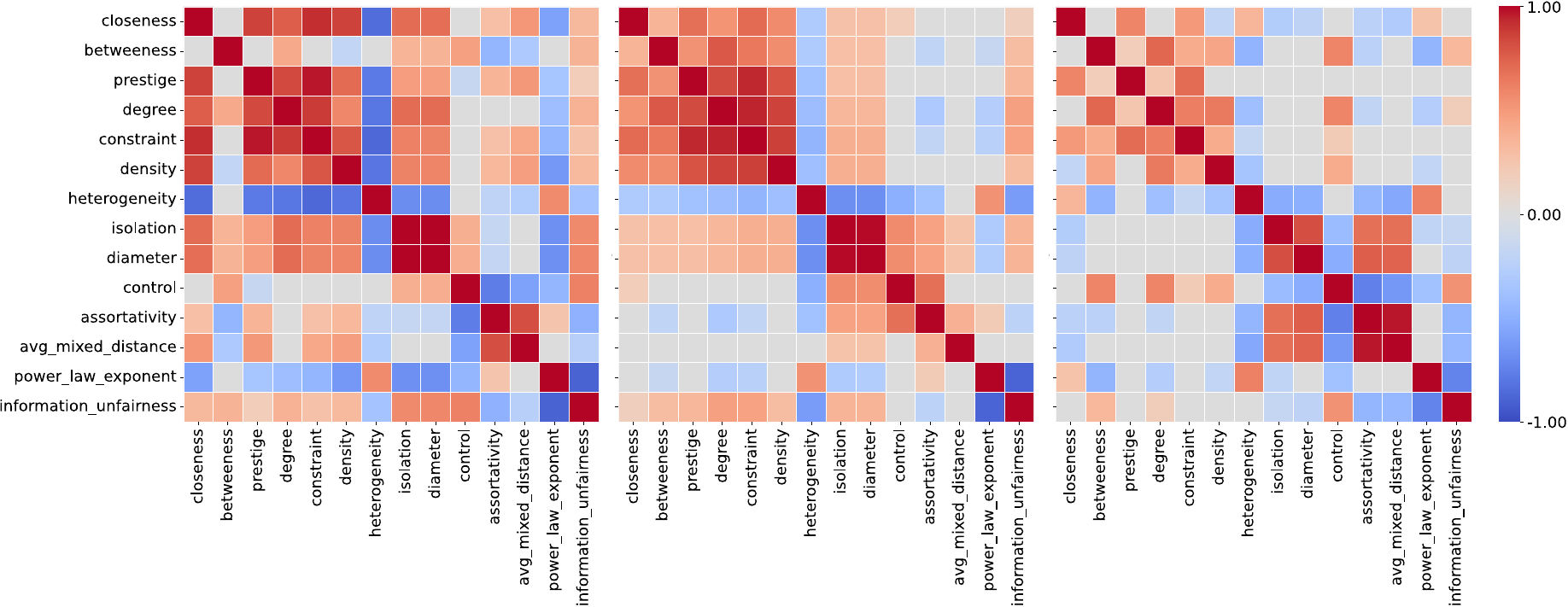}
    \caption{Correlations between structural bias measures in \textit{Opinion} (left), \textit{Friendship} (middle), and \textit{Collab.} (right) use cases.}
    \label{fig:correlations}
\end{figure}

Figure~\ref{fig:correlations} shows that the correlations between bias measures can vary 
considerably across scenarios, such as assortativity alternating between 
positive and negative correlations with control. However, some correlations 
remain consistent, notably between degree and constraint, both of which 
quantify node connectivity at different scales. This consistency underlines 
the existence of structural properties common to all scenarios.

\section{Predictive Metrics Heatmaps}
\label{app:pred_heatmaps}

As in Appendix~\ref{app:heatmaps}, we present here the evolution of predictive metrics (fairness and performance ones) across sensitive class imbalance $\alpha$ and homophily parameter $\beta$, in the three considered use cases. 
In the following, each subsection refers to a particular LP model.

Overall, the models involved demonstrate strong performance, confirming the 
relevance of our results regarding the fairness metrics. We also observe that,  as expected, performance generally increases with homophily, and that unfairness 
grows with both our homophily and sensitive class imbalance parameters $\beta$ and $\alpha$. This highlights the relevance of these two parameters as primary factors driving 
predictive disparities.

\begin{figure}[H]
    \centering
    \includegraphics[width=.3\linewidth]{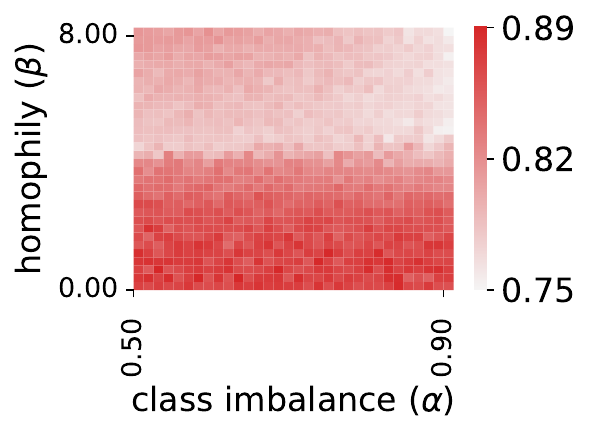}
    \includegraphics[width=.3\linewidth]{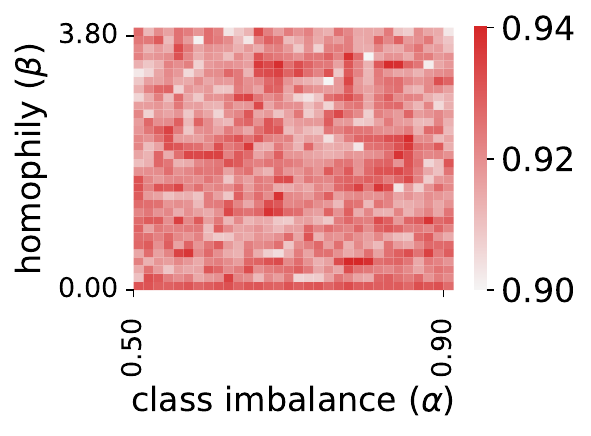}
    \includegraphics[width=.3\linewidth]{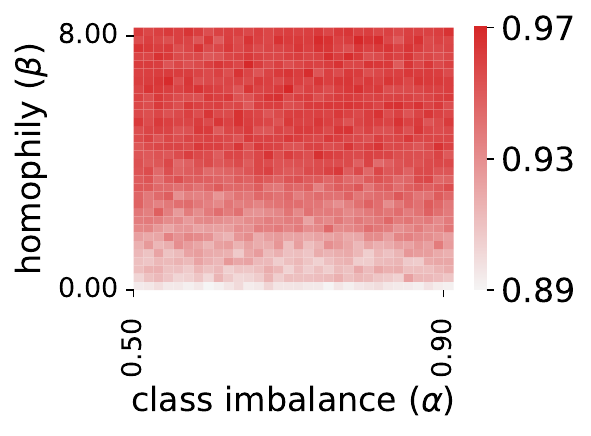}
    \caption{$AUC$ for \textbf{GCN} models in \textit{Opinion} (left), \textit{Friendship} (middle), and \textit{Collab.} (right) use cases.}
    \end{figure}

\begin{figure}[H]
    \centering
    \includegraphics[width=.3\linewidth]{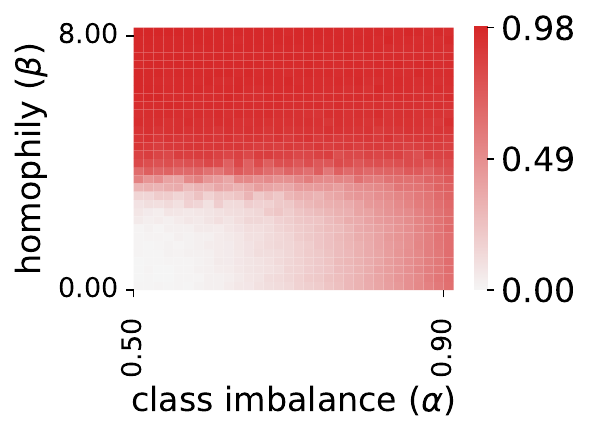}
    \includegraphics[width=.3\linewidth]{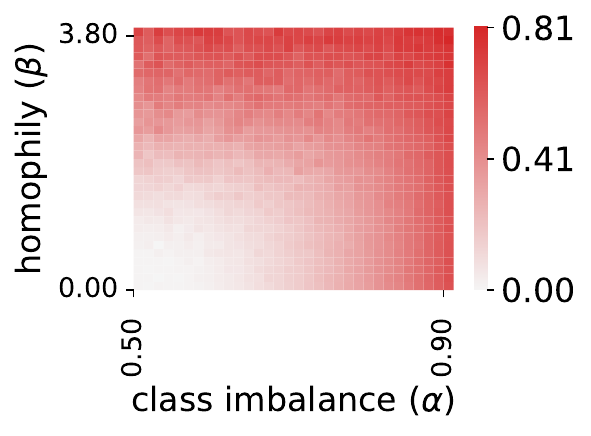}
    \includegraphics[width=.3\linewidth]{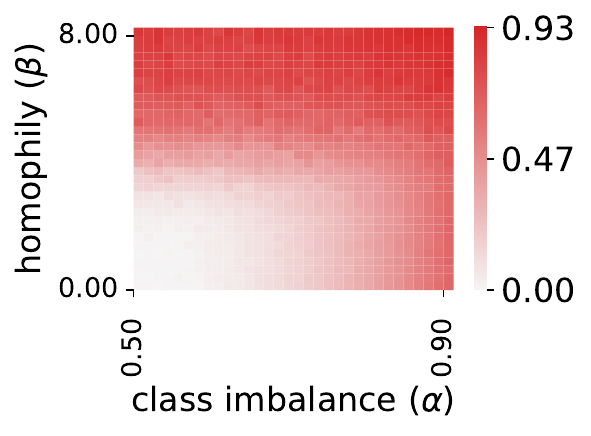}
    \caption{$SP@10$ for \textbf{GCN} models in \textit{Opinion} (left), \textit{Friendship} (middle), and \textit{Collab.} (right) use cases.}
    \end{figure}

\begin{figure}[H]
    \centering
    \includegraphics[width=.3\linewidth]{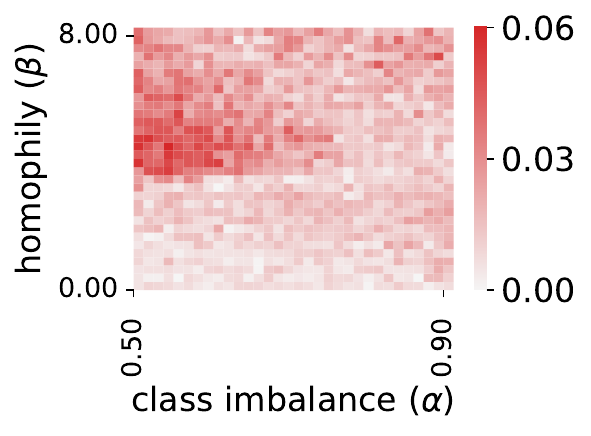}
    \includegraphics[width=.3\linewidth]{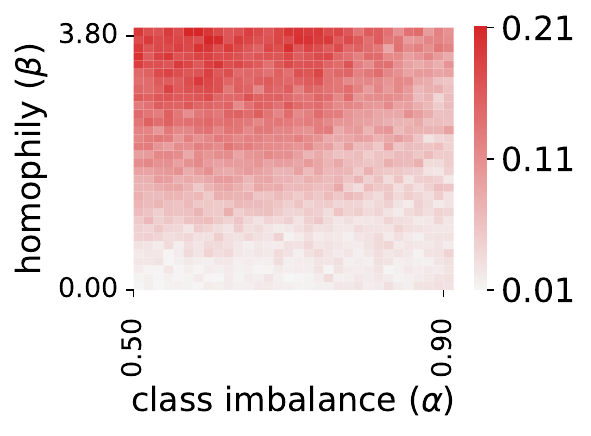}
    \includegraphics[width=.3\linewidth]{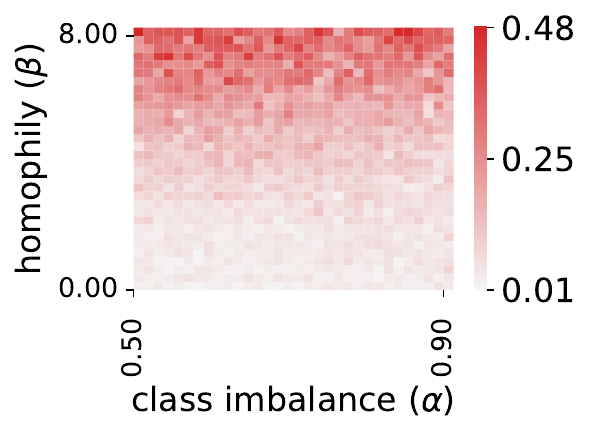}
    \caption{$EO@10$ for \textbf{GCN} models in \textit{Opinion} (left), \textit{Friendship} (middle), and \textit{Collab.} (right) use cases.}
    \end{figure}

\section{Importance plots}
\label{app:importance}

\subsection{Graph Convolutional Network (\textbf{GCN})}

\begin{figure}[H]
    \centering
    \includegraphics[width=.3\linewidth]{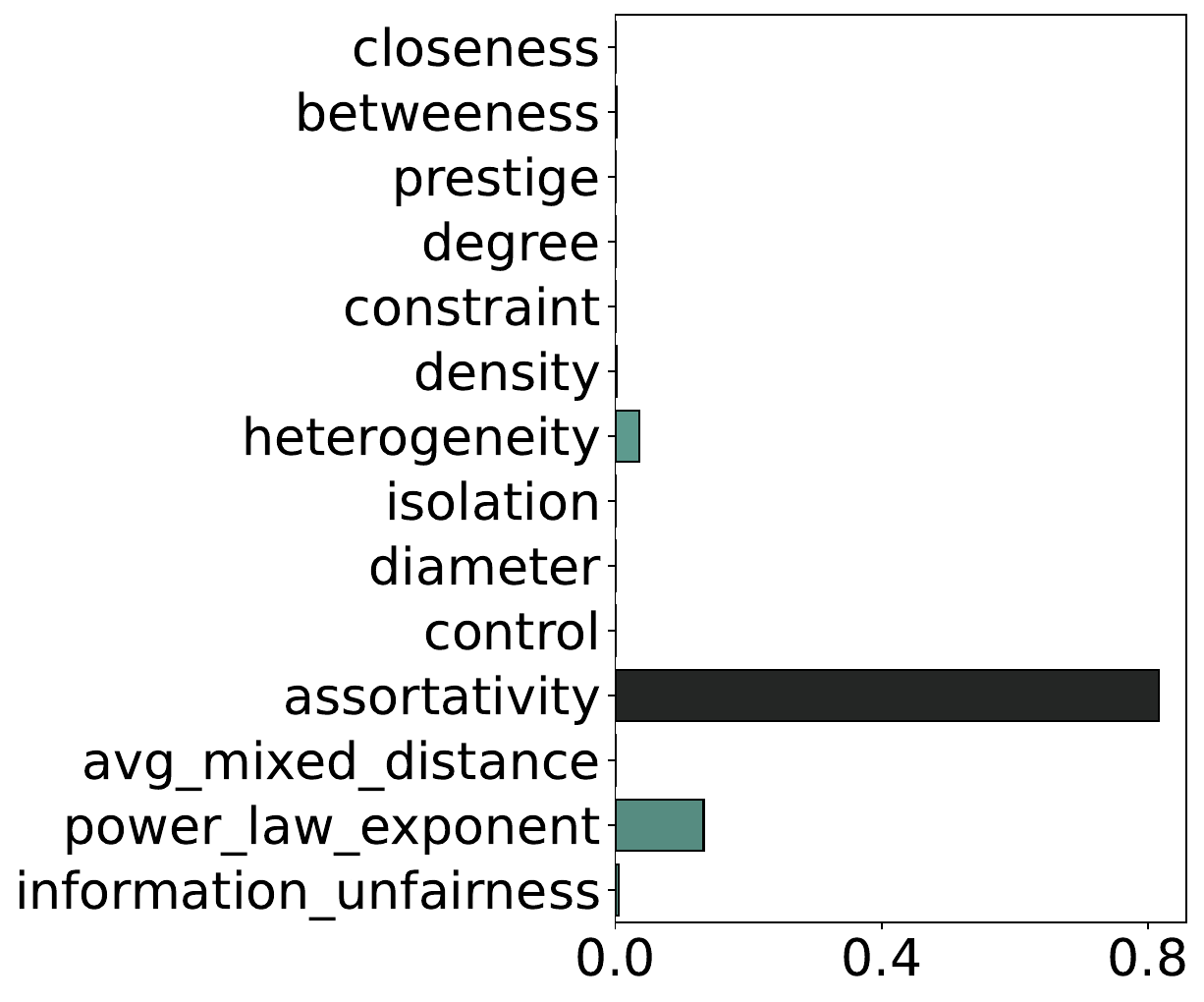}
    \includegraphics[width=.3\linewidth]{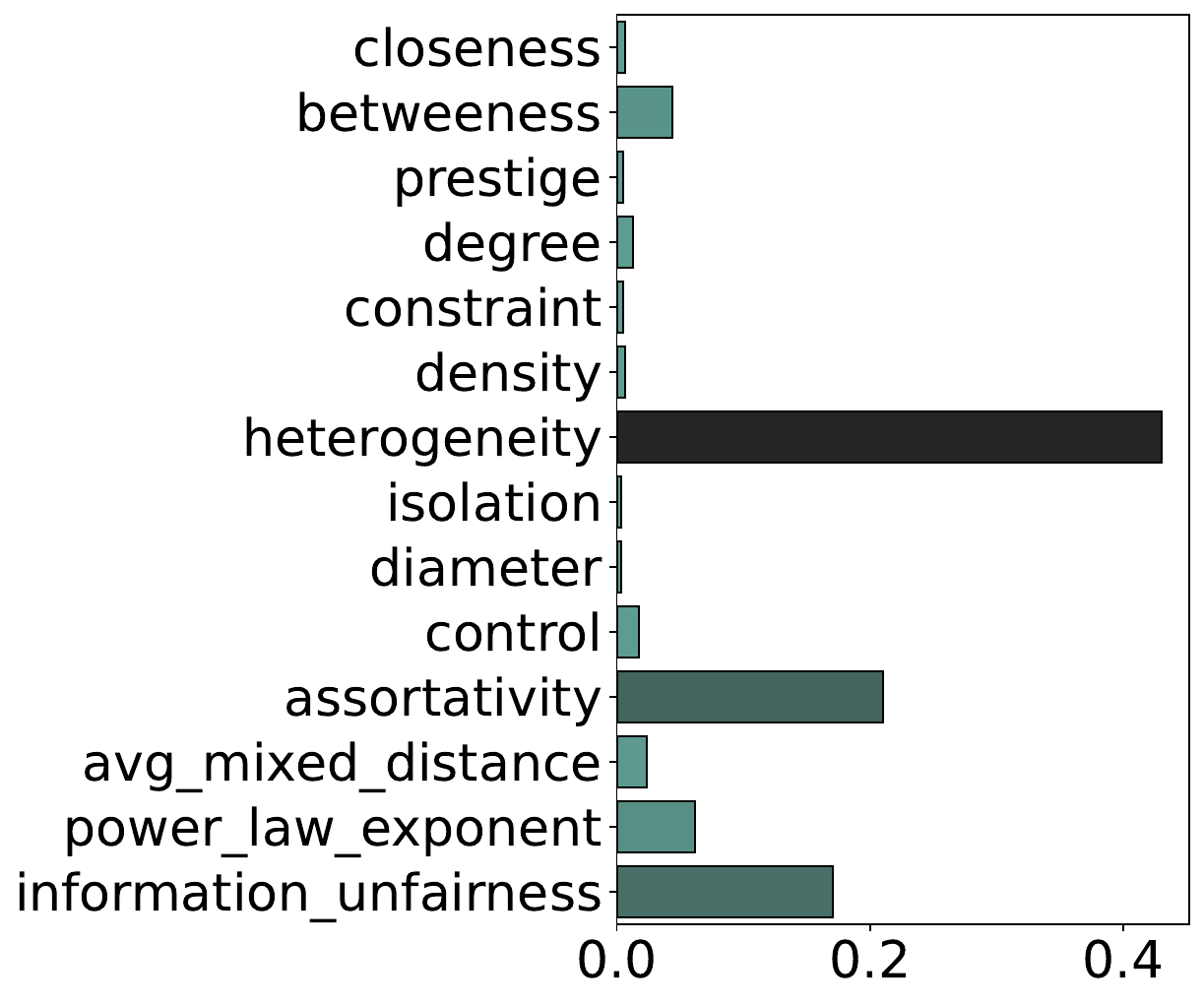}
    \includegraphics[width=.3\linewidth]{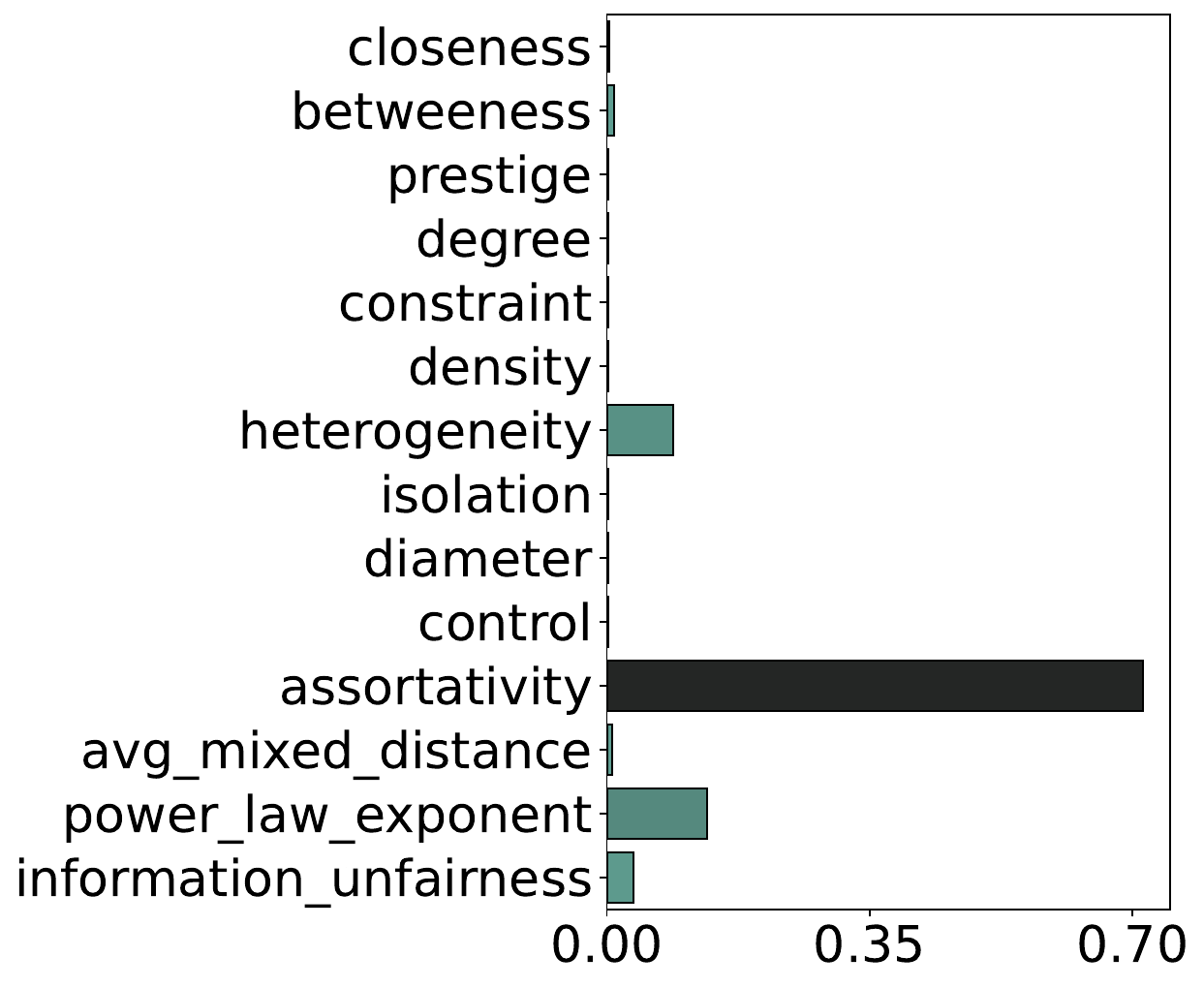}
    \caption{Feature importance scores from the structural bias regression for $SP$ metric and $GCN$ model in \textit{Opinion} (left), \textit{Friendship} (middle), and \textit{Collab.} (right) use cases.}
\end{figure}

\begin{figure}[H]
    \centering
    \includegraphics[width=.3\linewidth]{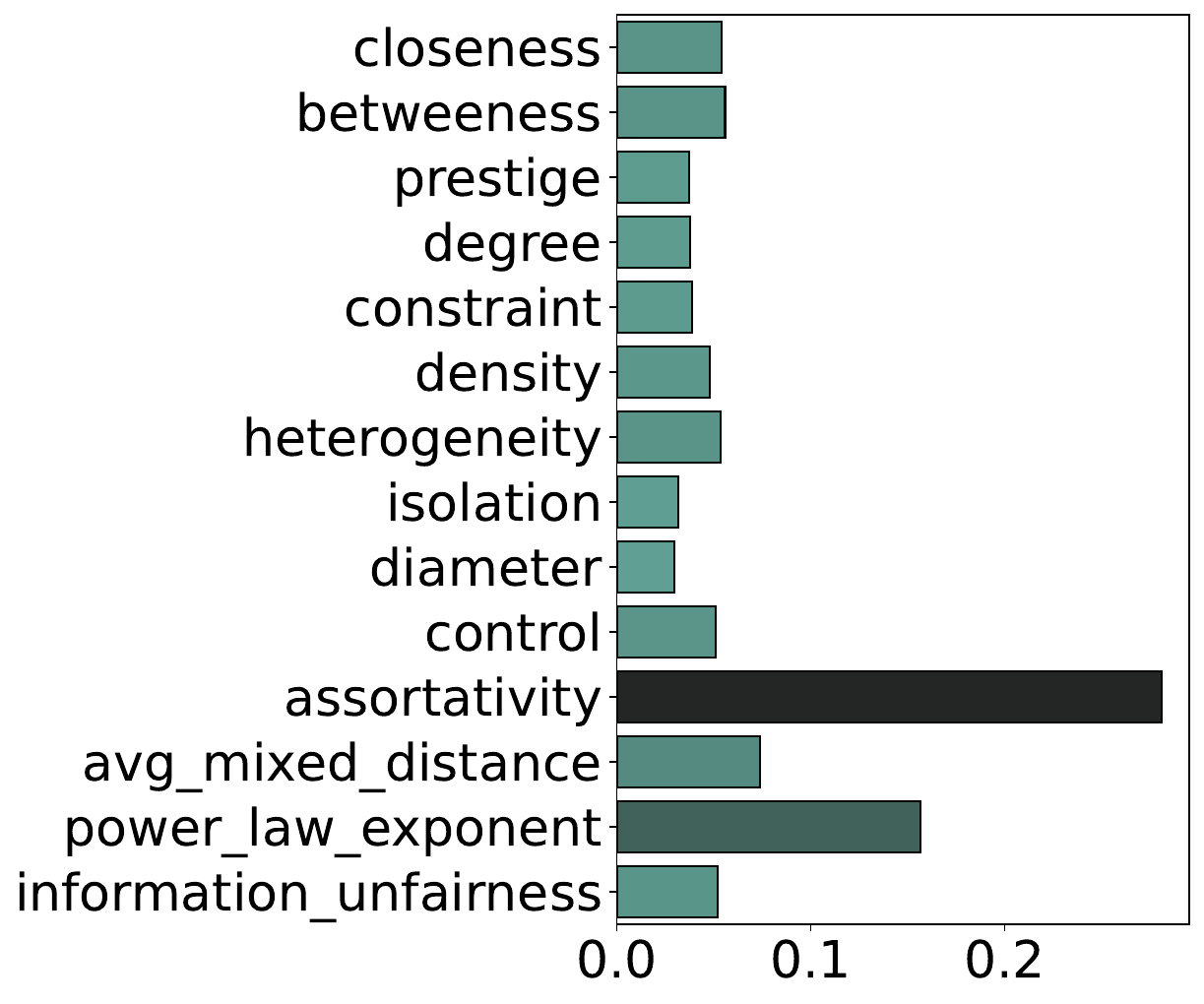}
    \includegraphics[width=.3\linewidth]{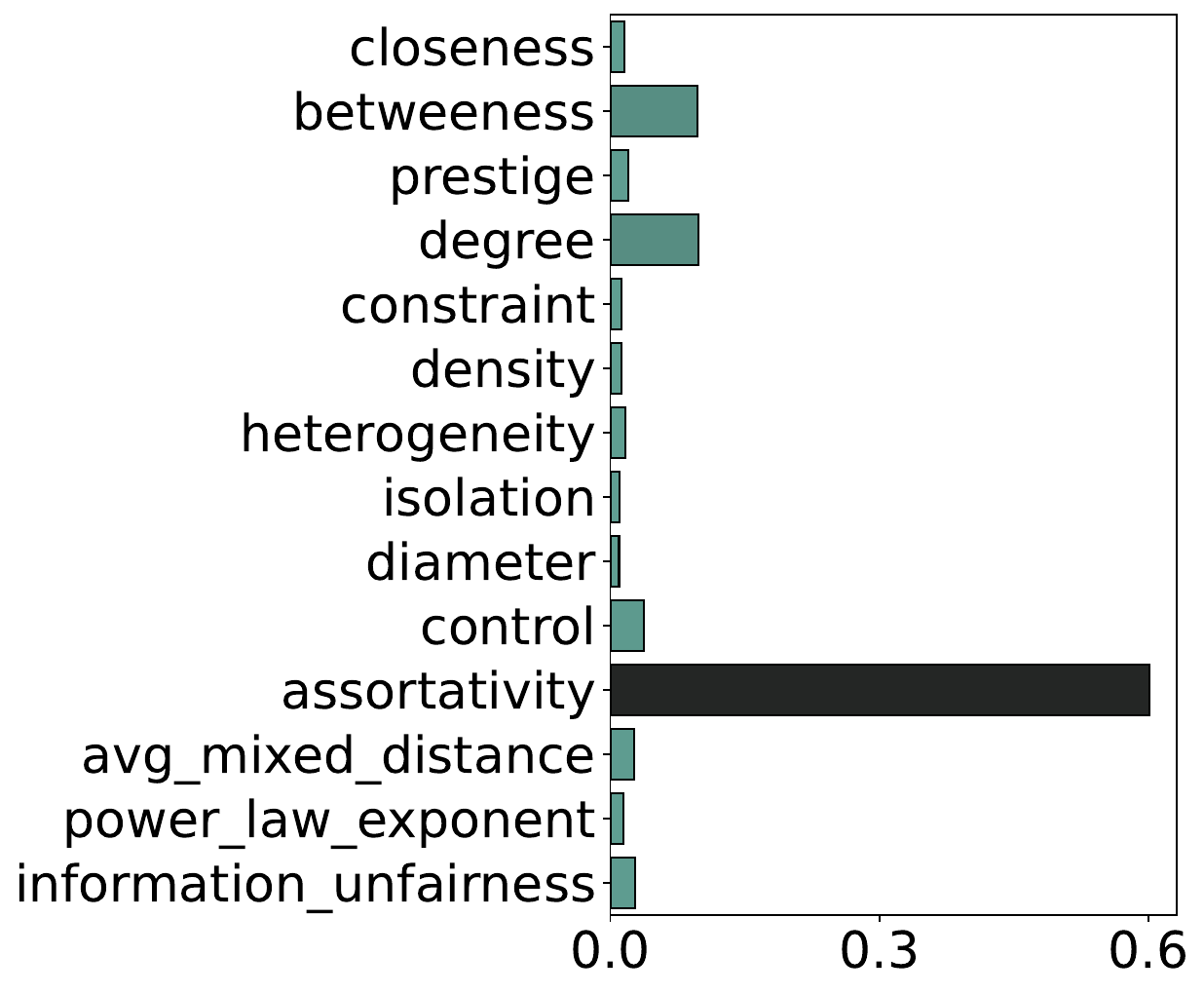}
    \includegraphics[width=.3\linewidth]{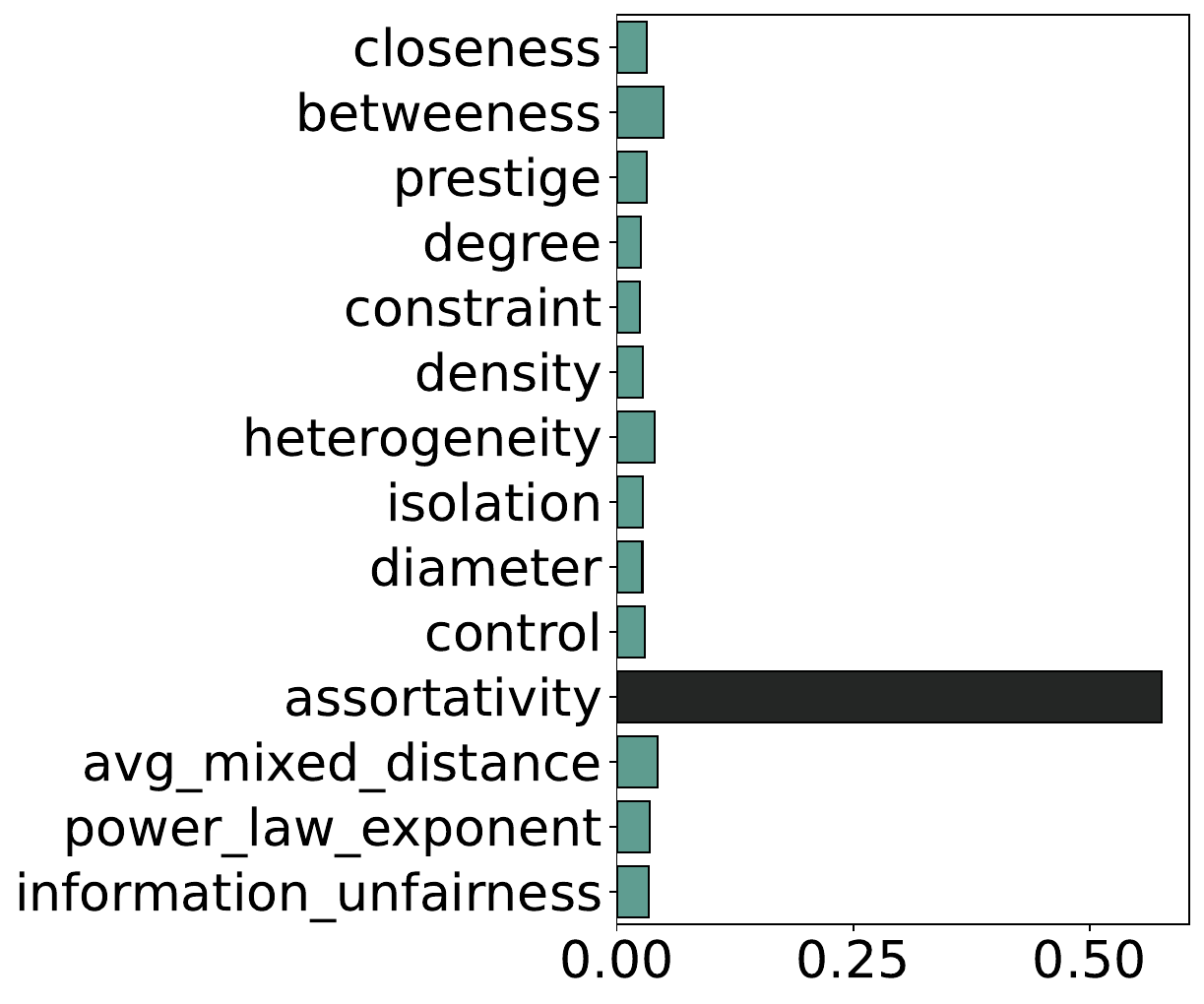}
    \caption{Feature importance scores from the structural bias regression for $EO$ metric and $GCN$ model in \textit{Opinion} (left), \textit{Friendship} (middle), and \textit{Collab.} (right) use cases.}
\end{figure}

\subsection{Node2Vec (\textbf{N2V})}

\begin{figure}[H]
    \centering
    \includegraphics[width=.3\linewidth]{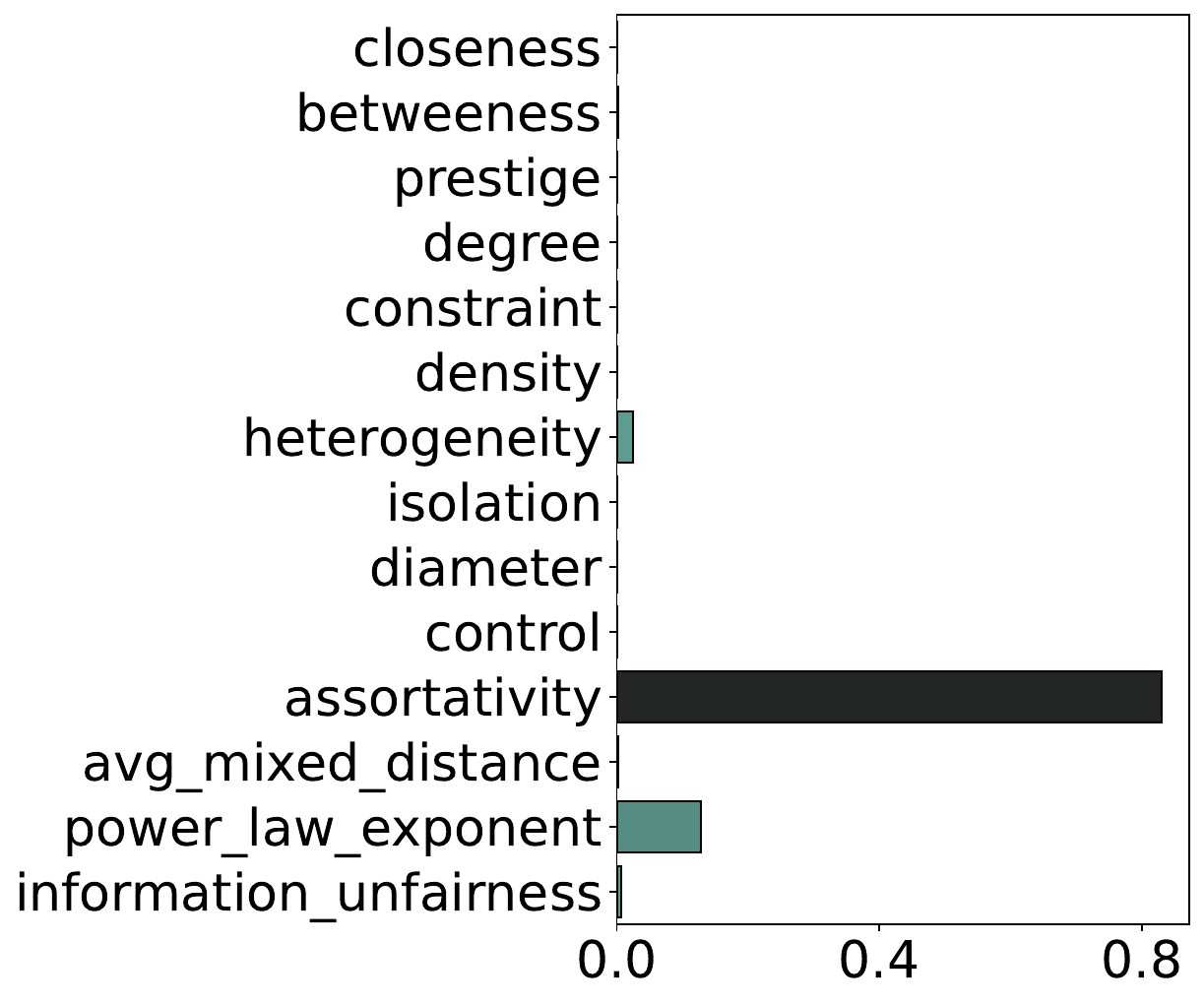}
    \includegraphics[width=.3\linewidth]{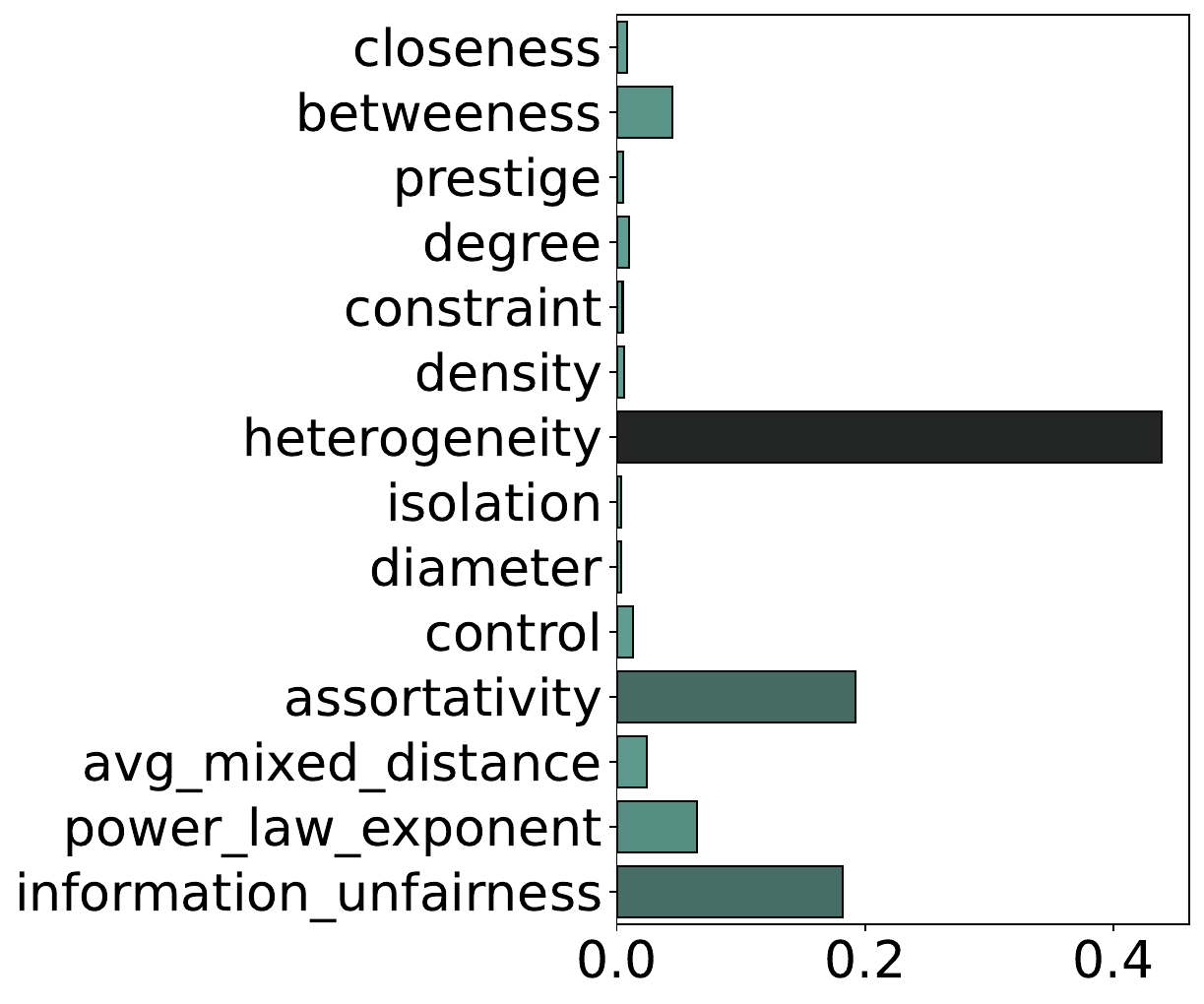}
    \includegraphics[width=.3\linewidth]{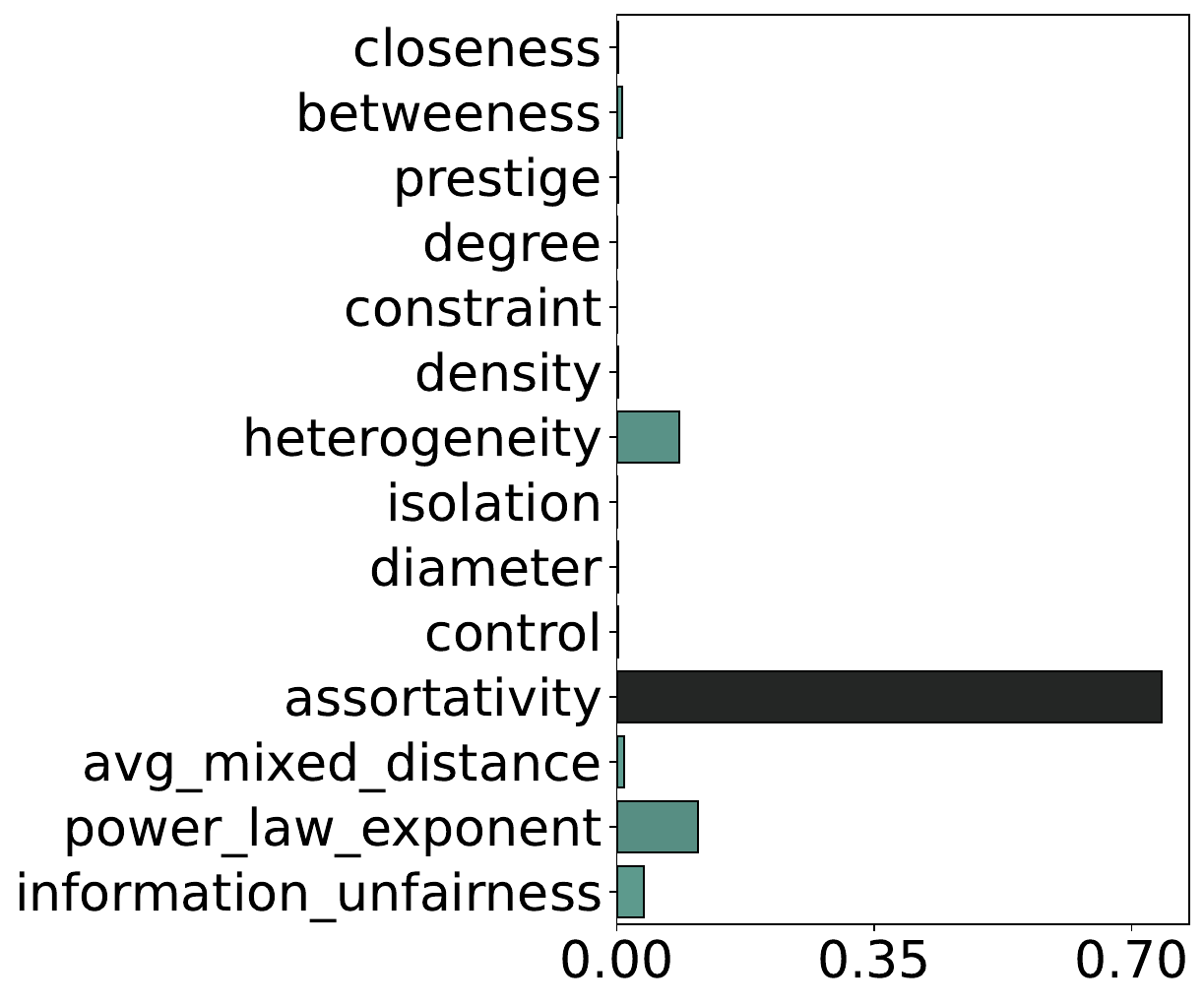}
    \caption{Feature importance scores from the structural bias regression for $SP$ metric and $N2V$ model in \textit{Opinion} (left), \textit{Friendship} (middle), and \textit{Collab.} (right) use cases.}
\end{figure}

\begin{figure}[H]
    \centering
    \includegraphics[width=.3\linewidth]{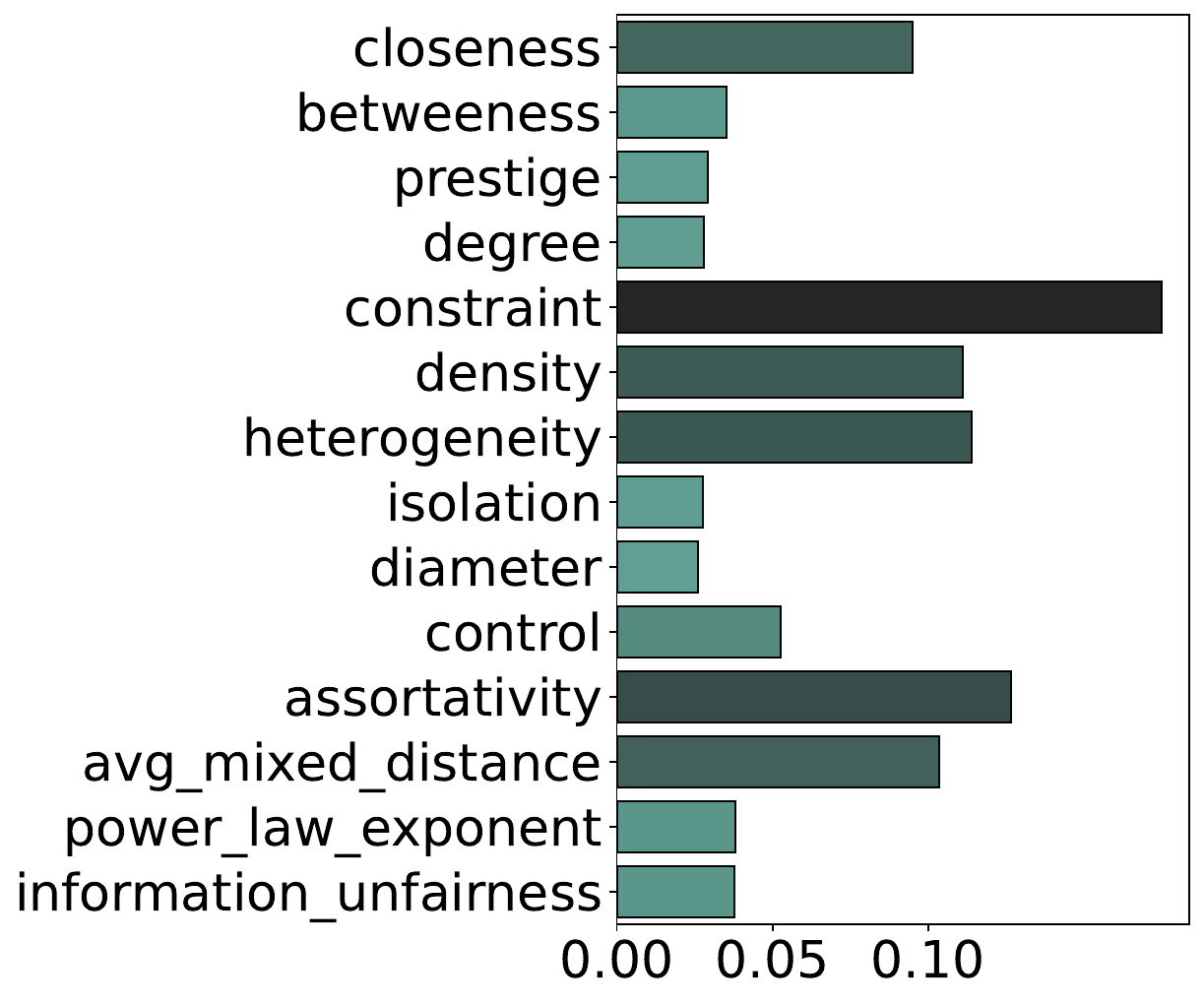}
    \includegraphics[width=.3\linewidth]{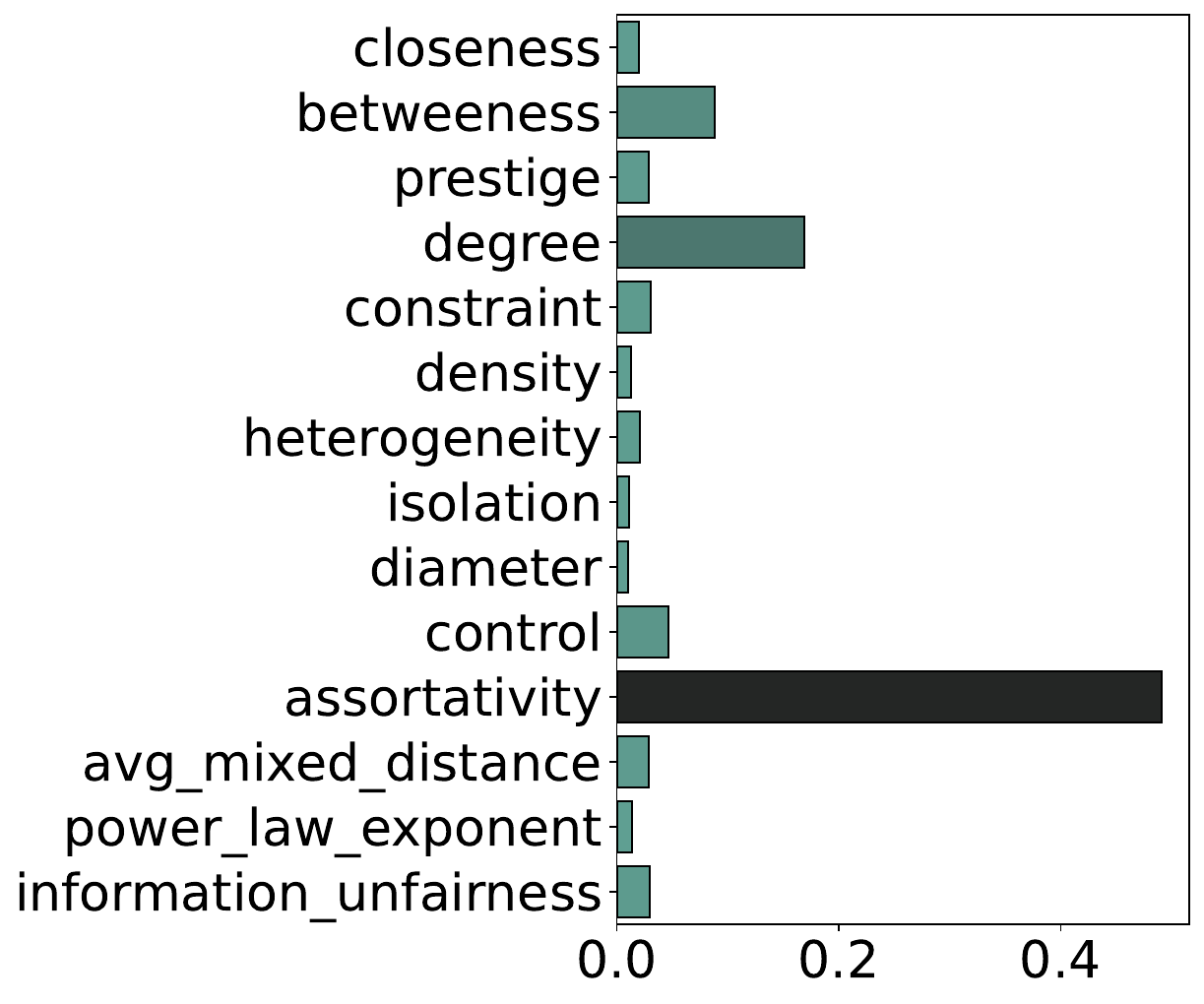}
    \includegraphics[width=.3\linewidth]{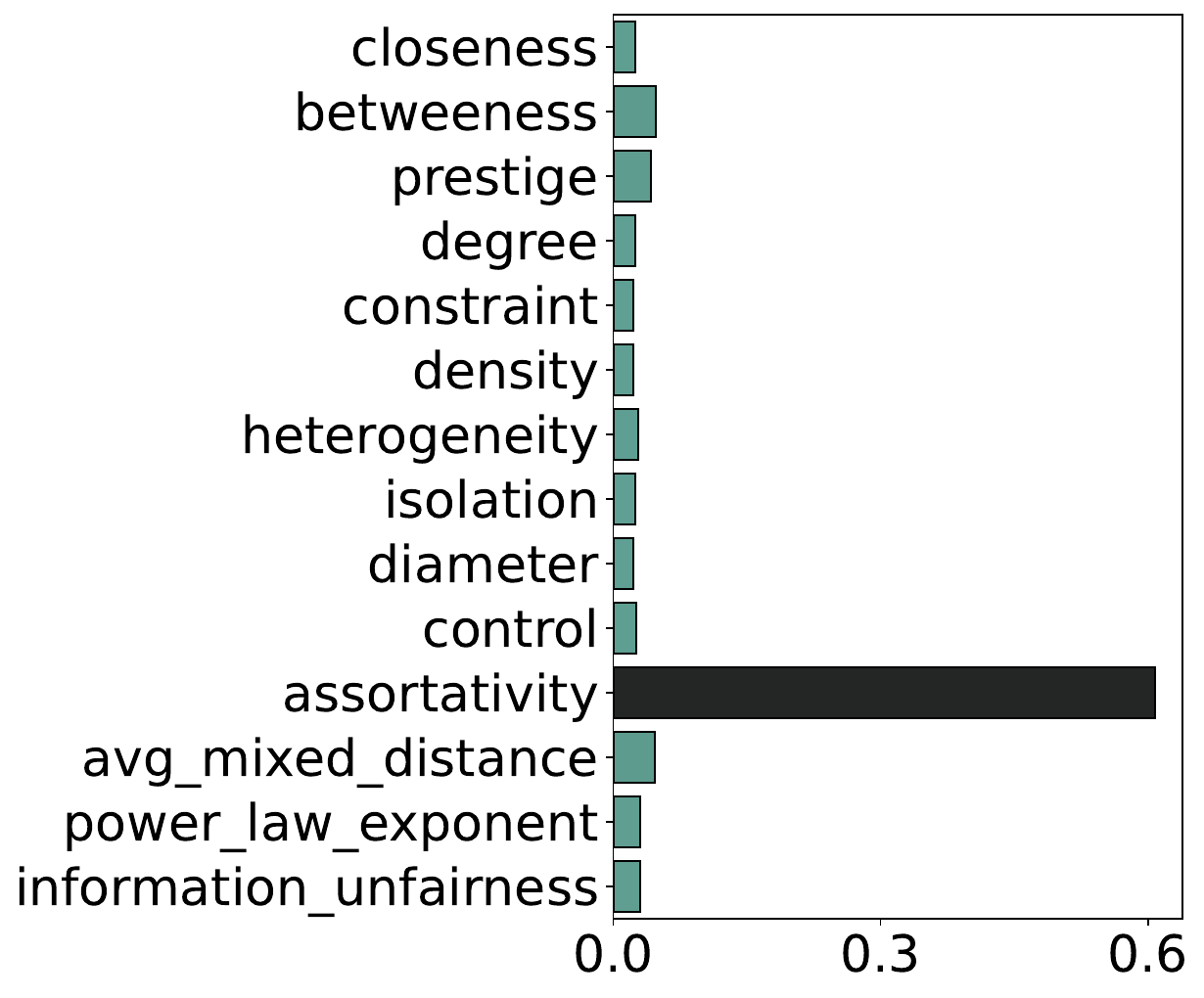}
    \caption{Feature importance scores from the structural bias regression for $EO$ metric and $N2V$ model in \textit{Opinion} (left), \textit{Friendship} (middle), and \textit{Collab.} (right) use cases.}
\end{figure}

\subsection{Singular Value Decomposition (\textbf{SVD})}

\begin{figure}[H]
    \centering
    \includegraphics[width=.3\linewidth]{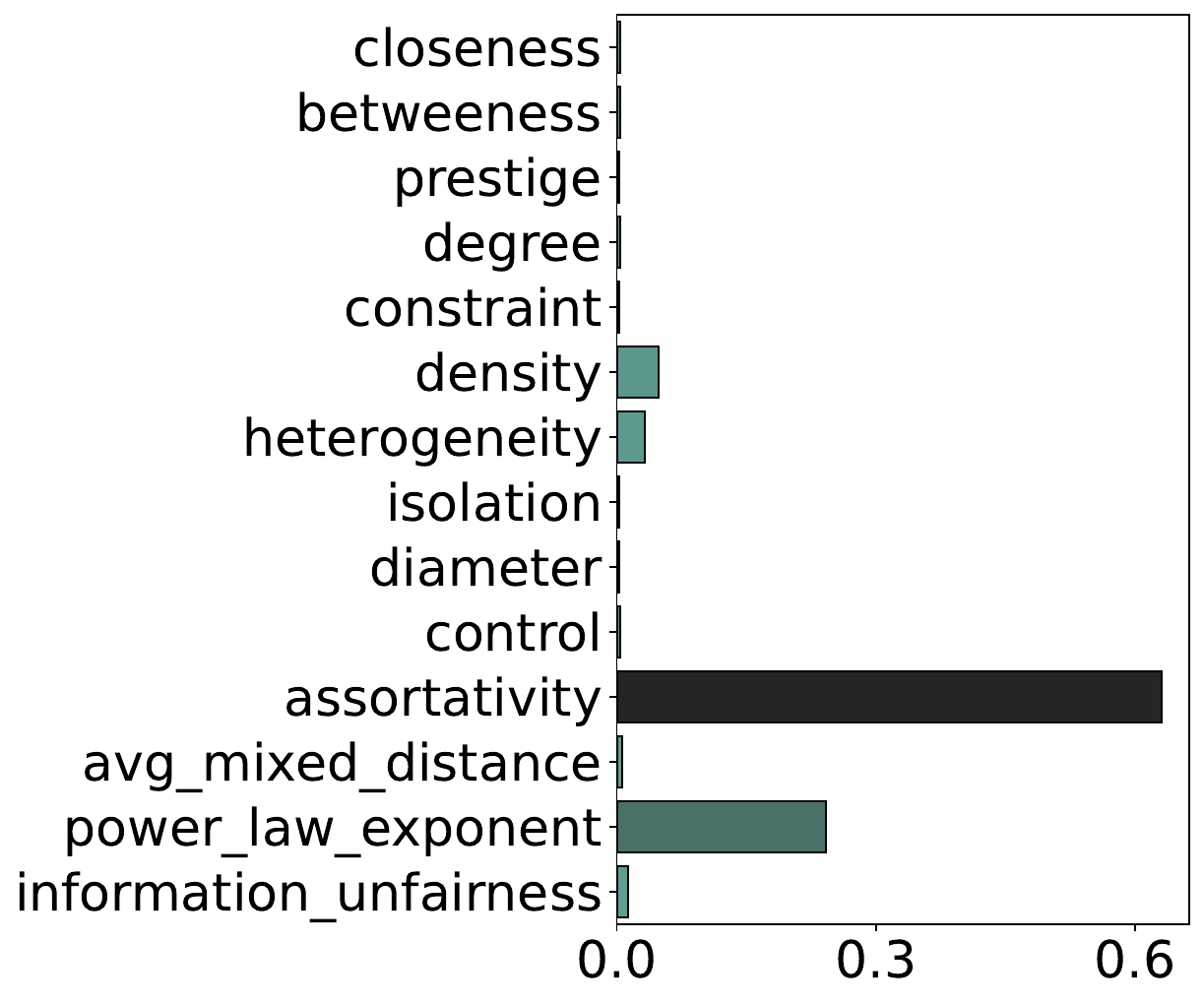}
    \includegraphics[width=.3\linewidth]{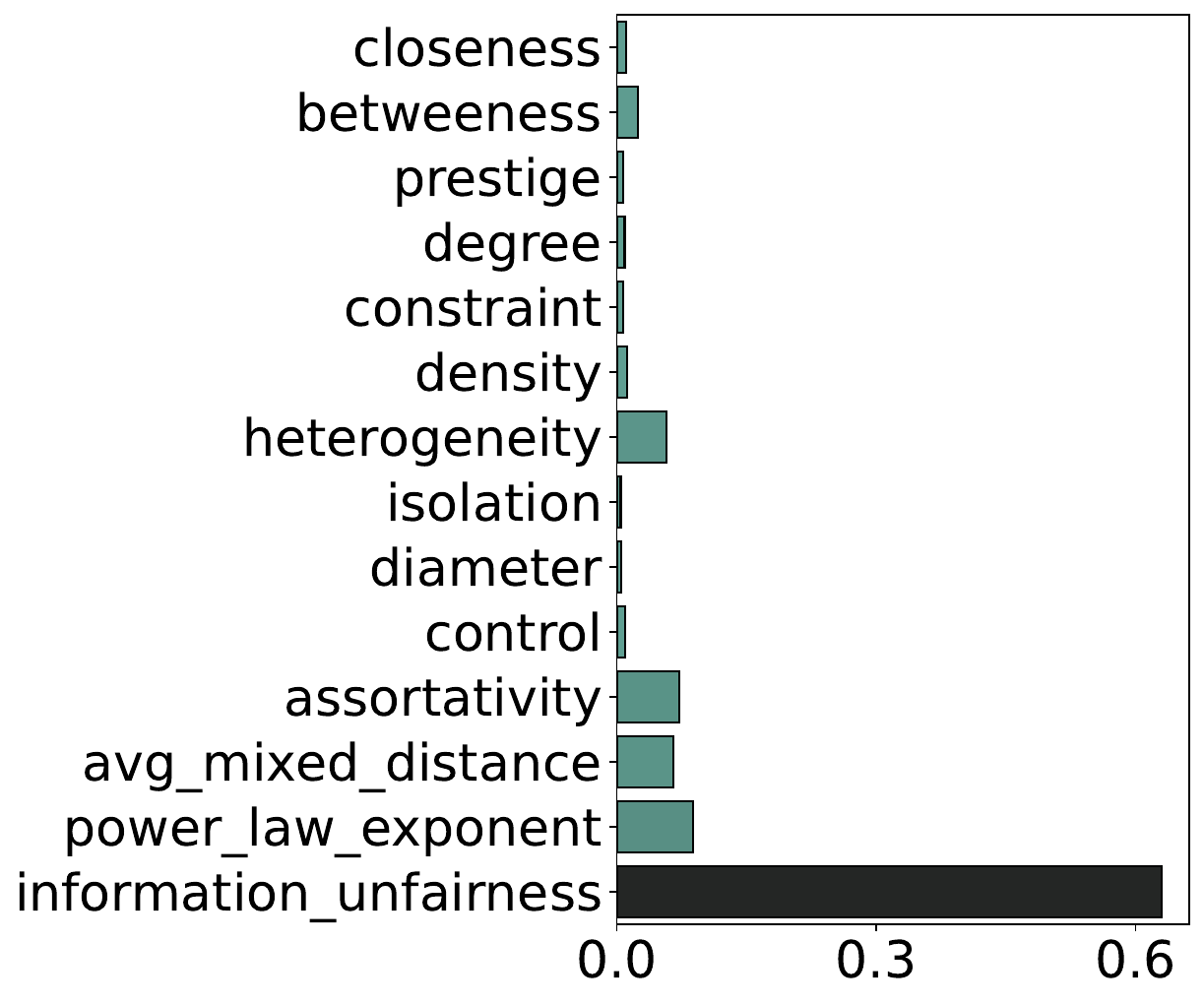}
    \includegraphics[width=.3\linewidth]{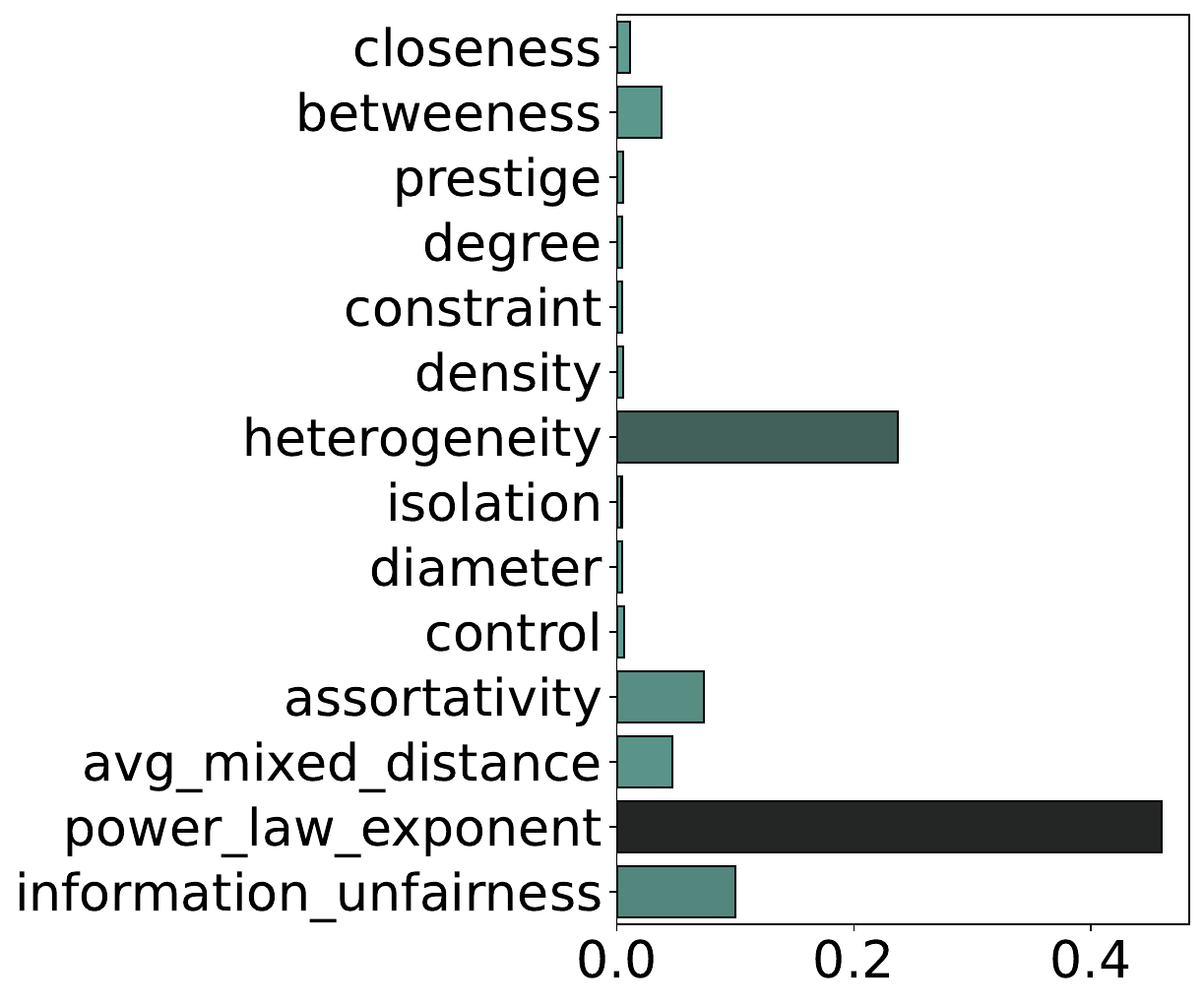}
    \caption{Feature importance scores from the structural bias regression for $SP$ metric and $SVD$ model in \textit{Opinion} (left), \textit{Friendship} (middle), and \textit{Collab.} (right) use cases.}
\end{figure}

\begin{figure}[H]
    \centering
    \includegraphics[width=.3\linewidth]{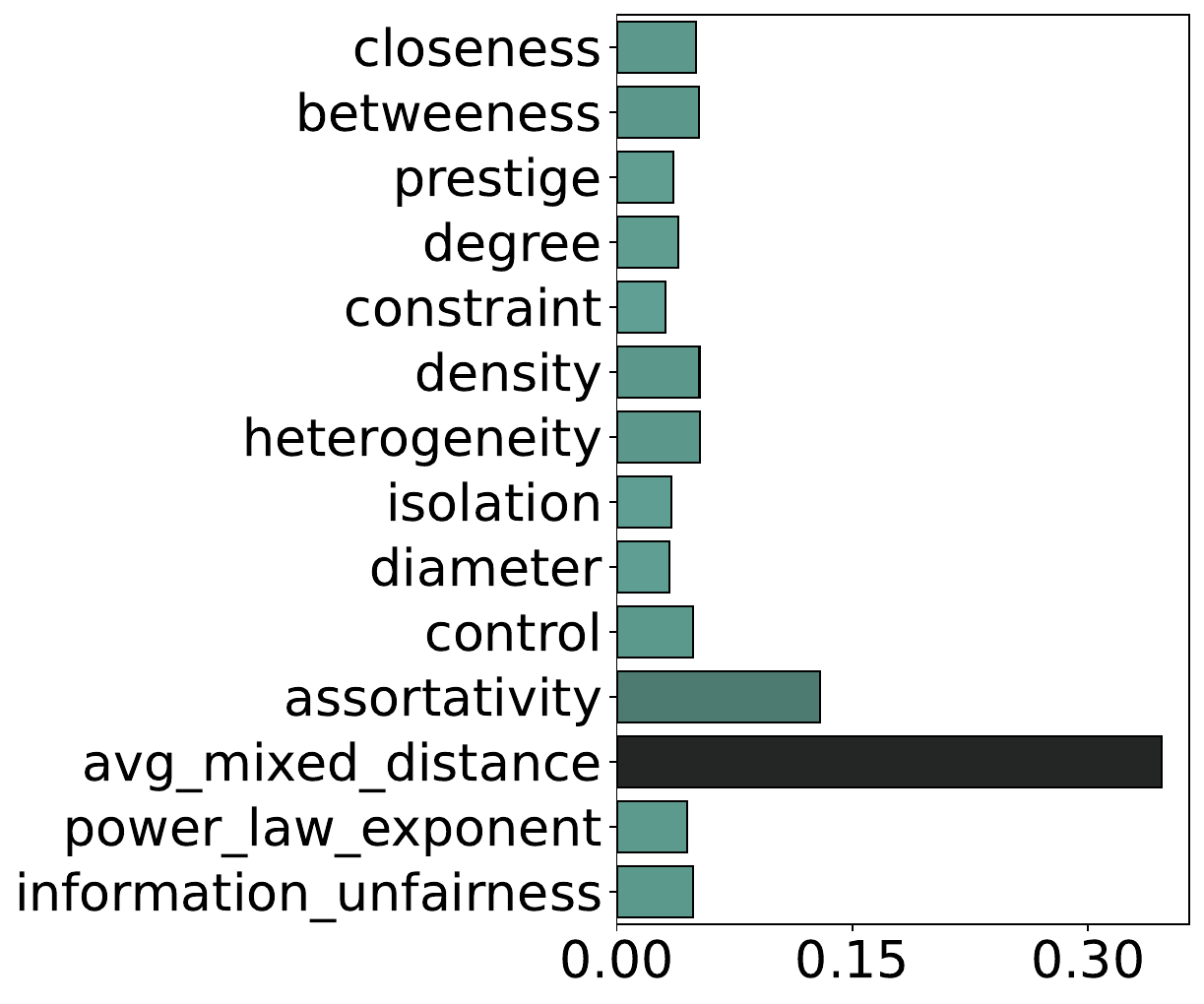}
    \includegraphics[width=.3\linewidth]{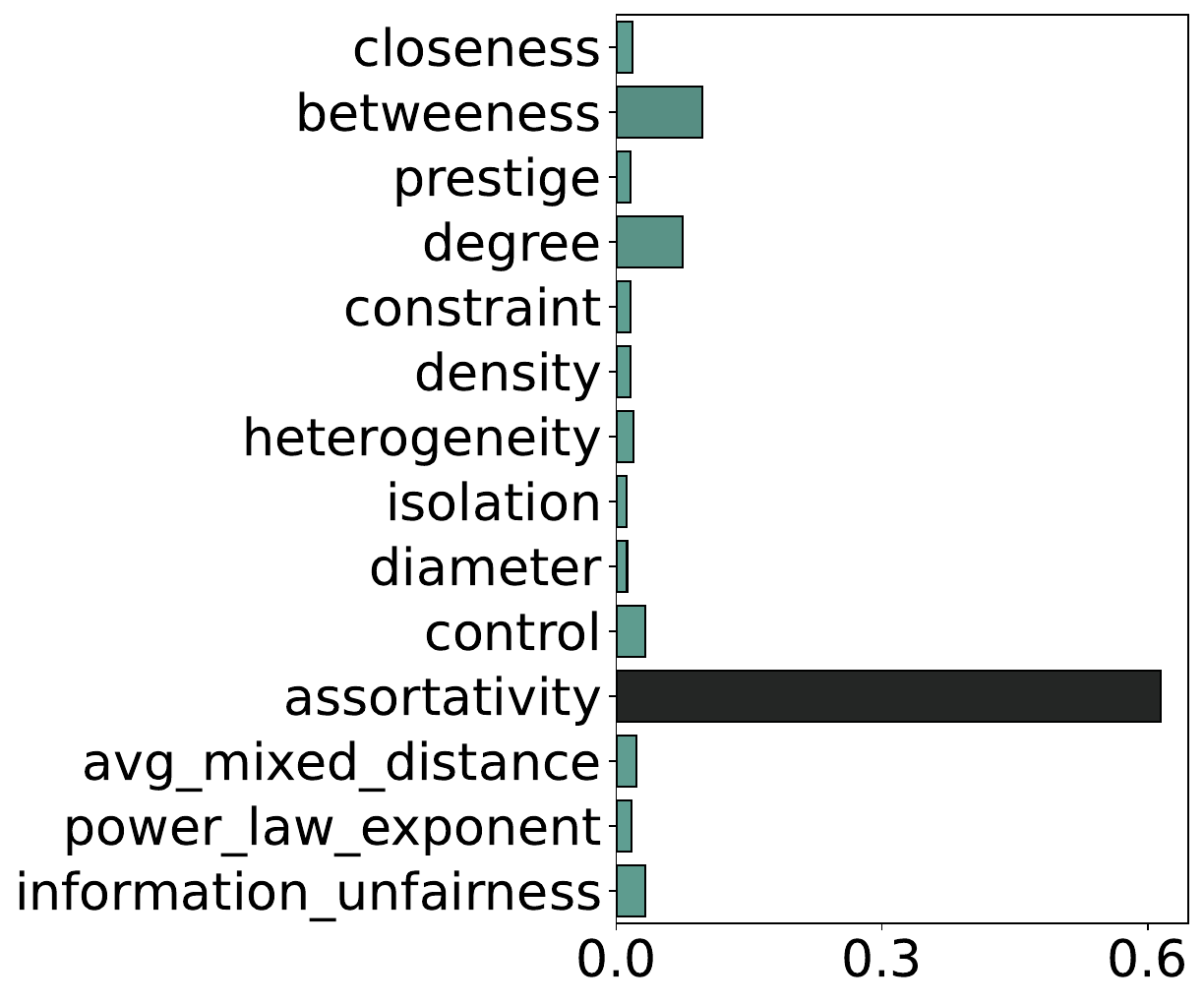}
    \includegraphics[width=.3\linewidth]{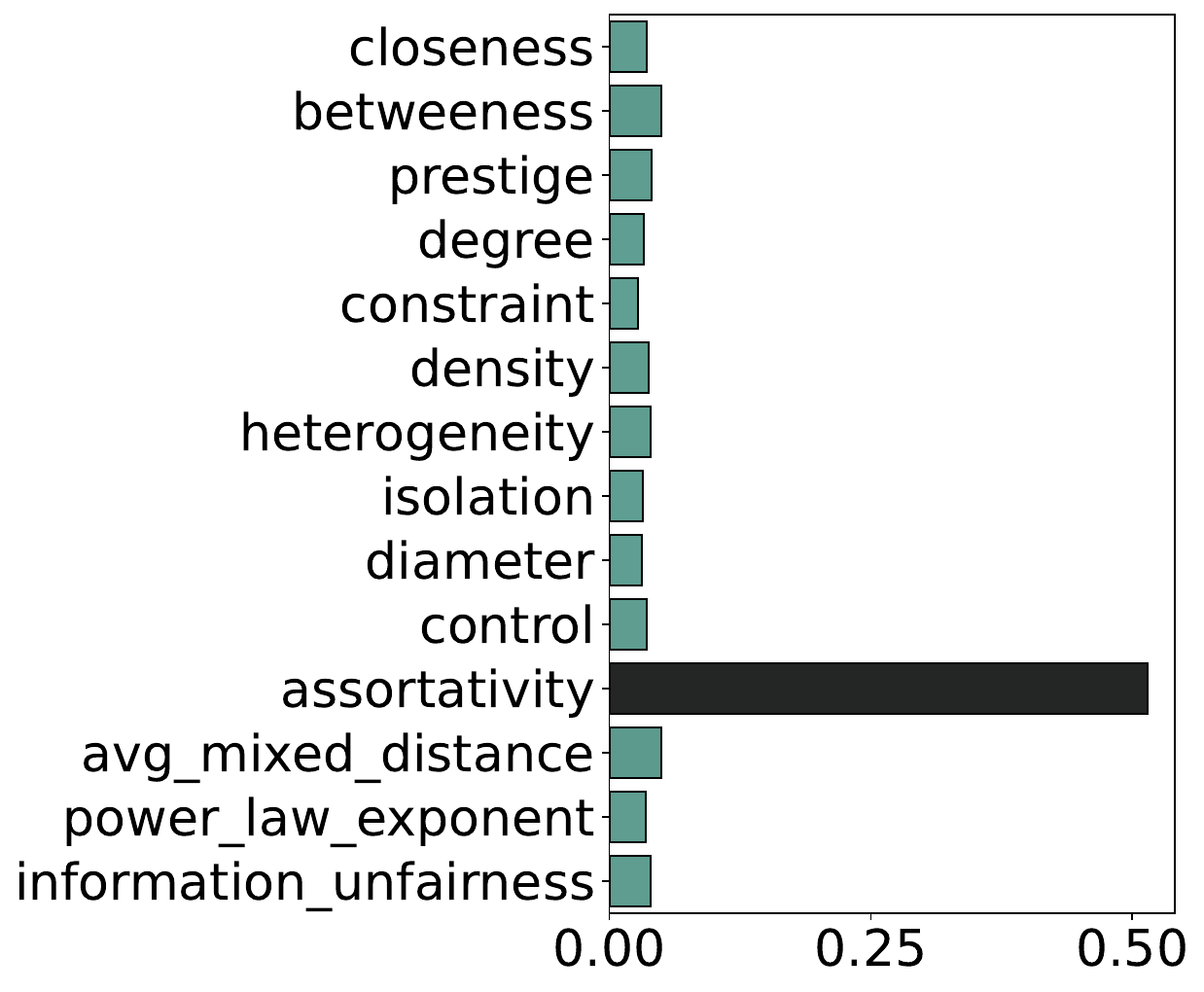}
    \caption{Feature importance scores from the structural bias regression for $EO$ metric and $SVD$ model in \textit{Opinion} (left), \textit{Friendship} (middle), and \textit{Collab.} (right) use cases.}
\end{figure}

\section{Additional results for H2}
\label{app:H2}
\begin{figure}[ht]
    \centering
\includegraphics[width=0.85\textwidth]{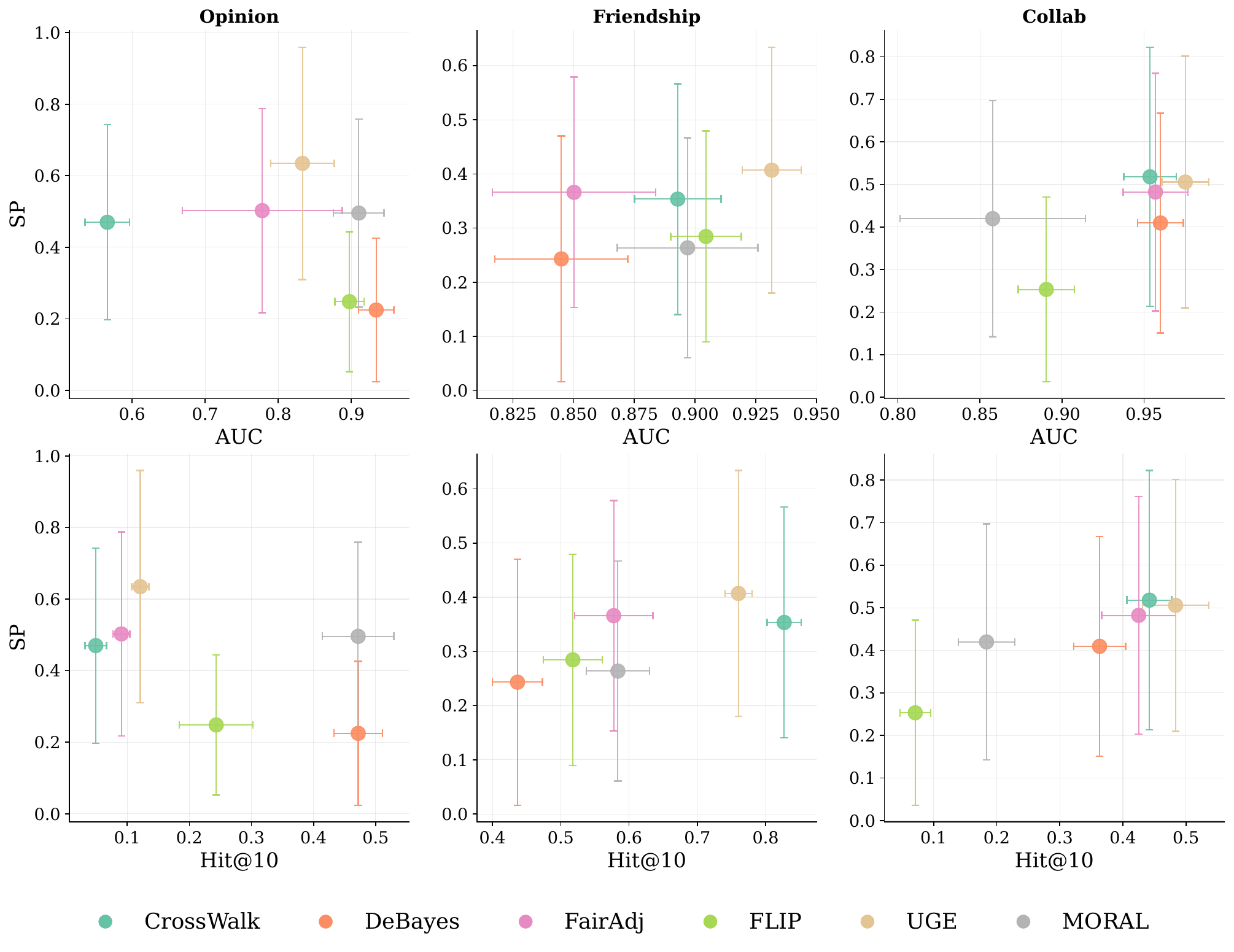}
    \caption{Distribution of fairness and performance outcomes across the synthetic graph corpus, for each fair LP model and use case. Each point represents the mean over all synthetic graphs; error bars show $\pm$ standard deviation.}
    \label{fig:boxplots_app}
\end{figure}

\section{Additional results for H3}
\label{app:H3}

\begin{figure}[htbp]
    \centering
    \includegraphics[width=0.75\textwidth]{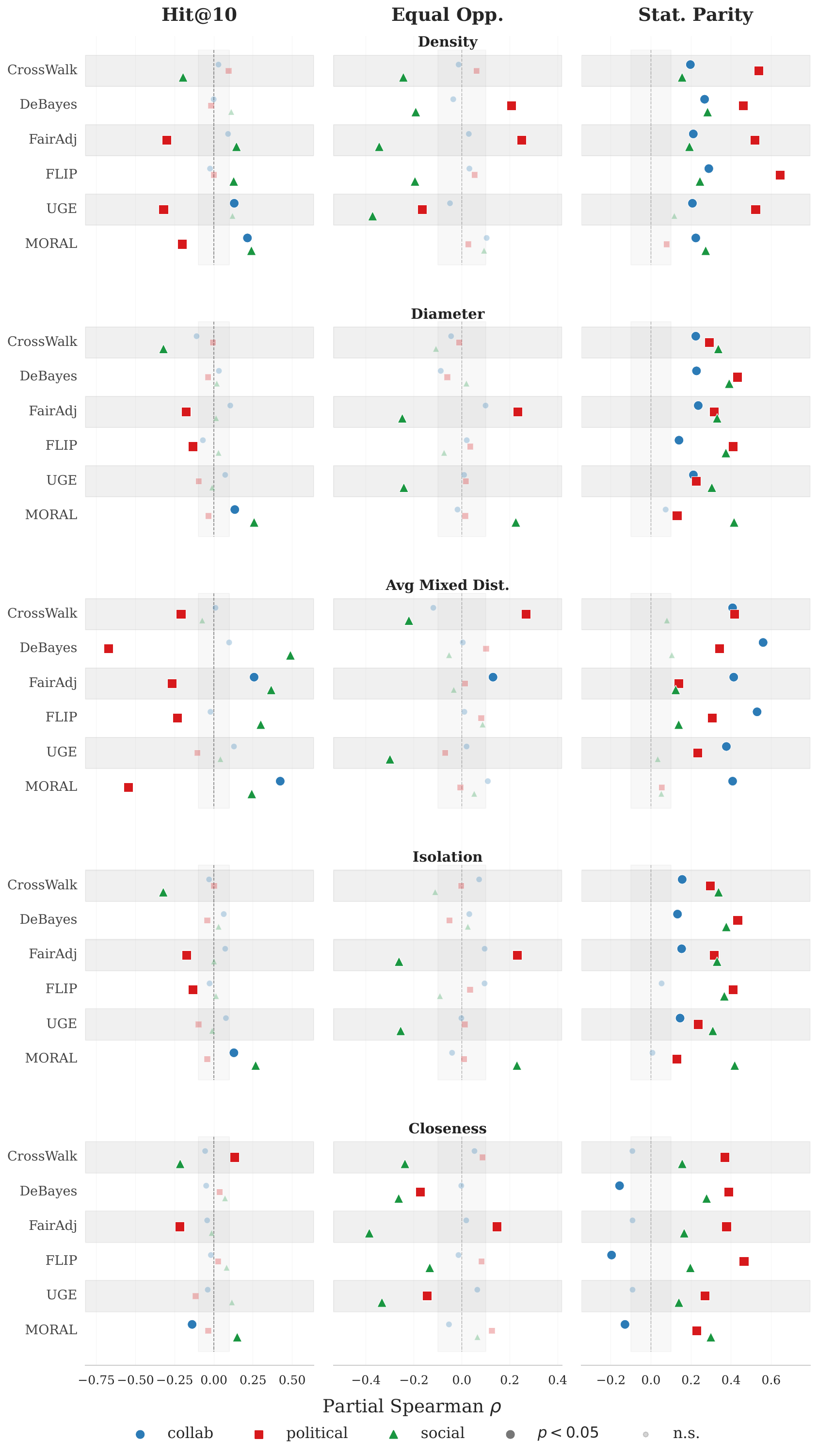}
    \caption{Partial Spearman $\rho$ between the structural biases and fairness/performance metrics across models and use cases. Marker opacity encodes statistical significance ($p<0.05$).}
    \label{fig:spearman_app_1}
\end{figure}

\begin{figure}[htbp]
    \centering
    \includegraphics[width=0.75\textwidth]{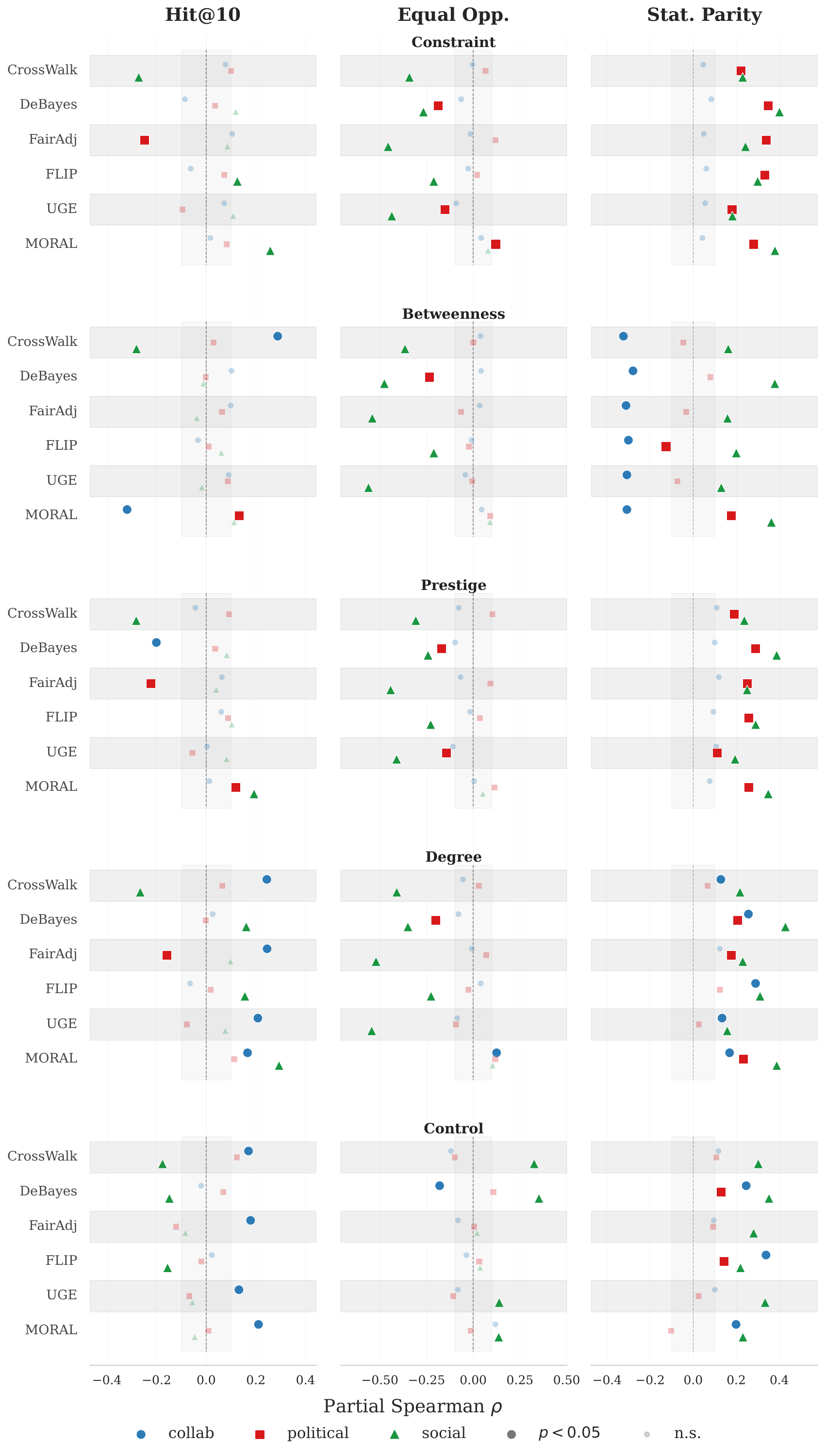}
    \caption{Partial Spearman $\rho$ between the structural biases and fairness/performance metrics across models and use cases. Marker opacity encodes statistical significance ($p<0.05$).}
    \label{fig:spearman_app_2}
\end{figure}

\end{document}

%% file: math_commands.tex
\usepackage{amsmath,amsfonts,bm}

\def\eqref#1{equation~\ref{#1}}
\def\1{\bm{1}}

\DeclareMathAlphabet{\mathsfit}{\encodingdefault}{\sfdefault}{m}{sl}
\SetMathAlphabet{\mathsfit}{bold}{\encodingdefault}{\sfdefault}{bx}{n}